\documentclass[11pt]{article}
\usepackage{lipsum}

\newcommand\blfootnotea[1]{\begingroup
  \renewcommand\thefootnote{}\footnote{#1}\endgroup
}

\usepackage{amsthm,amsmath,amssymb,amsfonts,amssymb,mathtools}

\usepackage{algorithm2e}

\newif\ifmarkup
\markupfalse

\usepackage{thmtools}
\usepackage{thm-restate}
\usepackage{aliascnt}
\newtheorem{theorem}{Theorem}[section]

 \newtheorem{lemma}[theorem]{Lemma}
 \newtheorem{informal theorem}[theorem]{Theorem (informal statement)}
 
 \newtheorem{proposition}[theorem]{Proposition}
 \newtheorem{corollary}[theorem]{Corollary}
 \newtheorem{claim}[theorem]{Claim}
 \newtheorem{fact}[theorem]{Fact}

 \newtheorem{definition}[theorem]{Definition}

\usepackage[english]{babel}

\usepackage[letterpaper,margin=1in]{geometry}
\usepackage[utf8]{inputenc}
\usepackage{amsmath}
\usepackage{graphicx}
\usepackage{xspace}
\usepackage{caption}

\bibliographystyle{plainnat}
\usepackage{natbib}

\newif\ifcomments
\commentstrue

\ifcomments
\newcommand{\jd}[1]{{\color{purple}{\textbf{JD:} #1}}}
\newcommand{\pq}[1]{{\color{brown}{\textbf{PQ:} #1}}}
\newcommand{\nz}[1]{{\color{violet}{\textbf{Zn:} #1}}}
\else
\newcommand{\jd}[1]{}
\newcommand{\pq}[1]{}
\newcommand{\nz}[1]{}
\fi

\usepackage{enumitem}

\newcommand{\subscript}[2]{$#1 _ #2$}
\newcommand{\mcal}{\mathcal}
\newcommand{\barw}{\Bar{\w}}
\newcommand{\barx}{\Bar{\x}}
\newcommand{\evt}{\mathcal{E}}

\usepackage[colorlinks,citecolor=blue,linkcolor=magenta,bookmarks=true]{hyperref}
\usepackage[nameinlink,capitalize]{cleveref}
\crefname{claim}{claim}{claims}
\crefname{fact}{fact}{facts}
\usepackage{mysty}

\title{Information-Computation Tradeoffs for Learning Margin Halfspaces with Random Classification Noise\blfootnotea{Authors are in alphabetical order.}
}

\author{
Ilias Diakonikolas\thanks{Supported by NSF Medium Award CCF-2107079,
		NSF Award CCF-1652862 (CAREER), and
		a DARPA Learning with Less Labels (LwLL) grant.}\\
		UW Madison\\
		{\tt ilias@cs.wisc.edu}\\
\and Jelena Diakonikolas\thanks{Supported by NSF Award CCF-2007757 and by the U.\ S.\ Office of
Naval Research under award number N00014-22-1-2348.}\\
UW Madison\\
{\tt jelena@cs.wisc.edu}\\
\and Daniel M. Kane\thanks{Supported by NSF Medium Award CCF-2107547 and NSF Award CCF-1553288 (CAREER).}\\
UC San Diego\\
{\tt dakane@ucsd.edu}
\and Puqian Wang\thanks{Supported in part by NSF Award CCF-2007757.}\\
UW Madison\\
{\tt pwang333@wisc.edu}\\
\and 
Nikos Zarifis\thanks{Supported in part by NSF award 2023239,  NSF Medium Award CCF-2107079, and a DARPA Learning with Less Labels (LwLL) grant.}\\
UW Madison\\
{\tt zarifis@wisc.edu}\\
}

\begin{document}
\maketitle

\begin{abstract}
 We study the problem of PAC learning $\gamma$-margin halfspaces 
with Random Classification Noise. 
We establish an information-computation tradeoff
suggesting an inherent gap between the sample complexity of the problem 
and the sample complexity of computationally efficient algorithms. 
Concretely, the sample complexity of the problem is $\widetilde{\Theta}(1/(\gamma^2 \eps))$. We start by giving a simple efficient algorithm 
with sample complexity $\widetilde{O}(1/(\gamma^2 \eps^2))$. Our main result
is a lower bound for Statistical Query (SQ) algorithms and low-degree polynomial tests suggesting that the quadratic dependence on $1/\eps$ 
in the sample complexity is inherent for computationally efficient algorithms.
Specifically, our results imply a lower bound of $\widetilde{\Omega}(1/(\gamma^{1/2} \eps^2))$ on the sample complexity of any efficient SQ learner or low-degree test.
\end{abstract}

\setcounter{page}{0}
\thispagestyle{empty}
\newpage

\section{Introduction}

\new{This work studies the efficient learnability of halfspaces with a margin in the presence of random label noise. Before we present our contributions, we provide the necessary background.} 
A halfspace or Linear Threshold Function (LTF) is any 
Boolean-valued function $h: \R^d \to \{ \pm 1\}$ of the form 
$h(\bx) = \sgn \left( \bw \cdot \bx -\theta \right)$, where $\bw \in \R^d$ is the weight vector and $\theta \in \R$ is the threshold. 
The function $\sign: \R \to \{ \pm 1\}$ is defined as $\sgn(t)=1$ if $t \geq 0$ 
and $\sgn(t)=-1$ otherwise.
The problem of learning halfspaces with a margin --- i.e., under the assumption 
that no example lies too close to the separating hyperplane --- is a textbook problem in machine learning, whose history goes back to the Perceptron algorithm of~\citet{Rosenblatt:58}. 
Here we study the problem of PAC learning margin halfspaces in  the presence of Random Classification Noise (RCN)~\citep{AL88}.

Before we describe the noisy setting (the focus of this work), 
we recall the basics in the realizable PAC model~\citep{val84}
(i.e., when the labels are consistent with the target concept). 
We will henceforth assume that the threshold is $\theta = 0$, 
which is well-known to be no loss of generality.
The setup is as follows:
there is an unknown distribution $(\bx, y) \sim \D$ on 
$\mathbb{S}^{d-1} \times \{ \pm 1\}$, 
where $\mathbb{S}^{d-1}$ is the unit sphere on $\R^d$,  
such that $y = \sgn(\wstar \cdot \x)$ for some $\wstar \in \R^d$
with $\|\wstar\|_2 = 1$. The margin assumption means 
that the marginal distribution of $\D$ on the examples $\x$, denoted by $\D_{\x}$, 
puts no probability mass on points with distance less than $\gamma \in (0, 1)$ 
from the separating hyperplane $\wstar \cdot \x = 0$; that is, 
we have that $\pr_{\x \sim \D_{\x}}\left[ |\wstar \cdot \x| < \gamma \right] = 0$. 
The parameter $\gamma$ is called the margin of the target halfspace. 

In this context, the learning algorithm 
is given as input a desired accuracy $\eps>0$
and a training set $S = \{(\bx^{(i)}, y^{(i)}) \}_{i=1}^m$ of i.i.d.\ samples from $\D$. 
The goal is to output a hypothesis $h: \R^d \to \{\pm 1\}$ 
whose misclassification error $\err_{0-1}^{\D}(h) \eqdef \pr_{(\bx, y) \sim \D}[h(\bx) \neq y]$ 
is at most $\eps$ with high probability\footnote{Throughout this introduction, we will take the failure probability to be a small constant, say $1/10$.}. The aforementioned setting is well-understood. First, it is known that
the sample complexity of the learning problem, ignoring 
computational considerations, is $\Theta(1/(\gamma^2\eps))$; see, e.g.,~\citet{SB14}\footnote{We are implicitly assuming 
that $d = \Omega (1/\gamma^2$); otherwise, a sample complexity 
bound of $\widetilde{O}(d/\eps)$ follows from standard VC-dimension arguments.}. Moreover, the Perceptron algorithm is a computationally efficient PAC learner achieving this sample complexity. (This follows, e.g., by combining the mistake bound of $O(1/\gamma^2)$ of the online Perceptron with the online-to-PAC conversion in~\citet{Littlestone:89b}.) 
That is, {\em in the realizable setting, there exists a computationally efficient learner for margin halfspaces 
achieving the 
optimal sample complexity} (within constant factors). 

The high-level question that serves as the motivation for this work is the following: 
\begin{center}
{\em Can we develop ``similarly efficient'' algorithms in the presence of label noise,
and specifically in the (most) basic model of Random Classification Noise?}
\end{center}
By the term ``similarly efficient'' above, we mean that we would like
a polynomial-time algorithm with near-optimal sample complexity (up to logarithmic factors). 

\new{This problem appears innocuous and} our initial efforts focused towards obtaining such an algorithm. 
After several failed attempts, we established  
an information-computation tradeoff strongly suggesting 
that such an algorithm does not exist. 
We next describe our setting in more detail.

\vspace{-0.3cm}

\paragraph{Learning Margin Halfspaces with RCN}
The RCN model~\citep{AL88} is  
the most basic model of random label noise.
In this model, the label of each example is independently 
flipped with probability exactly $\eta$,
where $0< \eta<1/2$ is a noise parameter. 
Since its introduction, RCN has been studied extensively 
in learning theory from both an information-theoretic 
and an algorithmic standpoint. 
One of the early fundamental results in this field was given by~\citet{Kearns:98}, 
who showed that any Statistical Query (SQ) algorithm can be transformed 
into a PAC learning algorithm that is tolerant to RCN. This 
transformation preserves statistical and computational efficiency  
{\em within polynomial factors}.

We return to our problem of PAC learning margin halfspaces with RCN. 
The setup is very similar to the one above. 
The only difference is that the labels are now perturbed 
by RCN with noise rate $\eta$ (see \Cref{assmpt:margin+RCN}). 
As a result, the 
optimal misclassification error is equal to $\eta$, and the goal is to find a hypothesis that with 
high probability satisfies $\err_{0-1}^{\D}(h) \leq 
\eta+\eps$. A closely related objective would be to approximate 
the target halfspace, i.e., the function $\sgn(\wstar \cdot \x)$, 
within any desired accuracy $\eps'>0$. It is well-known (and easy to 
derive) that the two goals are essentially equivalent, up to rescaling the parameter $\eps$ by a 
factor of $(1-2\eta)$. In this paper, we phrase our results for the misclassification error with respect to the observed labels.

In this context, the sample complexity of PAC learning margin halfspaces with RCN 
is equal to $\widetilde{\Theta}(1/((1-2\eta)\gamma^2 \eps))$. This bound can be derived, e.g., 
from the work of~\cite{Massart2006}. (That is, the sample complexity of the RCN 
learning problem is essentially the same as in the realizable case ---  assuming $\eta$ is bounded from $1/2$ --- 
within logarithmic factors.)
On the algorithmic side, a number of works, 
starting with~\citet{Bylander94}, developed polynomial sample and time algorithms 
for this learning task. Specifically, \citet{Bylander94} developed a careful adaptation 
of the Perceptron algorithm for this purpose.  Subsequently,~\citet{BFK+:97} pointed out 
that an SQ version of the Perceptron algorithm coupled with Kearns' reduction 
immediately implies the existence of an efficient RCN learner 
(see also~\citet{Cohen:97} for a closely related work). 
More recently, in a related context,~\cite{DGT19} pointed out that a simple convex surrogate loss can be used for this purpose \new{(see also~\citet{DKTZ20} for a related setting)}. 

The preceding paragraph might suggest that the RCN version of the problem is 
fully resolved. The catch is that {\em all known algorithms for the problem 
require sample complexities that are polynomially worse 
than the information-theoretic minimum}.
Specifically, for all known polynomial-time algorithms, 
the dependence of the sample complexity on 
the inverse of the accuracy parameter $\eps$ is {\em at least quadratic}
--- while the information-theoretic minimum scales near-linearly with $1/\eps$. 
It is thus natural to ask 
whether a computationally efficient algorithm 
with (near-)optimal sample complexity exists. 
This leads us to the following question:
\begin{center}
{\em Is the existing gap between the sample complexity of known efficient algorithms\\
and the information-theoretic sample complexity inherent?}
\end{center}
In this paper, we resolve the above question in the affirmative  
for a broad class of algorithms --- specifically, for all 
Statistical Query algorithms and low-degree polynomial tests.

\subsection{Our Results} \label{sec:results}

The following definition summarizes our setting.

\begin{definition}[PAC Learning Margin Halfspaces with RCN]\label{assmpt:margin+RCN}
Let $\D$ be a distribution over $\mathbb{S}^{d-1}\times\{\pm 1\}$, where $\mathbb{S}^{d-1}$ is the unit sphere in $\R^d,$ and let $\wstar\in\mathbb{S}^{d-1}$. Let $\gamma\in(0, 1)$ and $\eta \in (0, \frac{1}{2})$. 
For each sample $(\x,y)\sim \D$, the following assumptions both hold: \begin{enumerate}[label=(\subscript{A}{{\arabic*}})]
    \item \label{assmpt:margin} The unit vector $\wstar$ satisfies the $\gamma$-margin condition, i.e.,  $\pr_{(\x,y) \sim \D} \left[ |\wstar \cdot \x| < \gamma \right] = 0$.
\item \label{assmpt:RCN} For each point $\x \in \mathbb{S}^{d-1}$, the corresponding label $y$ 
    satisfies: with probability $1-\eta$,  $y=\sgn(\wstar \cdot\x)$; 
    otherwise, $y=-\sgn(\wstar \cdot\x)$.

\end{enumerate}
Given i.i.d.\ samples from $\D$, the goal of the learner is to output a hypothesis $h$ 
that with high probability satisfies 
$\err_{0-1}^{\D}(h) \eqdef \pr_{(\bx, y) \sim \D}[h(\bx) \neq y] \leq \eta+\eps$.
\end{definition}

\noindent While our definition applies to homogeneous halfspaces, 
the case of general halfspaces, \new{i.e., functions of the form $y=\sign(\wstar \cdot \x -t)$,} 
can easily be reduced to the homogeneous case 
by increasing the dimension by one, i.e., 
writing $y=\sgn((\wstar,-t)\cdot (\x,1))$. 
By rescaling these vectors and letting 
$\vec w':=(\wstar,t)/\sqrt{1+t^2}$, and $\x':=(\x,1)/\sqrt{2},$ 
we have a homogeneous halfspace $y=\sgn(\vec w'\cdot \x')$ 
in $d+1$ dimensions with margin $\gamma/\sqrt{2(1+t^2)} \geq \gamma/2$ 
equivalent to our original problem. 
By the homogeneity assumption, it follows that the assumption that 
the examples lie on the unit sphere is also generic (up to scaling). 
We also recall that it can be assumed 
without loss of generality
that the noise rate $\eta$ is known to the algorithm~\citep{AL88}.

To the best of our knowledge, prior to our work, 
the best known sample-complexity upper 
bound of an efficient algorithm for our problem was
$\widetilde{O}(1/(\gamma^4 \eps^2))$~\citep{DGT19}.
Our first result is a computationally efficient learner
with sample complexity  $\widetilde{O}(1/(\gamma^2 \eps^2))$. 

\begin{theorem}[Algorithmic Result] \label{thm:alg-inf}
There exists an algorithm that draws $N = \widetilde{O}(1/(\gamma^2 \eps^2))$ samples, runs in time 
$\poly(N, d)$ and learns $\gamma$-margin halfspaces up to misclassification error $\eta+\eps$ with probability at least $9/10$.
\end{theorem}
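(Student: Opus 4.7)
My plan is to prove \Cref{thm:alg-inf} by empirical risk minimization on a convex surrogate loss, combined with uniform convergence and a surrogate-to-$0/1$ comparison theorem tailored to RCN with margin. Concretely, I would pick a convex, monotonically decreasing, $O(1/\gamma)$-Lipschitz loss $\phi:\R\to\R$ at scale $\gamma$ --- natural candidates being the margin hinge $\phi(z) = (1 - z/\gamma)_+$ or the scaled logistic $\phi(z) = \log(1 + e^{-z/\gamma})$ --- and minimize $\hat L(w) = N^{-1}\sum_{i=1}^{N} \phi(y_i\, w \cdot \x^{(i)})$ over $\{w:\|w\|_2 \leq 1\}$. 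Since $\phi$ is convex and $(1/\gamma)$-Lipschitz and $\|\x\|_2 = 1$, projected (stochastic) subgradient descent with step size $\Theta(\gamma/\sqrt{T})$ returns an $\eps$-approximate empirical minimizer in $T = \poly(1/(\gamma\eps))$ iterations, each of cost $O(Nd)$, well within the required $\poly(N, d)$ time budget.

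\textbf{Generalization.} For the sample complexity, the composite class $\{(\x, y) \mapsto \phi(y\, w \cdot \x) : \|w\|_2 \leq 1\}$ has Rademacher complexity $O(1/(\gamma\sqrt{N}))$ by Talagrand's contraction lemma (from the $(1/\gamma)$-Lipschitzness of $\phi$) together with the $O(1/\sqrt{N})$ Rademacher complexity of linear predictors on the unit ball. Standard uniform convergence then gives $\sup_{\|w\|_2 \leq 1} |\hat L(w) - L(w)| \leq \eps/4$ with probability at least $19/20$ as soon as $N = \widetilde O(1/(\gamma^2\eps^2))$, where $L(w) = \mathbb{E}_{(\x, y) \sim \D}[\phi(y\, w \cdot \x)]$. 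Comparing to the fixed competitor $\wstar \in \{w:\|w\|_2\leq 1\}$, I then get $L(\hat w) - L(\wstar) \leq \eps/2$ with high probability.

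\textbf{Surrogate-to-$0/1$ comparison (the hard step).} The main obstacle is showing that $L(w) - L(\wstar) \leq \eps/2$ forces $\err_{0-1}^{\D}(w) \leq \eta + \eps$. Equivalently, I want a bound of the form $L(w) - L(\wstar) \geq c\,(1-2\eta)\,\pr_{\x \sim \D_{\x}}[\sgn(w\cdot\x) \neq \sgn(\wstar\cdot\x)]$ for a universal constant $c > 0$. Decomposing the expectation over the agreement/disagreement regions with respect to $\wstar$, the $\gamma$-margin ensures that on each disagreement point the clean-label contribution to $\phi$ exceeds its value at $\wstar$ by $\Omega(1)$, while the noisy-label contribution is at most $O(1)$ smaller, leaving a net $\Omega(1-2\eta)$ per unit of disagreement mass. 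The delicate part is extracting a clean, $\gamma$-independent constant $c$: with the plain hinge, $\phi(-|\wstar \cdot \x|)$ can be as large as $1/\gamma$, so the noisy-label loss at $\wstar$ on agreement points can a priori swamp the disagreement gain. I expect that a suitably clipped or saturated convex variant of $\phi$, or a light $L_2$ regularizer ensuring the comparison uses a rescaled version of $\wstar$ whose surrogate loss stays $O(1)$, resolves this. Combining the comparison with the generalization bound yields $\pr[\sgn(\hat w \cdot \x) \neq \sgn(\wstar \cdot \x)] \leq O(\eps/(1-2\eta))$, and therefore $\err_{0-1}^{\D}(\hat w) \leq \eta + \eps$ with probability at least $9/10$, as claimed.
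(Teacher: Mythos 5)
Your proposal shares the algorithm's skeleton with the paper (projected subgradient descent on a convex surrogate, with the leaky ReLU the natural choice, and a generalization bound for the surrogate class), but the step you flag as ``the hard step'' --- the surrogate-to-$0/1$ comparison --- is in fact a genuine gap that your outlined fixes do not close, and the paper circumvents it in a structurally different way.

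The difficulty is sharper than you suggest. With the $\gamma$-scaled hinge, the loss gap on the \emph{disagreement} region is not positive per point: for a disagreement point $\x$, the weighted contribution to $L(w)-L(\wstar)$ works out to roughly $(1-2\eta) + (1-\eta)|w\cdot\x|/\gamma - \eta|\wstar\cdot\x|/\gamma$, which is negative whenever $|\wstar\cdot\x|/\gamma$ is large (e.g.\ $|\wstar\cdot\x| = \Theta(1)$) and $|w\cdot\x| = O(\gamma)$. Contrary to your claim, the noisy-label contribution at $\wstar$ can be $\Theta(1/\gamma)$ larger than at $w$, not $O(1)$. The agreement-region contribution is even worse behaved. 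Your suggested fixes do not help: a clipped or saturated convex variant of $\phi$ ceases to be convex (min of a convex function and a constant is not convex), so the efficient ERM step breaks; and rescaling the competitor to $\gamma\wstar$ with an $O(1)$-Lipschitz unscaled hinge again leaves an agreement-term that can be negative and of the same order as the positive disagreement term, so no clean $(1-2\eta)\gamma\pr[\mathrm{disagree}]$ lower bound follows. Even the leaky ReLU surrogate itself fails for this route: a short computation shows $L(w) - L(\wstar) = (1-2\eta)\E[\1\{\sgn(w\cdot\x)\ne\sgn(\wstar\cdot\x)\}\,|w\cdot\x|]$ exactly, which can be near zero even when disagreement is large, because $|w\cdot\x|$ carries no margin guarantee --- the loss-value gap loses the $|\wstar\cdot\x|\ge\gamma$ information.

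The paper sidesteps the issue by never comparing loss values. Its key identity, \Cref{prop:grad-to-misclassification}, shows that the \emph{inner product of subgradient differences with the parameter difference} satisfies
$(\g_{\eta}(\w) - \g_{\eta}(\wstar))\cdot( \w - \wstar) = (1 - 2\eta)\, \1\{\sgn(\w\cdot \x) \neq \sgn(\wstar \cdot \x)\}\,(|\w\cdot \x| + |\wstar \cdot \x|)$,
which, crucially, contains $|\wstar\cdot\x|\ge\gamma$ and is therefore lower bounded by $(1-2\eta)\gamma\,\1\{\mathrm{disagree}\}$ \emph{pointwise}. The subgradient-descent regret bound then directly controls the time-average of this quantity, giving a ``disagreement regret'' bound (\Cref{lemma:main-regret-bnd}) without ever invoking a surrogate-to-$0/1$ inequality. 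The sample complexity comes from two separate concentration arguments: a vector-Bernstein bound on $\|\hgn(\wstar)\|_2$ (a zero-mean term), and a VC/uniform-convergence bound for the Boolean class $\1\{\sgn(\w\cdot\x)\ne\sgn(\wstar\cdot\x)\}$, with dimension later removed via Johnson--Lindenstrauss. Your Rademacher-complexity step for the surrogate class is cleaner and dimension-free, but it cannot substitute for this decomposition because there is no valid margin-aware comparison inequality to plug into it.
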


See \Cref{thm:main-algo} for a more detailed formal statement. 
While the above sample bound does not improve on
the $\eps$-dependence over prior algorithmic results, 
it does improve the dependence on the margin parameter $\gamma$ 
quadratically  --- nearly matching the information-theoretic lower bound 
(within logarithmic factors). 
\new{An independent and contemporaneous work by \citet{KIT23} obtained a similar $O_{\eta}(1/(\gamma^2 \epsilon^2))$ sample complexity result for learning $\gamma$-margin halfspaces with RCN in polynomial time, using a different algorithm and techniques.}

Our second and main result is an information-computation
tradeoff suggesting that the quadratic dependence in $1/\eps$ 
is inherent for polynomial-time algorithms. Formally, we establish 
such a \new{tradeoff} in the Statistical Query model 
and (via a known reduction) for low-degree polynomial tests.

\paragraph{Statistical Query (SQ) Model} 
Before we state our main result, we recall the basics of the SQ model~\citep{Kearns:98}. 
Instead of drawing samples from the input distribution, 
SQ algorithms  are given query access to the distribution 
via the following oracle:

\begin{definition}[STAT Oracle] \label{def:stat}
Let $\D$ be a distribution on $\R^d$. 
A statistical query is a bounded function $f: \R^d \to [-1,1]$. 
For tolerance $\tau>0$ of the statistical query, the $\mathrm{STAT}(\tau)$ oracle responds to the query $f$ 
with a value $v$ such that $|v - \E_{\bx \sim D}[f(\bx)]| \leq \tau$. 
\end{definition}

\noindent We note that other oracles have been considered in the literature, in particular $\mathrm{VSTAT}$ (\Cref{def:vstat}); our lower bound also holds with respect to these oracles.

An SQ lower bound for a learning problem $\Pi$ is typically of the following form: 
any SQ algorithm for $\Pi$ must either make at least $q$ queries
or it makes at least one query with small tolerance $\tau$. 
When simulating a statistical query in the standard PAC model 
(by averaging i.i.d.\ samples to approximate expectations), 
the number of samples needed for a $\tau$-accurate query 
can be as high as $\Omega(1/\tau^2)$. Thus, we can intuitively interpret 
an SQ lower bound as a tradeoff between runtime of $\Omega(q)$ 
and a sample complexity of $\Omega(1/\tau^2)$.

We are now ready to state our SQ lower bound:

\begin{theorem}[SQ Lower Bound] \label{thm:sq-inf}
For any constant $c>0$, any SQ algorithm 
that learns $\gamma$-margin halfspaces on the unit sphere 
in the presence of RCN with $\eta = 1/3$ to error $\eta+\eps$ 
requires at least $2^{(1/\gamma)^{\Omega(c)}}$ queries
or makes at least one query with tolerance $O(\eps \gamma^{1/4-c})$.
\end{theorem}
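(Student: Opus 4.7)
My plan is to establish this lower bound via the Statistical Query dimension framework applied to a family of hard instances indexed by the hidden halfspace direction. I would construct, for a suitable collection $S \subset \mathbb{S}^{d-1}$ of nearly orthogonal unit vectors, a family $\{\D_v\}_{v \in S}$ of joint distributions on $\mathbb{S}^{d-1} \times \{\pm 1\}$ satisfying \Cref{assmpt:margin+RCN} with hidden direction $v$ and noise rate $\eta = 1/3$, together with a symmetric reference distribution $\D_0$ (uniform $\x$, independent uniform label $y$). The goal is to show that $\chi^2_{\D_0}(\D_u,\D_v)$ is very small for $u \neq v$ in $S$; since any hypothesis achieving misclassification error $\eta+\eps$ on $\D_v$ must align substantially with $v$, a standard hypothesis-testing reduction combined with the SQ dimension machinery of Feldman et al.\ then converts these correlation bounds into the claimed lower bounds on queries and tolerance.

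The heart of the proof is a one-dimensional planted construction. I would design a distribution $A$ on $[-1,1]$ supported on $\{t : |t| \geq \gamma\}$ together with the RCN label channel (i.e.\ $y=\sgn(t)$ with probability $2/3$ and $y=-\sgn(t)$ otherwise) so that: (i) the resulting joint distribution on $(t,y)$ is realizable as the 1D marginal of some valid $\gamma$-margin RCN distribution in $d$ dimensions, and (ii) its first $k$ moments match those of the projection of $\D_0$ onto any fixed direction, for an integer parameter $k$ to be chosen. The full $\D_v$ is then specified by requiring $\x \cdot v \sim A$ and taking the orthogonal component of $\x$ uniform on $v^\perp \cap \mathbb{S}^{d-1}$. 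Because the $v^\perp$ marginal agrees exactly with $\D_0$, the chi-squared analysis reduces essentially to a one-dimensional computation involving $A$ and the label channel.

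Once the moment-matching property is in place, a standard orthogonal-polynomial expansion (e.g.\ in Gegenbauer polynomials adapted to the spherical marginal) yields a pairwise bound of the form
\begin{equation*}
\chi^2_{\D_0}(\D_u, \D_v) \;\lesssim\; \sum_{j \geq k+1} c_j^2\, \langle u,v\rangle^j \;\leq\; C^k\, |\langle u,v\rangle|^{k+1},
\end{equation*}
where the $c_j$ are the expansion coefficients of $A$ (together with the label channel) and $C$ is a constant depending on the bounded support. A classical packing argument provides a set $S$ of $2^{d^{\Omega(1)}}$ unit vectors with pairwise inner products bounded by $d^{-1/2+o(1)}$, rendering the chi-squared exponentially small and feeding into the SQ dimension lower bound.

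The final step is parameter tuning. The relationship between angular agreement $\langle v, v'\rangle$ and misclassification error on the constructed family determines how finely the $\D_v$'s must be separated to satisfy the $\eta+\eps$ error target, which sets the effective SQ dimension; balancing this requirement against the degree $k$ should produce the stated tolerance $O(\eps \gamma^{1/4-c})$ and query count $2^{(1/\gamma)^{\Omega(c)}}$. The main obstacle I anticipate is the explicit construction of $A$ satisfying all three desiderata (margin-respecting support, RCN label structure, and degree-$k$ moment matching) simultaneously. This requires quantitative control over polynomial approximation on $[-1,-\gamma] \cup [\gamma,1]$, and the characteristic exponent $1/4$ in the tolerance should emerge from this approximation accounting. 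The specific choice $\eta = 1/3$ likely provides just enough label smoothing for the moment-matching to be feasible without entirely washing out the planted signal.
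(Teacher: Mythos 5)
Your proposal takes the hidden-direction route of Diakonikolas--Kane--Stewart: plant a one-dimensional distribution $A$ along the hidden direction $\vec v$, moment-match $A$ to the null projection, and expand in Gegenbauer polynomials. The paper explicitly considered and departed from this approach, and there are two genuine gaps in your plan.

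The more serious one concerns the $\eps$-dependence of the tolerance, which is the whole point of the theorem. Your planted halfspace is homogeneous (labels a noisy $\sgn$ of a symmetric planted coordinate), which forces the likelihood ratio $\bar{D}_{\vec v}-1$ to contain a term $(1-2\eta)\,y\,\sgn(\vec v\cdot\x)$ of constant size. Consequently $\chi^2(D_{\vec v},D_0)=\Theta(1)$, and the pairwise correlation $\chi_{D_0}(D_{\vec u},D_{\vec v})$ contains the unavoidable summand $(1-2\eta)^2\,\E\!\left[\sgn(\vec u\cdot\x)\sgn(\vec v\cdot\x)\right]\approx(1-2\eta)^2\,\tfrac{2}{\pi}\,\vec u\cdot\vec v$, which moment-matching of the $\x$-marginal cannot suppress. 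This gives a tolerance of $O(\gamma^{1/4-c/2})$ with \emph{no} factor of $\eps$. The paper obtains the $\eps$ by using a biased, non-homogeneous threshold $f_{\vec v}(\x)=\1\{\vec v\cdot\x\ge t\}$ with $t$ in the tail so that $\E[f_{\vec v}]=\eps$; then $\chi^2(D_{\vec v},D_0)\lesssim(1-2\eta)\eps$ and the covariance $\E[f_{\vec v}f_{\vec u}]-\E[f_{\vec v}]\E[f_{\vec u}]$ is of order $\eps^2\,|\vec u\cdot\vec v|/d$ up to log factors (\Cref{lem:a-b-correlation}, \Cref{lemma:correlation-bound}). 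Without this bias, the $O(\eps\gamma^{1/4-c})$ you claim cannot emerge from your parameter tuning.

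The second gap is the tension between the margin constraint and moment matching that you flag as your main obstacle; the paper sidesteps it entirely. On the sphere, $\vec v\cdot\x$ has a continuous density, so a genuine $\gamma$-margin forces you to carve a gap of width $2\gamma$ out of $A$; this already perturbs the low-order moments, and controlling the resulting error requires the unspecified polynomial-approximation argument you allude to. The paper instead works on $\{\pm 1\}^d$ with $\vec v\in\{\pm1\}^d$: there $\vec v\cdot\x$ is an integer of fixed parity, so the margin $\gamma=\Theta(1/d)$ is automatic, the $\x$-marginal of every hard distribution is \emph{exactly} uniform, and the hardness lives purely in the labels. The Fourier/Kravchuk-polynomial computation on the hypercube then replaces your Gegenbauer computation while avoiding the margin/moment tension. (Finally, the choice $\eta=1/3$ is simply a convenient constant for the testing-to-learning reduction --- the constant hypothesis achieves error $\eta+2(1-2\eta)\eps$ on the null --- and has nothing to do with moment-matching feasibility, contrary to your speculation.)
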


The reader is referred to \Cref{thm:sq-theorem} 
for a more detailed formal statement. 
The intuitive interpretation of our result is that any 
(sample simulation of an) SQ algorithm for the class of 
$\gamma$-margin halfspaces \new{with RCN} 
either draws at least $\Omega(1/(\gamma^{1/2-c}\eps^2))$ samples or requires at least $2^{(1/\gamma)^{\Omega(c)}}$ time. 
That is, for sufficiently small $\eps$ (namely, $\eps \leq \gamma^{3/2 + c}$), 
the computational sample complexity of the problem (in the SQ model) 
is polynomially higher than its information-theoretic sample complexity. See \Cref{thm:hypothesis-testing-hardness}
for the implications to low-degree polynomial tests.

\new{Finally, we note that SQ lower bounds have been previously obtained for the (more challenging) problem of learning halfspaces with bounded (Massart) noise in a variety of regimes~\citep{DK20-SQ-Massart, DiakonikolasKKT22, NasserT22}. Importantly, all these previous results make essential use of the bounded noise model and do not apply in the RCN setting.}

\subsection{Our Techniques} \label{ssec:techniques}

\paragraph{Upper Bound}
Our algorithmic approach is quite simple: 
we use projected subgradient descent 
applied to the leaky ReLU loss 
with parameter $\eta$, 
as was done in previous work~\citep{DGT19}. 
However, our analysis never explicitly makes a 
connection to minimizing the leaky ReLU loss; for 
our arguments, this loss is irrelevant. Instead, 
we make a novel connection between the 
(sub)gradient field of the leaky ReLU loss and the 
disagreement between how the vector $\w$ at which 
the subgradient is evaluated and an optimal vector 
$\wstar$ would classify points. Through this 
connection, we leverage the regret analysis of 
projected subgradient descent to obtain a novel 
regret bound on the disagreement probability 
$\pr[\sign(\w_t \cdot \x) \neq \sign(\wstar \cdot \x)],$ 
where $\w_t$ are the iterates of the algorithm. 
The obtained bound decomposes into three terms: 
(i) the standard regret term, which is bounded by choosing the algorithm iteration count to be sufficiently high, 
(ii) an error term bounded by the subgradient norm of empirical leaky ReLU at $\wstar,$ 
and (iii) an error term that corresponds to the uniform convergence error of the disagreement function $\1\{\sign(\w\cdot\x) \neq \sign(\wstar\cdot \x)\}.$ 
The latter two terms are then bounded choosing the sample size to be sufficiently high, yet bounded by $\widetilde{O}(1/(\eps^2 \gamma^2))$.

\paragraph{SQ Lower Bound}
To prove our SQ lower bound, we bound from below the 
SQ dimension of the problem; using standard results~\citep{FeldmanGRVX17} this implies the desired lower bound guarantees. 
Bounding the SQ dimension from below amounts to establishing the existence of a large set of distributions whose pairwise correlations are small. Inspired by the technique of~\citet{DKS17-sq}, we achieve this by selecting our distributions 
to be random rotations of a single distribution, 
each behaving in a standard way in all but one critical 
direction. To ensure the necessary margin property, 
we make the $\x$-marginals of our distribution 
uniform over the hypercube in $d \ll 1/\gamma$ dimensions --- as 
opposed to Gaussian-like (as in~\citet{DKS17-sq}).

The distributions we consider are quite simple. 
We define $f_{\bv}(\x) = \sgn(\bv \cdot \x  - t)$, 
where $\bv$ is a randomly chosen Boolean-valued vector 
and the threshold $t$ is chosen so that the probability 
that $\bv \cdot \x > t$ is of the order of $\eps$. 
We then let $\x$ be the uniform distribution over the hypercube 
and let $y = f_{\bv}(\x)$ with probability $2/3$  
and $-f_{\bv}(\x)$ otherwise. 
By picking many different vectors $\bv$, 
we get many different LTFs. 
We claim that there exist many of these LTFs 
whose pairwise correlations 
(with respect to the distribution where $\x$ 
is independent over the hypercube and $y$ is independent of $\x$) 
are small, as long as the corresponding 
defining vectors $\bv$ and $\bv'$ have small inner product 
(see \Cref{lem:a-b-correlation} and \Cref{lemma:correlation-bound}). 
Intuitively, this should hold because 
(i) both distributions are already $\eps$-close 
to the base distribution, and 
(ii) when $\bu$ and $\bv$ are nearly orthogonal, 
$f_{\bv}(\x)$ and $f_{\bu}(\x)$ are nearly 
independent of each other.

To analyze this inner product, 
we use a Fourier analytic approach. 
First, we note that the sizes of the individual Fourier coefficients of $f_{\bu}$ and $f_{\bv}$ can be 
computed using \new{Kravchuk} 
polynomials (see \Cref{claim:fourier-coef}). 
This allows us to show that they do not have too much 
Fourier mass in low degrees. Second, 
we note that when taking the inner product 
of the degree-$k$ parts of the Fourier transforms of 
$f_{\bu}$ and $f_{\bv}$, 
we will have large amounts of cancellation of terms, 
particularly when $|\bu \cdot \bv|$ is small 
or when $k$ is large (see \Cref{clm:lower-bound-large-k} and \Cref{clm:lower-bound-small-k}). 
The size of the remaining term after the cancellation can be written 
in terms of another \new{Kravchuk} polynomial, which we can bound. 
A careful analysis of all of the relevant terms  
gives us the necessary correlation bounds which imply our main result.

\vspace{-0.2cm}

\subsection{Notation}\label{sec:prelims}
For $n \in \Z_+$, we use $[n]$ to denote the set $\{1, \ldots, n\}$.
We use small
boldface characters for vectors and capital bold characters for matrices.  For $\bx \in \R^d$, $\|\bx\|_2 \eqdef (\littlesum_{i=1}^d \bx_i^2)^{1/2}$ denotes the $\ell_2$-norm of $\bx$.
We use $\bx \cdot \by $ for the inner product of $\bx, \by \in \R^d$ and $\mathcal{B}_d = \{\x \in \R^d: \|\x\|_2\leq 1\}$ to denote the unit centered Euclidean ball in $\R^d$; when the dimension is clear from the context, we omit it from the subscript. We use $\1\{A\}$ to denote the indicator function of $A$; equal to one if $A$ is a true statement, and equal to zero otherwise. 

We use $\E_{x\sim \D}[x]$ for the expectation of the random variable $x$ according to the distribution $\D$ and
$\pr[\mathcal{E}]$ for the probability of event $\mathcal{E}$. For simplicity of notation, we may
omit the distribution when it is clear from the context. For $(\x,y)$ distributed according to $\D$, we denote by $\D_\x$  the marginal distribution of $\x$ and by $\D_y$  the marginal distribution of $y$. We denote by $\mathcal U_d$ the uniform distribution over $\{\pm 1\}^d$.

\section{Computationally Efficient Learning Algorithm}\label{sec:algo}

In this section, we give the algorithm establishing \Cref{thm:alg-inf}.
We start by providing some intuition for our algorithm and its analysis. We then formally state the algorithm and bound its sample complexity and runtime. Due to space constraints, some of the technical details and proofs are deferred to \Cref{appx:omitted-pfs-algo}. 

\paragraph{Leaky ReLU, its subgradient, and intuition.} The Leaky ReLU loss function with parameter $\eta \in (0, 1/2)$ is defined by
\begin{equation}\label{eq:leaky-ReLU-def}
    \LR_\eta(z) :=(1-\eta)z \1\{z\geq 0\} + \eta z \1\{z<0\} \;.
\end{equation}
While the Leaky ReLU has been used as a convex surrogate for margin halfspace classification problems, this is not the core of our approach: we never argue about minimizing the expected leaky ReLU loss nor that its minimizer is a good classifier. Instead, we rely on the following vector-valued function $\g_{\eta}: \R^d \to \R^d$ 
\begin{equation}\label{eq:Leaky-ReLU-subgrad}
    \g_{\eta}(\w; \x, y) = \frac{1}{2}\big[(1-2\eta)\sign(\w\cdot \x) - y\big]\x \;.
\end{equation}
When $\x$ and $y$ are clear from the context, we omit them and instead simply write $\g_{\eta}(\w).$ 
The connection between $\g_{\eta}$ and $\LR$ is that $\g_{\eta}(\w)$ is a subgradient of $\LR(\w);$ see, e.g., \citet[Lemma 2.1]{DGT19}. However, this connection is not important for our analysis and we do not make any explicit use of the leaky ReLU function itself. 
What we do rely on is the following key observation.

\begin{proposition}\label{prop:grad-to-misclassification}
    For any $\w, \wb \in \R^d,$ 
    $$
        (\g_{\eta}(\w) - \g_{\eta}(\wb))\cdot( \w - \wb) = (1 - 2\eta) \1\{\sign(\w\cdot \x) \neq \sign(\wb \cdot \x)\}(|\w\cdot \x| + |\wb \cdot \x|). 
    $$
\end{proposition}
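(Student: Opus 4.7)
The plan is to prove this by direct algebraic manipulation, exploiting a very useful cancellation. Write $a := \w \cdot \x$ and $b := \wb \cdot \x$. Then by the definition in \eqref{eq:Leaky-ReLU-subgrad},
\[
\g_\eta(\w) - \g_\eta(\wb) \;=\; \tfrac{1}{2}\big[(1-2\eta)\sign(a) - y\big]\x \;-\; \tfrac{1}{2}\big[(1-2\eta)\sign(b) - y\big]\x \;=\; \tfrac{1}{2}(1-2\eta)\big(\sign(a) - \sign(b)\big)\x.
\]
The key observation here is that the label $y$ cancels: this is precisely why the analysis can avoid ever reasoning about the leaky-ReLU loss itself. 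Taking the inner product with $\w - \wb$ and using $(\w - \wb)\cdot \x = a - b$, we get
\[
(\g_\eta(\w) - \g_\eta(\wb))\cdot(\w - \wb) \;=\; \tfrac{1}{2}(1-2\eta)\big(\sign(a) - \sign(b)\big)(a-b).
\]

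Next I would perform a simple case analysis on the quantity $(\sign(a) - \sign(b))(a - b)$ based on whether $\sign(a) = \sign(b)$. If the signs agree, then $\sign(a) - \sign(b) = 0$ and the right-hand side is $0$, which matches the indicator on the target expression being $0$. If the signs disagree, then $\sign(a) - \sign(b) \in \{-2, +2\}$, and $a$ and $b$ have opposite signs, so $|a - b| = |a| + |b|$. Moreover $\sign(a-b) = \sign(a) - \sign(b)$ divided by $2$ (both are positive when $a > 0 > b$, both negative otherwise), hence $(\sign(a) - \sign(b))(a-b) = 2(|a|+|b|)$.

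Combining the two cases into a single formula using the indicator $\1\{\sign(a) \ne \sign(b)\}$ yields
\[
(\g_\eta(\w) - \g_\eta(\wb))\cdot(\w - \wb) \;=\; (1-2\eta)\,\1\{\sign(a)\ne\sign(b)\}\,(|a|+|b|),
\]
which is exactly the claimed identity after substituting back $a = \w\cdot\x$ and $b = \wb\cdot\x$. The whole argument is a short, elementary calculation; there is no real obstacle. The only subtlety worth flagging is the sign convention for $\sign(0)$ used in \eqref{eq:Leaky-ReLU-subgrad} (defined as $+1$ per the introduction), but since the statement is vacuous on the measure-zero event $\{a = 0\}$ or $\{b = 0\}$ under any reasonable margin assumption, and the identity can be directly verified at these boundary points with the stated convention, this causes no issue.
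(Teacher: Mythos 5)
Your proof is correct and follows essentially the same route as the paper: cancel the $y$-terms to reduce the left-hand side to $\tfrac{1-2\eta}{2}(\sign(a)-\sign(b))(a-b)$, then case-split on whether $\sign(a)=\sign(b)$. The paper expands the product into $|a|+|b|-a\sign(b)-b\sign(a)$ before the case analysis, whereas you case-split directly, but this is a cosmetic difference; your remark on the $\sign(0)=+1$ convention is also correct and harmless.
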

\begin{proof}
    By a direct calculation,
\begin{align*}
        ({\vec g_\eta}(\w) - {\vec g_\eta}(\wb))\cdot( \w - \wb) = \frac{1 - 2\eta}{2} \big(&(\sign(\w\cdot \x) - \sign(\wb\cdot \x))(\w\cdot \x - \wb \cdot \x)\big)\\
        = \frac{1 - 2\eta}{2} \big(&|\w\cdot \x| + |\wb\cdot \x|\\
        &- (\w\cdot\x ) \, \sign(\wb\cdot \x) - (\wb\cdot\x) \, \sign(\w\cdot \x)\big). 
    \end{align*}
In the last expression, the term in the parentheses is zero when the signs of $\w\cdot \x$ and $\wb \cdot \x$ agree; otherwise it is equal to $2(|\w\cdot \x| + |\wb\cdot \x|)$, leading to the claimed identity.  
\end{proof}

In particular, recalling that $\wstar$ is the weight vector of the target halfspace (see \Cref{assmpt:margin+RCN}), \Cref{prop:grad-to-misclassification} implies that 
\begin{equation}\label{eq:grad-to-misclassification}
    (\g_{\eta}(\w) - \g_{\eta}(\wstar))\cdot( \w - \wstar) = (1 - 2\eta) \1\{\sign(\w\cdot \x) \neq \sign(\wstar \cdot \x)\}(|\w\cdot \x| + |\wstar \cdot \x|). 
\end{equation}
In other words, the inner product $(\g_{\eta}(\w) - \g_{\eta}(\wstar))\cdot( \w - \wstar)$ is proportional to the Boolean function $\1\{\sign(\w\cdot \x) \neq \sign(\wstar \cdot \x)\}$ that indicates disagreement between $\w$ and $\wstar$ in how they classify points $\x.$ In particular, if we argue that $\Exx[\1\{\sign(\w\cdot \x) \neq \sign(\wstar \cdot \x)\}] = \pr[\sign(\w\cdot \x) \neq \sign(\wstar \cdot \x)] \leq \Bar{\epsilon}$ for some $\w,$ then we can immediately conclude that the misclassification error of $\w$ is $\eta + (1-2\eta)\Bar{\eps}$, due to \Cref{assmpt:margin+RCN}, \Cref{assmpt:RCN}. Thus, for $\Bar{\eps} = \frac{\eps}{1 - 2\eta},$ the misclassification error is $\eta + \eps$. This is the approach that we take.

To carry out the analysis, given $\x^{(1)}, \x^{(2)}, \dots \x^{(N)}$ drawn i.i.d.\ from $\D_{\x},$ we use
\begin{equation}\label{eq:emp-prob-disagreement-def}
    \widehat{\pr}_N(\w) := (1/N) \sum_{i=1}^N \1\{\sign(\w\cdot \x^{(i)}) \neq \sign(\wstar \cdot \x^{(i)})\}
\end{equation}
to denote the empirical probability of disagreement between $\w$ and $\wstar.$ 

\paragraph{Projected subgradient descent and disagreement regret.} Our \Cref{algo:main} is the simple projected subgradient descent, applied to the subgradient of the empirical leaky ReLU function, which given a sample $\{(\x^{(i)}, y^{(i)})\}_{i=1}^N$ drawn i.i.d.\ from $\D$ is defined by
\begin{equation}\label{eq:emp-grad}
    \hgn(\w) := (1/N) \sum_{i=1}^N \g_{\eta}(\w; \x^{(i)}, y^{(i)}).
\end{equation}
\RestyleAlgo{ruled}
\begin{algorithm}[ht]
\DontPrintSemicolon
\KwIn{$\epsilon > 0$, $\gamma \in (0, 1),$ $\eta \in (0, 1/2),$ i.i.d.\ sample $\{\x^{(i)}, y^{(i)}\}_{i=1}^N$  from $\D$, any $\vec w_0 \in \mathcal{B}$} Let $T = \big\lceil \frac{16(1-\eta)^2}{\gamma^2 \epsilon^2} - 1 \big\rceil,$ $\mu = \frac{2}{(1-\eta)\sqrt{T+1}}$\;
\For{$t = 0:T-1$}{
$\hat{\vec g}_N(\w_t) = \frac{1}{2N}\sum_{i=1}^N \big((1-2\eta)\sign(\w_t \cdot \x^{(i)}) - y^{(i)}\big)\x^{(i)}$\;

$\w_{t+1} = \proj_{\mathcal{B}}\big(\w_t - \mu \hat{\vec g}_N(\w_t)\big)$, where $\mathcal{B} = \{\x \in \R^d: \|\x\|_2 \leq 1\}$ \;
}
\KwRet{$\{\w_0, \dots, \w_T\}$} 

\caption{PAC Learner for Margin Halfspaces with RCN}\label{algo:main}
\end{algorithm}

To utilize standard regret bounds for (projected) subgradient descent, we first observe that $\g_{\eta}$ is bounded for all $\w \in \R^d.$ As a consequence, $\hgn$ admits the same upper bound.\begin{claim}\label{claim:subgrad-bnd}
    Given any $\w \in \R^d$ and any $(\x, y) \in \mathbb{S}^{d-1}\times \{-1, 1\}$, $\|\g_{\eta}(\w; \x, y)\|_2 \leq 1 - \eta.$ As a consequence, given any set of points $(\x^{(i)}, y^{(i)}) \in \mathbb{S}^{d-1}\times \{-1, 1\},$ $i \in [N],$ $\|\hgn(\w)\|_2 \leq 1- \eta.$
\end{claim}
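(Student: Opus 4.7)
The claim is a routine bound that follows by direct computation from the definition of $\g_\eta$ in \eqref{eq:Leaky-ReLU-subgrad}. The plan is to first handle the single-sample bound, then extend to the empirical average by the triangle inequality.

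For the first part, since $\|\x\|_2 = 1$, we have
\[
    \|\g_\eta(\w;\x,y)\|_2 \;=\; \tfrac{1}{2}\bigl|(1-2\eta)\sign(\w\cdot\x) - y\bigr|.
\]
Both $\sign(\w\cdot\x)$ and $y$ lie in $\{-1,+1\}$, so $(1-2\eta)\sign(\w\cdot\x) \in \{-(1-2\eta), 1-2\eta\}$ and $y \in \{-1,+1\}$. A case analysis over the four sign combinations shows that the quantity $|(1-2\eta)\sign(\w\cdot\x) - y|$ equals either $2\eta$ (signs agree) or $2(1-\eta)$ (signs disagree); since $\eta \in (0,1/2)$ the larger of these is $2(1-\eta)$. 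Dividing by $2$ yields $\|\g_\eta(\w;\x,y)\|_2 \leq 1-\eta$, as claimed.

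For the empirical average, apply the triangle inequality to \eqref{eq:emp-grad}:
\[
    \|\hgn(\w)\|_2 \;\leq\; \frac{1}{N}\sum_{i=1}^N \|\g_\eta(\w;\x^{(i)}, y^{(i)})\|_2 \;\leq\; 1-\eta,
\]
where the last inequality uses the per-sample bound from the previous paragraph applied to each $(\x^{(i)}, y^{(i)}) \in \mathbb{S}^{d-1}\times\{-1,+1\}$.

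There is no real obstacle here: the only content is recognizing that the worst case is when $\sign(\w\cdot\x)$ and $y$ disagree, and using $\eta < 1/2$ to conclude that $1-\eta > \eta$ so the ``disagreeing'' case dominates. This bound is exactly what is needed to instantiate the standard regret bound for projected subgradient descent with step size $\mu = 2/((1-\eta)\sqrt{T+1})$ chosen in \Cref{algo:main}.
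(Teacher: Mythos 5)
Your proof is correct and takes essentially the same route as the paper: reduce to bounding $\tfrac12\lvert(1-2\eta)\sign(\w\cdot\x)-y\rvert$ using $\|\x\|_2=1$, and then pass to the empirical average via the triangle inequality (which the paper phrases as Jensen's inequality). The only cosmetic difference is that you carry out an explicit four-case analysis where the paper simply applies the triangle inequality $\lvert(1-2\eta)\sign(\w\cdot\x)-y\rvert \le (1-2\eta)+1$; both give the tight bound $1-\eta$.
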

\begin{proof}
By the definition of $\g_\eta$ from \Cref{eq:Leaky-ReLU-subgrad}, we have
\begin{align*}
    \|\g_{\eta}(\w; \x, y)\|_2 &= (1/2) \big|(1-2\eta)\sign(\w\cdot \x) - y\big|\|\x\|_2\leq (1/2) \big|(1-2\eta) + 1\big|= 1 - \eta,
\end{align*}
where we have used that $\sign(\cdot) \in \{-1, 1\},$ $y \in \{-1, 1\}$ and $\|\x\|_2 = 1.$  The bound on $\|\hgn(\w)\|_2$ follows immediately from this bound, by its definition and Jensen's inequality.
\end{proof}

The following lemma provides what can be interpreted as a regret bound for the disagreement probability {$\pr[\sgn(\w\cdot\x)\neq\sgn(\w^*\cdot\x)]$}. We refer to is as the ``disagreement regret'' and bound it using the regret analysis of projected subgradient descent, combined with \Cref{eq:grad-to-misclassification} and \Cref{assmpt:margin+RCN}, \Cref{assmpt:margin}. Its proof is provided in \Cref{appx:omitted-pfs-algo}. 

\begin{restatable}{lemma}{mainregretbnd}\label{lemma:main-regret-bnd}
    Consider \Cref{algo:main}. There exists $t \in \{0, \dots T\}$ such that 
    \begin{equation*}\pr[\sign(\w_t\cdot \x) \neq \sign(\wstar \cdot \x)] \leq E_1 + E_2 + E_3, 
    \end{equation*}
where $E_1 = \frac{2(1-\eta)}{(1-2\eta)\gamma\sqrt{T+1}}$, $E_2 =  \frac{2}{(1-2\eta)\gamma}\|\hgn(\wstar)\|_2$, and $E_3 = \frac{1}{T+1}\sum_{t=0}^T\big[\pr[\sign(\w_t\cdot \x) \neq \sign(\wstar \cdot \x)] - \widehat{\pr}_N(\w_t)\big]$.
\end{restatable}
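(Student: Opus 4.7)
The plan is to combine the standard regret bound for projected subgradient descent with the key identity from \Cref{prop:grad-to-misclassification} (and its specialization \eqref{eq:grad-to-misclassification}) to convert a regret bound on the empirical leaky-ReLU subgradient field into a bound on the average empirical disagreement probability, and then pass from empirical to population probabilities via the $E_3$ term.

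First, I would invoke the textbook regret inequality for projected subgradient descent on the unit ball $\mathcal{B}$: for any comparator $\wstar \in \mathcal{B}$,
\[
    \sum_{t=0}^{T} \hgn(\w_t)\cdot(\w_t - \wstar) \;\leq\; \frac{\|\w_0 - \wstar\|_2^2}{2\mu} + \frac{\mu}{2}\sum_{t=0}^{T}\|\hgn(\w_t)\|_2^2.
\]
Using $\|\w_0 - \wstar\|_2 \leq 2$, the subgradient bound $\|\hgn(\w_t)\|_2 \leq 1-\eta$ from \Cref{claim:subgrad-bnd}, and plugging in the chosen step size $\mu = 2/((1-\eta)\sqrt{T+1})$, both terms on the right balance and yield an average regret of at most $2(1-\eta)/\sqrt{T+1}$.

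Next I would split the left-hand side as
\[
    \hgn(\w_t)\cdot(\w_t - \wstar) \;=\; \bigl(\hgn(\w_t) - \hgn(\wstar)\bigr)\cdot(\w_t - \wstar) \;+\; \hgn(\wstar)\cdot(\w_t - \wstar),
\]
bound the second summand below by $-2\|\hgn(\wstar)\|_2$ via Cauchy--Schwarz (since $\|\w_t-\wstar\|_2 \leq 2$), and apply the pointwise identity \eqref{eq:grad-to-misclassification} averaged over the sample to the first summand. Because the margin assumption \ref{assmpt:margin} forces $|\wstar\cdot \x^{(i)}|\geq \gamma$ almost surely, this yields
\[
    (\hgn(\w_t) - \hgn(\wstar))\cdot(\w_t - \wstar) \;\geq\; (1-2\eta)\,\gamma\,\widehat{\pr}_N(\w_t).
\]
Combining the displays and dividing by $(1-2\eta)\gamma(T+1)$ gives
\[
    \frac{1}{T+1}\sum_{t=0}^T \widehat{\pr}_N(\w_t) \;\leq\; \frac{2(1-\eta)}{(1-2\eta)\gamma\sqrt{T+1}} + \frac{2}{(1-2\eta)\gamma}\|\hgn(\wstar)\|_2 \;=\; E_1 + E_2.
\]

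Finally, I would rewrite $\widehat{\pr}_N(\w_t) = \pr[\sign(\w_t\cdot\x)\neq \sign(\wstar\cdot\x)] - \bigl(\pr[\sign(\w_t\cdot\x)\neq \sign(\wstar\cdot\x)] - \widehat{\pr}_N(\w_t)\bigr)$, move the ``population minus empirical'' discrepancy to the right-hand side, and recognize the resulting average as $E_3$. This bounds the average population disagreement by $E_1+E_2+E_3$, and by an averaging argument there must exist some $t \in \{0, \dots, T\}$ achieving this bound, proving the claim. I do not expect any serious obstacle: the argument is essentially an application of the textbook regret bound, and the only non-standard step is the conversion through \Cref{prop:grad-to-misclassification}, which has already been set up carefully in the preceding paragraphs.
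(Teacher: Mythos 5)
Your proposal is correct and follows essentially the same route as the paper's own proof: apply the standard projected-subgradient regret bound with step size $\mu = 2/((1-\eta)\sqrt{T+1})$, split $\hgn(\w_t)\cdot(\w_t - \wstar)$ into the centered term $(\hgn(\w_t)-\hgn(\wstar))\cdot(\w_t-\wstar)$ plus the $\hgn(\wstar)$ term bounded via Cauchy--Schwarz, lower-bound the centered term by $(1-2\eta)\gamma\,\widehat{\pr}_N(\w_t)$ using \Cref{eq:grad-to-misclassification} and the margin condition, and then pass from empirical to population disagreement by introducing the $E_3$ discrepancy and averaging. The only cosmetic difference is that you cite the textbook regret inequality directly whereas the paper re-derives it from the nonexpansivity of the projection; the content is identical.
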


In \Cref{lemma:main-regret-bnd}, error $E_1$ is simply the empirical regret, which can be bounded by choosing the number of iterations $T$ in \Cref{algo:main} to be sufficiently high. Errors $E_2$ and $E_3$ determine the sample complexity of our algorithm, and are dealt with in what follows.

\paragraph{Bounding the required number of samples.} We now show that the errors $E_2$ and $E_3$ can be controlled by choosing a sufficiently large sample size $N$. We then combine everything we have shown so far to state our main result on upper bounds in \Cref{thm:main-algo}. 

\begin{restatable}{lemma}{lemmaSampleComplexity}\label{lemma:sample-complexity}
    Let $E_2$ and $E_3$ be defined as in \Cref{lemma:main-regret-bnd}. For any $\Bar{\epsilon} >0, \delta > 0,$ if $N=\Omega\big(\frac{d}{\Bar{\eps}}+\frac{\eta}{(1-2\eta)^2\bar{\eps}^2\gamma^2})\log(\frac{1}{\delta})\big)$, then with probability at least $1 - \delta$ we have $E_2 + E_3 \leq \frac{\bar{\epsilon}}{2}.$
\end{restatable}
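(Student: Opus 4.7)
The plan is to bound $E_2$ and $E_3$ separately, each with probability at least $1-\delta/2$, and combine by a union bound. The two terms are of quite different character: $E_2$ is a concentration statement about a \emph{single} random vector (the empirical subgradient at $\wstar$), while $E_3$ is a \emph{uniform} convergence statement over the iterates.

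\textbf{Bounding $E_2$.} The crucial structural observation is that under the RCN assumption~\ref{assmpt:RCN}, $\E[y \mid \x]=(1-2\eta)\sgn(\wstar\cdot\x)$, and therefore
\[
\E_{(\x,y)\sim\D}\!\left[\g_\eta(\wstar;\x,y)\right]=\tfrac12\E\!\left[\big((1-2\eta)\sgn(\wstar\cdot\x)-y\big)\x\right]=\vec 0 .
\]
Thus $\hgn(\wstar)$ is the empirical mean of $N$ i.i.d.\ zero-mean vectors, each of norm at most $1-\eta$ by \Cref{claim:subgrad-bnd}. A short direct computation (splitting on the noise flip) gives the second-moment bound $\E[\|\g_\eta(\wstar)\|_2^2]= \eta(1-\eta)$. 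Applying a vector Bernstein inequality yields, with probability at least $1-\delta/2$,
\[
\|\hgn(\wstar)\|_2 \;\leq\; C\!\left(\sqrt{\tfrac{\eta(1-\eta)\log(1/\delta)}{N}} \;+\; \tfrac{(1-\eta)\log(1/\delta)}{N}\right),
\]
and substituting into the definition of $E_2=\frac{2}{(1-2\eta)\gamma}\|\hgn(\wstar)\|_2$ shows that $E_2\leq \bar\epsilon/4$ whenever $N\gtrsim \eta\log(1/\delta)/((1-2\eta)^2\gamma^2\bar\epsilon^2)$, which is exactly the second term in the claimed sample size.

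\textbf{Bounding $E_3$.} The function class
$\mathcal F=\{\x\mapsto \1\{\sgn(\w\cdot\x)\neq\sgn(\wstar\cdot\x)\}:\w\in\R^d\}$
is, modulo a fixed XOR with the halfspace defined by $\wstar$, the class of linear threshold indicators, so it has VC dimension $O(d)$. I would bound $E_3$ via a uniform convergence inequality of Bernstein/Vapnik relative type: with probability at least $1-\delta/2$, uniformly over $\w$,
\[
D(\w)-\widehat{\mathrm{pr}}_N(\w) \;\leq\; C\sqrt{D(\w)\,\tfrac{d\log(N/\delta)}{N}} \;+\; C\,\tfrac{d\log(N/\delta)}{N},
\]
where $D(\w)=\pr[\sgn(\w\cdot\x)\neq\sgn(\wstar\cdot\x)]$. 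Averaging over $t$ and applying Cauchy--Schwarz bounds $E_3$ by $C\sqrt{a\,\bar D}+Ca$ with $a=d\log(N/\delta)/N$ and $\bar D=\tfrac{1}{T+1}\sum_t D(\w_t)$. Taking $N\gtrsim d\log(1/\delta)/\bar\epsilon$ makes $a\leq \bar\epsilon/c$ for any desired constant $c$, and solving the resulting implicit inequality (using $\bar D=\bar{\widehat{\mathrm{pr}}}_N+E_3\leq 1+E_3$ and isolating $E_3$) gives $E_3\leq \bar\epsilon/4$.

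\textbf{Main obstacle.} The delicate point is achieving the \emph{linear} $1/\bar\epsilon$ dependence in the VC sample term rather than the naive $1/\bar\epsilon^2$ one would get from an additive Hoeffding-type VC bound. This forces us to exploit the small-loss regime via a variance-aware (Bernstein/relative) uniform convergence inequality, and to close the self-referential loop between $\bar D$ and $E_3$ carefully. The $E_2$ half of the lemma is relatively routine once the mean-zero structure under RCN is noted; the real care lies in choosing the right concentration tool for $E_3$.
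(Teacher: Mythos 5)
Your decomposition matches the paper's, and for $E_2$ the two arguments are essentially identical: both observe that $\E[\g_\eta(\wstar)]=\vec 0$ under RCN, that $\|\g_\eta(\wstar)\|_2\le 1-\eta$ almost surely, and that $\E[\|\g_\eta(\wstar)\|_2^2]=O(\eta)$, and both then invoke a Bennett/Bernstein-type vector concentration inequality (the paper uses Fact~\ref{fct:Bennet-vectors}, from Smale--Zhou) to get $\|\hgn(\wstar)\|_2\le\hat\eps$ with $N=\Omega(\eta\log(1/\delta)/\hat\eps^2)$, then set $\hat\eps=(1-2\eta)\gamma\bar\eps/8$. That half of your argument is exactly the paper's. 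For $E_3$, the paper also bounds it by VC uniform convergence for the disagreement class (VC dimension $O(d)$ since it sits inside intersections of two halfspaces) and cites a standard inequality.

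The step you should revisit is the closure of the self-referential $E_3$ bound. You are right that an additive (Hoeffding-type) VC inequality only yields $N\gtrsim d/\bar\eps^2$ and that a relative/Bernstein VC bound $D(\w)-\widehat{\mathrm{pr}}_N(\w)\le C\sqrt{D(\w)\,a}+Ca$ with $a=d\log(N/\delta)/N$ is the right tool, but closing with the trivial inequality $\bar D\le 1+E_3$ gives $E_3\le C\sqrt{a(1+E_3)}+Ca$, which solves to $E_3=O(\sqrt a)$, so you are back to needing $a\lesssim\bar\eps^2$, i.e.\ $N\gtrsim d/\bar\eps^2$, not the claimed $d/\bar\eps$. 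To obtain the linear $1/\bar\eps$ dependence one needs an a priori $O(\bar\eps)$ control on $\bar{\widehat{\mathrm{pr}}}_N=\frac{1}{T+1}\sum_t\widehat{\mathrm{pr}}_N(\w_t)$ (then $a=O(\bar\eps)$ suffices). In the algorithm's context this is supplied by the intermediate step in the proof of Lemma~\ref{lemma:main-regret-bnd}, which shows that $\bar{\widehat{\mathrm{pr}}}_N\le E_1+E_2$ on the empirical side, with $E_1$ controlled by the choice of $T$ and $E_2$ by the first half of this very lemma --- but neither your write-up nor the lemma statement on its own invokes that fact, so as written your closure does not reach the stated rate.
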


We are now ready to state and prove our main upper bound result.

\begin{theorem}\label{thm:main-algo}
    Let $\D$ be a distribution on pairs $(\x, y) \in \mathbb{S}^{d-1}\times\{\pm 1\}$ as in \Cref{assmpt:margin+RCN}. Then, there is an algorithm (\Cref{algo:main}) that for any given $\epsilon, \delta \in (0, 1)$ uses $N=O\big(\frac{d(1-2\eta)}{\eps}+\frac{\eta}{\eps^2\gamma^2})\log(\frac{1}{\delta})\big)$ samples, runs in time $O(\frac{Nd}{\epsilon^2\gamma^2})$ and learns $\gamma$-margin halfspaces corrupted with $\eta$-RCN up to error $\eta +\eps$, with probability at least $1-\delta$.
\end{theorem}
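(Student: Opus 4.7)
The plan is to combine Lemmas~\ref{lemma:main-regret-bnd} and~\ref{lemma:sample-complexity} after a clean parameter substitution, and then resolve the mismatch between the existential guarantee of Lemma~\ref{lemma:main-regret-bnd} and the need to output a single hypothesis via a short validation step. The bridge from \emph{disagreement} to \emph{misclassification} comes directly from Assumption~\ref{assmpt:RCN}: for any fixed $\w$, conditioning on whether $\sign(\w\cdot\x)$ agrees with $\sign(\wstar\cdot\x)$ yields
\[
\pr[y \neq \sign(\w\cdot\x)] \;=\; \eta + (1-2\eta)\,\pr[\sign(\w\cdot\x) \neq \sign(\wstar\cdot\x)].
\]
Setting $\bar\eps := \eps/(1-2\eta)$ thus reduces the task to producing an iterate whose disagreement probability is at most $\bar\eps$.

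With $\bar\eps$ fixed, I would first calibrate $T$ so that the pure regret term $E_1 = 2(1-\eta)/((1-2\eta)\gamma\sqrt{T+1})$ is at most $\bar\eps/2$; solving gives $T+1 \geq 16(1-\eta)^2/(\gamma^2\eps^2)$, which is exactly the choice hard-coded in Algorithm~\ref{algo:main}. I would then invoke Lemma~\ref{lemma:sample-complexity} with this $\bar\eps$ and failure probability $\delta/2$; substituting $\bar\eps = \eps/(1-2\eta)$ into its sample-complexity expression yields precisely $N = \Omega\big((d(1-2\eta)/\eps + \eta/(\eps^2\gamma^2))\log(1/\delta)\big)$, matching the theorem. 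Together these bounds give $E_1 + E_2 + E_3 \leq \bar\eps$ with probability at least $1-\delta/2$, so by Lemma~\ref{lemma:main-regret-bnd} some iterate $\w_{t^\star}$ has disagreement probability at most $\bar\eps$ and hence true misclassification error at most $\eta+\eps$.

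Since the algorithm returns the entire list of iterates while the lemma is purely existential, I would append a validation step: draw $O(\eps^{-2}\log((T+1)/\delta))$ additional labeled samples, compute the empirical $0$-$1$ error of every iterate on this validation set, and return the iterate achieving the smallest empirical error. By Hoeffding plus a union bound over the $O((1-\eta)^2/(\gamma^2\eps^2))$ candidate iterates, every empirical error is within $\eps/4$ of the corresponding true error with probability at least $1-\delta/2$, so the returned iterate's true error exceeds that of $\w_{t^\star}$ by at most $\eps/2$ (absorbed by rescaling $\eps$ by a constant). A union bound over the two failure events gives overall success probability at least $1-\delta$, and the validation sample cost is subsumed by the leading $\widetilde{O}(1/(\gamma^2\eps^2))$ term already present in~$N$.

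For the runtime, each of the $T$ iterations of Algorithm~\ref{algo:main} forms $\hgn(\w_t)$ in $O(Nd)$ time (a single pass over the sample) and performs an $O(d)$ projection onto the unit ball, giving total cost $O(TNd) = O(Nd/(\gamma^2\eps^2))$ as claimed; the validation sweep is $O((T+1)Nd_{\mathrm{val}})$ with $N_{\mathrm{val}}$ of lower order, hence absorbed. The main obstacle is the existential nature of Lemma~\ref{lemma:main-regret-bnd}: the lemma bounds only the \emph{best} iterate's disagreement probability, so identifying a good iterate (rather than, e.g., averaging iterates as in classical subgradient-descent analyses, which does not play well with the indicator function inside the disagreement bound) requires the explicit validation step above. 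Everything else is straightforward parameter-matching between the lemmas we have already set up.
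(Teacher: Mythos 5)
Your proof follows the paper's argument essentially verbatim: combine Lemmas~\ref{lemma:main-regret-bnd} and~\ref{lemma:sample-complexity} with the substitution $\bar\eps = \eps/(1-2\eta)$, calibrate $T$ to kill $E_1$, convert disagreement probability to misclassification error via Assumption~\ref{assmpt:RCN}, and resolve the existential guarantee on the iterates by a fresh-sample validation sweep, which is exactly what the paper does (citing Theorem~6.8 of~\citet{SB14}). One small remark: your validation budget $O(\eps^{-2}\log((T+1)/\delta))$ is the correct Hoeffding-plus-union-bound count, whereas the paper writes $N' = O(\log(T/\eps)\log(1/\delta))$ without the $1/\eps^2$ factor that its own cited fact requires, so if anything your accounting is the more careful one.
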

\begin{proof}
    Applying \Cref{lemma:main-regret-bnd} and \Cref{lemma:sample-complexity}, we have that for $N=O\big(\frac{d(1-2\eta)}{\eps}+\frac{\eta}{\eps^2\gamma^2})\log(\frac{1}{\delta})\big)$, with probability at least $1 - \delta,$ there exists  $t \in \{0, \dots, T\}$ in \Cref{algo:main} such that 
\begin{equation}\notag
        \pr[\1\{\sign(\w_t\cdot \x) \neq \sign(\wstar \cdot \x)\}] \leq E_1 + \frac{\epsilon}{2(1-2\eta)}, 
    \end{equation}
where $E_1 = \frac{2(1-\eta)}{(1-2\eta)\gamma\sqrt{T+1}}$. Hence, to ensure that {$\pr[\sgn(\w_t\cdot\x) \neq \sgn(\w^*\cdot\x)]\leq \frac{\eps}{1-2\eta}$,} it suffices to choose $T \geq \frac{16(1-\eta)^2}{\gamma^2 \epsilon^2} - 1,$ which is what \Cref{algo:main} does. The bound on the runtime is then simply $O(TNd),$ as the complexity of each iteration is dominated by the computation of $\hgn,$ which takes $O(Nd)$ time. 
By \Cref{assmpt:margin+RCN}, \Cref{assmpt:RCN}, such a $\w_t$ misclassifies points $\x$ drawn from $\D$ with probability $\eta + \epsilon$. 

    What we have shown so far is that at least one of the vectors $\w_0, \dots, \w_T$ output by \Cref{algo:main} attains the target misclassification error $\eta + \epsilon,$ but we have not specified which one. The appropriate vector can 
be determined by drawing a fresh sample 
$\{(\Tilde{\x}^{(i)}, \Tilde{y}^{(i)})\}_{i=1}^{N'}$ of size $N' = O\big(\log(\frac{T}{\epsilon})\log(\frac{1}{\delta})\big) = O\big(\log(\frac{1}{\epsilon \gamma})\log(\frac{1}{\delta})\big)$ and selecting the vector $\w_t \in \{\w_0, \dots, \w_T\}$ with minimum empirical misclassification error $\frac{1}{N'}\sum_{i=1}^{N'} \1\{\sign(\w_t\cdot \tilde{\x}^{(i)}) \neq \Tilde{y}^{(i)}\}$. Clearly, this additional step does not negatively impact the sample complexity or the runtime stated in \Cref{thm:main-algo}. The standard analysis for this part is provided in \Cref{appx:omitted-pfs-algo}.
\end{proof}

\paragraph{Removing the Dependence on $d$ from the Sample Complexity} In \Cref{thm:main-algo}, the sample complexity $N$ depends on $d$ via the term $\frac{d}{\epsilon}\log(\frac{1}{\delta}),$ which comes from the VC dimension of $O(d)$ that appears when bounding the error term $E_3.$ The dependence on $d$ can be avoided and replaced by $1/\gamma^2,$ using standard dimension-reduction; see \Cref{appx:omitted-pfs-algo}.

\paragraph{Low-Noise Regime} When the noise parameter $\eta$ is equal to zero (i.e., in the realizable setting), 
the sample complexity of the problem is $\Theta(\frac{1}{\gamma^2 \eps})$ and is achievable via the classical Perceptron algorithm. Based on the result of \Cref{thm:main-algo} and with the dimension reduction discussed in the previous paragraph, we recover this optimal sample complexity with our algorithm not only for $\eta = 0,$ but also whenever $\eta = O(\eps).$

%

\section{SQ Lower Bound For Learning Margin Halfspaces with RCN}\label{sec:lowerbound}

In this section, we establish our SQ lower bound result (\Cref{thm:sq-inf})
and its associated implication for low-degree polynomial tests.
 In addition to the $\mathrm{STAT}$ oracle defined in the introduction, we also consider the $\mathrm{VSTAT}$ oracle, defined below. 
\begin{definition}[VSTAT Oracle] \label{def:vstat}
Let $D$ be a distribution on $\R^d$. 
A statistical query is a bounded function $f: \R^d \to [-1,1]$. 
For $t>0$, the $\mathrm{VSTAT}(t)$ oracle responds to the query $f$
with a value $v$ such that $|v - \E_{\bx \sim D}[f(\bx)]| \leq \tau$, where 
$\tau=\max\left(1/t,\sqrt{\Var_{\bx \sim D}[f(\x)]/t}\right)$.
\end{definition}

Our main SQ lower bound result is stated in the following theorem.

\begin{theorem}[Main SQ Lower Bound]\label{thm:sq-theorem} 
Fix $c\in(0,1/2)$. 
Any SQ algorithm that learns \new{the class of} 
$\gamma$-margin halfspaces on $\mathbb{S}^{d-1}$ 
in the presence of RCN with $\eta = 1/3$ 
within misclassification error $\eta+\eps$ either 
requires queries of accuracy better than 
$O(\eps\gamma^{1/4-c/2})$, 
i.e., queries to $\mathrm{STAT}(O(\eps\gamma^{1/4-c/2}))$ 
or $\mathrm{VSTAT}(O(\gamma^{c-1/2}/\eps^2))$, 
or needs to make at least $2^{\Omega(\gamma^{-c})}$ 
statistical queries.
\end{theorem}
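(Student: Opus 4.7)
The plan is to prove the result by lower-bounding the SQ dimension of the learning problem and then invoking the standard translation to query-lower-bounds of \citet{FeldmanGRVX17}. Following the hidden-direction paradigm of \citet{DKS17-sq}, I would construct a large family $\{D_{\bv}\}$ of hard instances indexed by Boolean vectors $\bv \in \{\pm 1\}^d$ with ambient dimension $d = \Theta(1/\gamma)$. The null distribution $D_0$ has $\bx$ uniform on $\{\pm 1\}^d$ (rescaled by $1/\sqrt{d}$ to lie on $\mathbb{S}^{d-1}$) and $y$ independent and uniform on $\{\pm 1\}$. Each hard distribution $D_{\bv}$ shares the same $\bx$-marginal and sets $y = f_{\bv}(\bx)$ with probability $1-\eta = 2/3$ and $y = -f_{\bv}(\bx)$ otherwise, where $f_{\bv}(\bx) = \sgn(\bv \cdot \bx - t)$ with $t$ a half-integer chosen so that $\Pr_{\bx}[f_{\bv}(\bx) = 1] = \Theta(\eps)$. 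The half-integer threshold ensures $|\bv \cdot \bx - t| \geq 1/2$ for every Boolean $\bx$; after the common $1/\sqrt{d}$ rescaling this yields margin at least $1/(2d) \gtrsim \gamma$, so each $D_{\bv}$ is a valid $\gamma$-margin RCN instance.

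The next step is to bound the pairwise chi-squared correlations $\chi_{D_0}(D_{\bu}, D_{\bv})$. Using $\E_{D_{\bv}}[y \mid \bx] = (1-2\eta) f_{\bv}(\bx) = \tfrac{1}{3}f_{\bv}(\bx)$, a direct calculation yields $\chi_{D_0}(D_{\bu}, D_{\bv}) = \tfrac{1}{9}\E_{\bx}[f_{\bu}(\bx)f_{\bv}(\bx)]$, so the task reduces to bounding the Boolean-cube inner product of the $f_{\bv}$'s. I would then pass to Fourier analysis on $\{\pm 1\}^d$. Since $f_{\bv}$ is the pullback of a symmetric function by the sign pattern $\bv$, its Fourier coefficients factor as $\hat{f}_{\bv}(S) = a_{|S|}\chi_S(\bv)$, where the level-$k$ weights $a_k$ depend only on the threshold $t$ and can be written explicitly through Krawtchouk polynomials (\Cref{claim:fourier-coef}). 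Summing over $S$,
\begin{equation*}
\E_{\bx}[f_{\bu}(\bx) f_{\bv}(\bx)] \;=\; \sum_{k=0}^{d} a_k^2 \, K_k(m;d),
\end{equation*}
where $m$ is the Hamming distance between $\bu$ and $\bv$ and $K_k(\,\cdot\,; d)$ is the degree-$k$ Krawtchouk polynomial. When $|\bu \cdot \bv|$ is small, $m$ is close to $d/2$, and $K_k(m;d)$ exhibits heavy cancellation; splitting the sum at a suitable cutoff $k^\ast$ and applying \Cref{clm:lower-bound-small-k} to the low-degree part together with \Cref{clm:lower-bound-large-k} to the high-degree tail will give the desired correlation bound.

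Finally, a standard probabilistic packing produces a family of $M = 2^{\Omega(\gamma^{-c})}$ Boolean vectors in $\{\pm 1\}^d$ with pairwise Hamming distances concentrated in $[d/2 - O(d^{1/2+c/2}), d/2 + O(d^{1/2+c/2})]$, and the resulting correlation upper bound, combined with the generic SQ-dimension to query-lower-bound translation of \citet{FeldmanGRVX17}, yields both the $2^{\Omega(\gamma^{-c})}$ query lower bound and the claimed $O(\eps\gamma^{1/4-c/2})$ tolerance requirement for $\mathrm{STAT}$ (and correspondingly for $\mathrm{VSTAT}$). The main obstacle is the Krawtchouk-polynomial analysis: obtaining the precise exponent $\gamma^{1/4-c/2}$ requires a delicate two-sided estimate that simultaneously exploits (i) the $\eps$-bias of $f_{\bv}$ to control the low-degree Fourier mass and (ii) the near-orthogonality of $\bu, \bv$ to kill the level-$k$ contribution via $K_k(m;d)$. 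Balancing the cutoff $k^\ast$ so that both pieces come out of order $\eps^2 \gamma^{1/2-c}$ is the delicate calculation that drives the tightness of the final bound.
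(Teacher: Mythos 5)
Your overall architecture matches the paper — Boolean hypercube in $d \approx 1/\gamma$ dimensions, threshold LTFs $f_{\bv}$ with $\E[f_{\bv}] = \Theta(\eps)$, RCN labels, Krawtchouk Fourier analysis, near-orthogonal packing of $\bv$'s, FeldmanGRVX17 translation. But there is a genuine gap in the choice of reference distribution, and it breaks the tolerance bound entirely.

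You take $D_0$ with $y$ uniform on $\{\pm 1\}$. All your hard distributions $D_{\bv}$ have a heavily biased $y$-marginal: since $f_{\bv}(\bx) = -1$ with probability $1 - \Theta(\eps)$ and the label is just an $\eta$-noisy copy, $\Pr_{D_{\bv}}[y=1] = \eta + (1-2\eta)\Theta(\eps) \approx 1/3$. Thus every $D_{\bv}$ differs from your $D_0$ in the \emph{same} direction (the $y$-bias), and the pairwise correlations relative to this $D_0$ are not small. Concretely, your formula $\chi_{D_0}(D_{\bu}, D_{\bv}) = \tfrac19 \E_{\bx}[f_{\bu}(\bx)f_{\bv}(\bx)]$ is correct for your setup, but since $f_{\bu}, f_{\bv}$ are $\pm 1$-valued with $\E[f_{\bv}] \approx -1$, the degree-$0$ Fourier term already gives $a_0^2 \approx 1$ in your expansion $\sum_k a_k^2 K_k(m;d)$, so $\E[f_{\bu}f_{\bv}] = 1 - O(\eps)$ and $\chi_{D_0}(D_{\bu},D_{\bv}) \approx 1/9$. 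Plugging $\gamma \approx 1/9$ into \Cref{lem:sq-from-pairwise} gives tolerance $\sqrt{2\gamma} = \Theta(1)$ — vacuous, since every SQ algorithm trivially answers constant-tolerance queries. No Krawtchouk cancellation applied to degrees $k \geq 1$ can rescue this, because the offending $1$ sits at $k = 0$.

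The paper avoids this by choosing $D_0$ to be the product distribution $\mathcal{U}_d \otimes \mathrm{Ber}(\Pr_{D_{\bv}}[y=1])$, i.e., matching the $y$-marginal (see \Cref{lem:a-b-correlation}), and by working with $\{0,1\}$-valued $f_{\bv}$ so that $\E[f_{\bv}] = 2\eps$. With that $D_0$, the correlation formula becomes a \emph{covariance}, $\chi_{D_0}(D_{\bv},D_{\bu}) \lesssim \E[f_{\bv}f_{\bu}] - \E[f_{\bv}]\E[f_{\bu}]$, which subtracts off the degree-$0$ term, and the self-correlation $\chi^2(D_{\bv}, D_0) \leq (1-2\eta)(\E[f_{\bv}]-\E[f_{\bv}]^2) = O(\eps)$ is also small. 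That is what makes both the $O(\eps \gamma^{1/4-c/2})$ tolerance and the $2^{\Omega(\gamma^{-c})}$ query count emerge from \Cref{lem:sq-from-pairwise}. You need either to center $D_0$ to match the $y$-marginal, or equivalently to analyze the covariance rather than the raw inner product of the $f_{\bv}$'s; as written, your reduction to ``bounding the Boolean-cube inner product'' is bounding the wrong quantity.
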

\subsection{Proof of \Cref{thm:sq-theorem}}

To prove the theorem, we construct a family of non-homogeneous margin halfspaces 
with RCN such that any SQ learner requires the desired complexity. 
This result can be translated to an SQ lower bound 
for homogeneous halfspaces 
with almost as good margin (see paragraph after \Cref{assmpt:margin+RCN}). 
The family of halfspaces that we construct is supported on 
$\{\pm 1\}^d$ and has margin $\gamma = \Omega(1/d)$.
Note that we can straightforwardly extend this construction  
to a higher dimensional space (by setting the values of the new coordinates 
of points $\x \sim  D_{\x}$ to zero). 
Hence, our construction directly implies a similar 
SQ lower bound for $\gamma$-margin halfspaces 
on the unit sphere $\mathbb{S}^{d-1}$ 
for any $d \gg 1/\gamma$.

Fix $\eps\in(0,1/2)$. For $\vec v\in\{\pm 1\}^d$, let $f_{\vec v}(\x)=\1\{\vec v\cdot \x -t\geq 0\}$ and choose $t\in \R$ so that $\pr[f_{\vec v}(\x)>0]=2\eps$. We define the distribution $D_{\vec v}$ over $\{\pm 1\}^d\times \{0,1\}$ as follows. We choose the marginal distribution of $\x$, denoted by $(D_{\vec v})_\x$, to be the uniform distribution over the set $\{\pm 1\}^d$. For each $\x$, we couple the random variable $y$ by setting $\pr[y=f_{\vec v}(\x)|\x]=1-\eta$ and $\pr[y\neq f_{\vec v}(\x)|\x]=\eta$.
       Let $A_{\vec v}$ be the conditional distribution of $D_{\vec v}$ given $y=1$ and let $B_{\vec v}$ be the conditional distribution of $D_{\vec v}$ given $y=0$. 
       We denote by $A_{\vec v}(\x)$ and $B_{\vec v}(\x)$ the pmf of $A_{\vec v}$ and $B_{\vec v}$, respectively. Moreover, we denote by $\mathcal{U}_d(\x)$ the pmf of $\mathcal{U}_d$. We first give a closed form expression for the pmf of $A_{\vec v}$ and $B_{\vec v}$. Its proof can be found in \Cref{app:lower-bound}.
       \begin{restatable}{claim}{clmPmfConditional}\label{clm:pmf-conditional}
        It holds $A_{\vec v}(\x)=\frac{\eta +(1-2\eta)f_{\vec v}(\x)}{\eta +(1-2\eta)\E[f_{\vec v}(\x)]}\mathcal{U}_d(\x)$ and $B_{\vec v}(\x)=\frac{1-\eta -(1-2\eta)f_{\vec v}(\x)}{1-\eta -(1-2\eta)\E[f_{\vec v}(\x)]}\mathcal{U}_d(\x)$.
       \end{restatable}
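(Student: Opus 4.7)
The plan is to prove the claim by a direct application of Bayes' rule, noting that the whole claim is essentially a bookkeeping exercise once one writes down the conditional probability of $y$ given $\x$ in closed form.

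First, I would translate the sampling description of $D_{\vec v}$ into a formula for $\pr[y=1 \mid \x]$. Since $f_{\vec v}(\x) \in \{0,1\}$, the definition of $D_{\vec v}$ gives $\pr[y=1\mid \x] = (1-\eta)$ when $f_{\vec v}(\x)=1$ and $\pr[y=1\mid \x] = \eta$ when $f_{\vec v}(\x)=0$. Both cases are captured by the single expression
\[
\pr[y=1 \mid \x] \;=\; \eta + (1-2\eta)\,f_{\vec v}(\x),
\]
and symmetrically $\pr[y=0\mid \x] = 1 - \eta - (1-2\eta)\,f_{\vec v}(\x)$.

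Next, since the marginal $(D_{\vec v})_\x$ is $\mathcal{U}_d$ by construction, the total probability formula yields
\[
\pr[y=1] \;=\; \E_{\x\sim \mathcal{U}_d}[\pr[y=1\mid \x]] \;=\; \eta + (1-2\eta)\,\E[f_{\vec v}(\x)],
\]
and likewise $\pr[y=0] = 1 - \eta - (1-2\eta)\,\E[f_{\vec v}(\x)]$. Applying Bayes' rule then gives
\[
A_{\vec v}(\x) \;=\; \pr[\x \mid y=1] \;=\; \frac{\pr[y=1\mid \x]\,\mathcal{U}_d(\x)}{\pr[y=1]} \;=\; \frac{\eta + (1-2\eta)\,f_{\vec v}(\x)}{\eta + (1-2\eta)\,\E[f_{\vec v}(\x)]}\,\mathcal{U}_d(\x),
\]
and the analogous computation for $B_{\vec v}(\x)$ finishes the claim.

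There is no real obstacle here, only two minor sanity checks worth mentioning: (i) the denominators are strictly positive since $\eta\in(0,1/2)$ and $\E[f_{\vec v}(\x)]=2\eps\in(0,1)$, so the conditional densities are well defined; and (ii) summing $A_{\vec v}(\x)$ (or $B_{\vec v}(\x)$) over $\x\in\{\pm 1\}^d$ returns $1$, which provides a consistency check on the formulas.
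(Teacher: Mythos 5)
Your proof is correct and follows essentially the same route as the paper's: write the joint pmf $g_{D_{\vec v}}(\x, y=1) = (\eta + (1-2\eta)f_{\vec v}(\x))\,\mathcal{U}_d(\x)$, compute $\pr[y=1]$ by marginalizing, and divide. The paper's version simply substitutes $\E[f_{\vec v}(\x)] = 2\eps$ explicitly in the denominator, which is the same quantity; your sanity checks on positivity and normalization are fine but not strictly necessary.
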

Fix $\vec v,\vec u\in\{\pm 1\}^d$. We associate each $\vec v$ and $\vec u$ to a distribution $D_{\vec v}$ and $D_{\vec u}$, constructed as above. The following lemma provides explicit bounds on the correlation between the distributions $D_{\vec v}$ and $D_{\vec u}$, and its proof can be found in \Cref{app:lower-bound}.

Recall that the pairwise correlation of two distributions with pmfs
$D_1, D_2$ with respect to a distribution with pmf $D$ 
is defined as $\chi_{D}(D_1, D_2) + 1 \eqdef \sum_{x\in\mathcal{X} } D_1(x) D_2(x)/D(x)$ (see \Cref{def:pc}). We have the following lemma:

\begin{restatable}{lemma}{lemABCorrelation}\label{lem:a-b-correlation}
Let $D_0$ be a product distribution over 
$\mathcal{U}_d\times \{0,1\}$, 
where $\pr_{(\x,y)\sim D_0}[y=1]=\pr_{(\x,y)\sim D_{\vec v}}[y=1]$.  
We have $\chi_{D_0}(D_{\vec v},D_{\vec u})\leq 2(1-2\eta)(\E[f_{\vec v}(\x)f_{\vec u}(\x)]-\E[f_{\vec v}(\x)]\E[f_{\vec u}(\x)])$ and  
$\chi^2(D_{\vec v},D_0)\leq (1-2\eta)(\E[f_{\vec v}(\x)]-\E[f_{\vec v}(\x)]^2)$.
\end{restatable}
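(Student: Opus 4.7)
My plan is to derive both bounds from a single closed-form expression for $\chi_{D_0}(D_{\vec v},D_{\vec u})$. The first step is to rewrite the joint densities in a workable form: combining \Cref{clm:pmf-conditional} with $D_{\vec v}(\x,1)=A_{\vec v}(\x)\,\pr_{D_{\vec v}}[y=1]$ and $D_{\vec v}(\x,0)=B_{\vec v}(\x)\,\pr_{D_{\vec v}}[y=0]$ yields the clean factorizations
\begin{equation*}
    D_{\vec v}(\x,1)=(\eta+(1-2\eta)f_{\vec v}(\x))\,\mathcal{U}_d(\x),\qquad D_{\vec v}(\x,0)=(1-\eta-(1-2\eta)f_{\vec v}(\x))\,\mathcal{U}_d(\x),
\end{equation*}
and analogously for $D_{\vec u}$. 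Setting $\mu:=\E_{\x\sim\mathcal{U}_d}[f_{\vec v}(\x)]=\E_{\x\sim\mathcal{U}_d}[f_{\vec u}(\x)]=2\eps$, the matched-marginal product baseline takes the form $D_0(\x,y)=q_y\,\mathcal{U}_d(\x)$ with $q_1=\eta+(1-2\eta)\mu$ and $q_0=1-q_1$.

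Next I would plug these into the definition $\chi_{D_0}(D_{\vec v},D_{\vec u})+1=\sum_{\x,y}D_{\vec v}(\x,y)D_{\vec u}(\x,y)/D_0(\x,y)$, split by $y\in\{0,1\}$, and turn each block into an expectation under $\mathcal{U}_d$ of a product of affine-in-$f$ functions. Expanding each product and using $\E[f_{\vec v}]=\E[f_{\vec u}]=\mu$ to combine the constant and linear terms, each $y$-block contributes $q_y^2+(1-2\eta)^2(\E[f_{\vec v}f_{\vec u}]-\mu^2)$ before division by $q_y$. Summing over $y$, using $q_0+q_1=1$ so that $q_0^{-1}+q_1^{-1}=(q_0 q_1)^{-1}$, and subtracting the $+1$ from the definition yields the key identity
\begin{equation*}
    \chi_{D_0}(D_{\vec v},D_{\vec u})\;=\;\frac{(1-2\eta)^2}{q_0 q_1}\bigl(\E[f_{\vec v}(\x)f_{\vec u}(\x)]-\E[f_{\vec v}(\x)]\E[f_{\vec u}(\x)]\bigr).
\end{equation*}

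The first inequality then reduces to the elementary prefactor bound $(1-2\eta)^2/(q_0 q_1)\le 2(1-2\eta)$, equivalently $q_0 q_1\ge (1-2\eta)/2$, which holds in the relevant regime $\eta=1/3$ and $\mu\in(0,1)$ since $q_0 q_1=(1+\mu)(2-\mu)/9\ge 2/9\ge 1/6$. For the second inequality, I specialize the same identity to $\vec u=\vec v$: since $f_{\vec v}\in\{0,1\}$ gives $f_{\vec v}^2=f_{\vec v}$, and since $\chi^2(D_{\vec v},D_0)=\chi_{D_0}(D_{\vec v},D_{\vec v})$ by definition of $\chi^2$-divergence, I obtain $\chi^2(D_{\vec v},D_0)=(1-2\eta)^2(\E[f_{\vec v}]-\E[f_{\vec v}]^2)/(q_0 q_1)$, and the same prefactor estimate delivers the claimed bound (with any absolute constant absorbed as in the first part). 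The only real work here is the algebraic bookkeeping in the expansion step and verifying that the $q_y^2$-pieces cancel cleanly so that only the covariance term survives scaled by $(1-2\eta)^2/(q_0 q_1)$; the prefactor estimate $q_0 q_1=\Omega(1-2\eta)$ uniformly over $\mu\in(0,1)$ is immediate from $q_y\in[\min(\eta,1-\eta),\max(\eta,1-\eta)]$ and concavity of $x\mapsto x(1-x)$. I expect no substantive obstacle.
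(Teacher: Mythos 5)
Your approach is essentially the same as the paper's: condition on $y$, express each conditional's chi-squared as a covariance of the affine-in-$f$ densities, and recombine. The single closed-form identity
\[
\chi_{D_0}(D_{\vec v},D_{\vec u})=\frac{(1-2\eta)^2}{q_0 q_1}\bigl(\E[f_{\vec v}f_{\vec u}]-\E[f_{\vec v}]\E[f_{\vec u}]\bigr)
\]
is correct and is in fact a cleaner packaging of the paper's $A_{\vec v}/B_{\vec v}$ split (whose intermediate claim drops one $(1-2\eta)$ factor in the numerator; with the exact $(1-2\eta)^2$, the two presentations agree). Your first prefactor estimate $q_0 q_1\ge 2/9\ge(1-2\eta)/2$ for $\eta=1/3$ is right and yields the first inequality.

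One point to flag: the second inequality as stated requires $(1-2\eta)^2/(q_0 q_1)\le 1-2\eta$, i.e.\ $q_0 q_1\ge 1-2\eta$. For $\eta=1/3$ this means $q_0 q_1\ge 1/3$, but $q_0 q_1\le 1/4$ always (since $q_0+q_1=1$), so the stated constant cannot be obtained from this identity --- your argument actually gives $\chi^2(D_{\vec v},D_0)\le 2(1-2\eta)(\E[f_{\vec v}]-\E[f_{\vec v}]^2)$, a factor~$2$ off. The paper's own proof has the same issue, and the factor is harmless downstream (the lemma is used only via $\chi^2(D_{\vec v},D_0)\le O((1-2\eta)\eps)$), but your ``any absolute constant absorbed'' caveat is not just bookkeeping --- it is load-bearing. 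Also, your parenthetical remark that $q_0 q_1=\Omega(1-2\eta)$ ``uniformly'' over $\eta$ is too strong: $q_0 q_1\ge\eta(1-\eta)$, which is $o(1-2\eta)$ as $\eta\to 0$; the bound is specific to $\eta$ bounded away from $0$, which is fine here since $\eta=1/3$ throughout.
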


To bound the correlation between $f_{\vec v},f_{\vec u}$, we use the following key lemma whose proof can be found in \Cref{ssec:proof-cor}.

\begin{lemma}[Correlation Bound]\label{lemma:correlation-bound}
    Let $\vec v,\vec u\in\{\pm 1\}^d$ and 
    $f_{\vec v}(\x)=\1\{\vec v\cdot \x \geq 2t-d\}$. 
    Choose $t$ so that $\E[f_{\vec v}(\x)]=\eps$ for $\eps \in (0,1)$. Assume that $|\vec v\cdot \vec u|\leq O(d/\polylog(d/\eps))$. Then there is an absolute constant $C>0$ such that 
$
   \E[ f_{\vec v}(\x)f_{\vec u}(\x)]\leq C\log^2(d/\eps)\eps^2 |\vec v\cdot \vec u|/d+\eps^2\;.
$
\end{lemma}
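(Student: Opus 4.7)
The plan is to use Fourier analysis on $\{\pm 1\}^d$. After the substitution $y_i = v_i x_i$ (which preserves the uniform distribution $\mathcal U_d$), I may assume $\vec v = \vec 1$; let $\vec w = \vec v \odot \vec u$ and $j = |\{i : w_i = -1\}|$ be the Hamming distance between $\vec v$ and $\vec u$, so that $\vec v \cdot \vec u = d - 2j =: s$. Then $f_{\vec v}(\vec x) = g(\vec y) := \1\{\sum_i y_i \geq 2t-d\}$ and $f_{\vec u}(\vec x) = g(\vec w \odot \vec y)$. Since $g$ is symmetric in its coordinates, $\hat g(S)$ depends only on $|S|$; write $\hat g(S) = c_{|S|}$. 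Because $\sum_{|S|=k} \chi_S(\vec w) = K_k(j;d)$ is the Krawtchouk polynomial of degree $k$ at $j$, Parseval yields
\begin{equation*}
\E[f_{\vec v}(\vec x)\, f_{\vec u}(\vec x)] \;=\; \sum_{k=0}^d c_k^2\, K_k(j;d),
\end{equation*}
and the $k = 0$ term equals $c_0^2 = \eps^2$. The task therefore reduces to showing $\sum_{k \geq 1} c_k^2\, K_k(j;d) = O(\log^2(d/\eps)\, \eps^2\, |s|/d)$.

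The first ingredient is the closed form $c_k = 2^{-d} \sum_{\ell \leq d-t} K_k(\ell;d)$ provided by \Cref{claim:fourier-coef}, obtained by expanding $\hat g(S) = \E[g(\vec y) \chi_S(\vec y)]$ and using the standard identity expressing sums of $\chi_S$ over level sets as Krawtchouk values. Via generating-function manipulations this yields sharp bounds on $|c_k|$ in terms of $\eps$, $d$, and $k$, controlling the Fourier mass of $g$ at each degree beyond the trivial Parseval budget $\sum_k c_k^2 \binom{d}{k} = \E[g^2] = \eps$.

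For the main bound I would split the sum at $k^\star \asymp \log(d/\eps)$. For $k \leq k^\star$ (\Cref{clm:lower-bound-small-k}), I would use the generating-function representation $K_k(j;d) = [z^k](1-z^2)^{(d-s)/2}(1+z)^s$ and Taylor-expand $(1+z)^s = 1 + sz + \binom{s}{2} z^2 + \cdots$ in the small parameter $s$. The zeroth-order piece is the value of the correlation at exactly orthogonal vectors; the key observation is that this piece telescopes into a simpler Krawtchouk expression (the ``cancellation'' in the paper's overview), from which the next-order term supplies the crucial factor of $|s|/d$. The resulting ``remaining'' Krawtchouk sum can then be bounded term-by-term using the $|c_k|$ bounds from \Cref{claim:fourier-coef}. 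For $k > k^\star$ (\Cref{clm:lower-bound-large-k}), the factor $(1-z^2)^{(d-s)/2}$ forces $K_k(j;d)$ to be small relative to $\binom{d}{k}$ (each extracted $z^2$ loses a factor of order $1/d$), and combining this with Parseval, $\sum_k c_k^2 \binom{d}{k} = \eps$, shows the tail is dominated by the target.

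The main obstacle will be the small-$k$ analysis. Individual Krawtchouk values $K_k(d/2;d)$ are not small --- they are $\pm\binom{d/2}{k/2}$ for even $k$ --- so the $|s|/d$ factor cannot come from pointwise bounds on $K_k(j;d)$ alone; it must arise from the interaction between the $s$-expansion of $K_k(j;d)$ and the spectrum of $g$. Tracking these cancellations uniformly in $k \leq k^\star$, while keeping the combinatorial losses in the $|c_k|$ bounds confined to a $\polylog(d/\eps)$ factor, is the delicate combinatorial step. The lemma's hypothesis $|\vec v \cdot \vec u| \leq O(d/\polylog(d/\eps))$ enters precisely here: it ensures that quadratic-in-$s$ and higher terms of the generating-function expansion are negligible compared to the linear one throughout the regime $k \leq k^\star$, so that the extracted factor $|s|/d$ truly governs the bound.
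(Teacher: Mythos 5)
Your framework matches the paper's: the Fourier decomposition $\E[f_{\vec v}f_{\vec u}]=\sum_k c_k^2\,K_k(j;d)$, the split of the sum at a threshold $k^\star\asymp\log(d/\eps)$, and the treatment of the tail $k>k^\star$ via Krawtchuk decay plus Parseval (this is \Cref{clm:lower-bound-large-k}) are all essentially what the paper does (the paper carries a $\binom{d}{k}$ and the \emph{normalized} Krawtchuk $\mathcal K(d,m,k)=K_k(j;d)/\binom{d}{k}$ where you carry the unnormalized $K_k(j;d)$, but that is cosmetic, and your Fourier-coefficient formula for $c_k$ is the content of \Cref{claim:fourier-coef} up to an index swap $K_k(\ell;d)\leftrightarrow K_\ell(k;d)$).

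Where you genuinely diverge — and where your reasoning goes wrong — is the small-$k$ regime. You argue that ``individual Krawtchouk values $K_k(d/2;d)$ are not small, so the $|s|/d$ factor cannot come from pointwise bounds on $K_k(j;d)$ alone,'' and you propose instead to Taylor-expand the generating function $(1-z^2)^{(d-s)/2}(1+z)^s$ in $s$ and chase cancellations between the $s=0$ and linear-in-$s$ pieces against the spectrum of $g$. This is based on a misreading of what pointwise bound is needed. The object the paper bounds pointwise is the \emph{normalized} Krawtchuk $\mathcal K(d,m,k)=K_k(j;d)/\binom{d}{k}$, which, unlike $K_k(d/2;d)=\pm\binom{d/2}{k/2}$, \emph{is} small: \Cref{lem:propertiesKravchuk} (imported from Claim~22 of \citet{BIJL21}) gives
\[
|\mathcal K(d,m,k)|\le e^k2^{3k}\bigl((k/d)^{k/2}+(|d/2-m|/d)^k\bigr),
\]
and the second term is precisely $(|\vec v\cdot\vec u|/(2d))^k\le a$ for every $k\ge 1$, which is the $|s|/d$ factor you were hunting for. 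The first term, $(k/d)^{k/2}$, is the $s=0$ baseline you were worried about; it is absorbed by the level-$k$ Fourier-mass bound coming out of \Cref{claim:fourier-coef} (level-$k$ mass is $\lesssim\polylog^{k}(d/\eps)\,\eps^2$, not merely $\le\eps$), so that after multiplying, the $(k/d)^{k/2}$ contribution is $O(\eps^2\polylog/\sqrt d)\ll\eps^2$. The assumption $|\vec v\cdot\vec u|\le O(d/\polylog(d/\eps))$ is used exactly where you predicted — to make the geometric series $\sum_{k\ge1}(\polylog)^k a^k$ dominated by its $k=1$ term — but the $|s|/d$ dependence comes from a single pointwise inequality, not from telescoping a Taylor series in $s$. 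Your Taylor-expansion route might be made to work, but you would then have to analyze $[z^k](1-z^2)^{d/2}\operatorname{artanh}(z)$ and control the orthogonal-case correlation $\sum_k c_k^2 K_k(d/2;d)$ directly, which is more work and the part you yourself flagged as the ``delicate combinatorial step.'' The short route is to use the $a$-sensitive pointwise Krawtchuk inequality, which already encodes the cancellation the paper's overview alludes to.
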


The following fact states that there exists a large set of almost orthogonal vectors in $\{\pm 1\}^d$. Its proof can be found in \Cref{app:lower-bound}.

\begin{restatable}{fact}{factNearOrthVecDisc}\label{fact:near-orth-vec-disc}
Let $d \in \Z_+$. Let $0<c<1/2$.
There exists a collection $\cal{S}$ of $2^{\Omega(d^c)}$ vectors in $\{\pm 1\}^d$,
such that any pair $\vec v, \vec u\in\cal{S}$, with $\vec v \neq \vec u$, satisfies $|\vec v\cdot \vec u|<d^{1/2+c}$.
\end{restatable}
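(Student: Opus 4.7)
The plan is a standard probabilistic existence argument: sample $N$ vectors uniformly and independently from $\{\pm 1\}^d$, apply Hoeffding's inequality to control the pairwise inner products, and take a union bound over the $\binom{N}{2}$ pairs. Choosing $N = 2^{\Omega(d^c)}$, the union bound will still come out below $1$, giving the desired collection.

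\medskip

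First I would observe that for two independent uniformly random $\vec v, \vec u \in \{\pm 1\}^d$, conditioning on $\vec v$ and using the fact that each coordinate $u_i$ is independent and uniform on $\{\pm 1\}$, the product $v_i u_i$ is also uniform on $\{\pm 1\}$, and these products are independent across $i$. Hence $\vec v \cdot \vec u = \sum_{i=1}^d v_i u_i$ is a sum of $d$ i.i.d.\ Rademacher variables. By Hoeffding's inequality, for any $t>0$,
$$\Pr\big[\,|\vec v \cdot \vec u| \geq t\,\big] \leq 2\exp\!\big(-t^2/(2d)\big).$$
Taking $t = d^{1/2+c}$ yields $\Pr[|\vec v \cdot \vec u| \geq d^{1/2+c}] \leq 2\exp(-d^{2c}/2)$.

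\medskip

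Next, I would sample $N$ vectors $\vec v^{(1)}, \dots, \vec v^{(N)}$ i.i.d.\ uniform on $\{\pm 1\}^d$, with $N = \lfloor 2^{d^c/4} \rfloor$. A union bound over the $\binom{N}{2}$ unordered pairs gives that the probability there exists a pair $i \neq j$ with $|\vec v^{(i)} \cdot \vec v^{(j)}| \geq d^{1/2+c}$ is at most
$$\binom{N}{2}\cdot 2\exp(-d^{2c}/2) \;\leq\; N^2 \exp(-d^{2c}/2) \;\leq\; \exp\!\big((\ln 2)\, d^c/2 \,-\, d^{2c}/2\big).$$
For $d$ sufficiently large (in terms of the constant $c>0$), we have $d^{2c} \gg d^c$, so this quantity is strictly less than $1$. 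Therefore a realization exists for which all pairs satisfy $|\vec v^{(i)} \cdot \vec v^{(j)}| < d^{1/2+c}$; this automatically forces the chosen vectors to be pairwise distinct, since any two equal vectors would have inner product exactly $d$. Taking $\mathcal{S} = \{\vec v^{(1)}, \dots, \vec v^{(N)}\}$ gives a set of $N = 2^{\Omega(d^c)}$ vectors with the required property.

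\medskip

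There is no real obstacle here: the argument is the classical random construction of almost-orthogonal binary codes, and the only calculation to check is that the exponents balance, namely $(\ln 2)\, d^c/2 < d^{2c}/2$ for large $d$, which holds for any fixed $c > 0$. For small values of $d$ the statement is vacuous (we can simply pick $\mathcal{S}$ to contain a single vector, since $2^{\Omega(d^c)} = O(1)$ in that regime), so no separate treatment is needed.
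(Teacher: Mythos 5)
Your proof is correct and takes essentially the same route as the paper's: both sample vectors uniformly at random from $\{\pm 1\}^d$, apply a Hoeffding/Rademacher tail bound to get $\Pr[|\vec v\cdot\vec u|\ge d^{1/2+c}]\le 2\exp(-d^{2c}/2)$, and then invoke a union bound over pairs to conclude existence of $2^{\Omega(d^c)}$ such vectors. You merely spell out the union-bound arithmetic in slightly more detail than the paper does.
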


By \Cref{lemma:correlation-bound}, we get that for any two 
vectors $\vec v,\vec u\in\{\pm 1\}^d$, we have that 
$\chi_{D_0}(D_{\vec v},D_{\vec u})\leq C\log^2(d/\eps)(1-2\eta)\eps^2|\vec v\cdot \vec u|/d$ and  $\chi^2(D_{\vec v},D_0)\leq (1-2\eta)4\eps$ for some $C>0$.

By \Cref{fact:near-orth-vec-disc}, for any $0<c<1/2$, 
there exists a set $S$ of $2^{\Omega(d^c)}$ vectors 
such that for any two vectors $\vec v,\vec u\in S$, 
we have that $|\vec v\cdot \vec u|/d\leq d^{c-1/2}$. 
Denote by $\mathcal{D}$ the set containing the distributions 
$D_{\vec v}$ for each $\vec v\in S$ and let $D_0$ 
be a product distribution over $\mathcal{U}_d\times \{0,1\}$, where $\pr_{(\x,y)\sim D_0}[y=1]=\eta+2(1-2\eta)\eps$. 
By standard results (see \Cref{lem:sq-from-pairwise}),
for the decision problem $\mathcal{B}(\mathcal{D},D_0)$ 
of distinguishing between a distribution in $\mathcal{D}$ 
and the reference distribution $D_0$, the following holds:
any SQ algorithm  either requires a query of tolerance at most 
$O(\eps d^{c/2-1/4})$ or needs to make at least $2^{\Omega(d^c)}$ many queries.

It remains to reduce the testing (decision) problem above to the 
learning problem. 
This is standard, but we include it here for completeness. 
Suppose we have 
access to an algorithm $\cal A$ that solves the 
RCN problem with margin $\gamma$ to excess error 
$\eps'>0$. For the distributions in the set $\mathcal{D}$ of hard distributions,
the margin $\gamma$ is $1/(2d)$. We describe how algorithm  
$\cal A$ can be used to solve the testing problem 
$\mathcal{B}(\mathcal{D},D_0)$. If the underlying distribution were  
$D_{\vec v}$ for some $\vec v\in\{\pm 1\}^d$, then 
algorithm $\cal A$ would produce a hypothesis $h$ such that 
$\pr_{(\x,y)\sim D_{\vec v}}[h(\x)\neq y]\leq \eta+\eps'$. If 
the underlying distribution were $D_0$ --- i.e., the one with 
independent labels --- then the best attainable error would be  
$\eta+2(1-2\eta)\eps$ (achieved by the constant hypothesis $h(\x) \equiv 1$). 
Therefore, for $\eta=1/3$ and $\eps'=\eps/4$, algorithm 
$\cal A$ solves the decision problem $\mathcal{B}(\mathcal{D},D_0)$.
This completes the proof of \Cref{thm:sq-theorem}.

\subsection{Proof of \Cref{lemma:correlation-bound}}\label{ssec:proof-cor}

We start with some definitions of the Fourier transform over the uniform distribution on the hypercube.
For a subset $T\subseteq[d]$ and $\bx \in \{\pm 1\}^d$, we denote $\chi_T(\bx) = {\prod_{i \in T}\x_i}$.
For a  function $f$ from $\{\pm 1\}^d$, let $\wh{f}(T) = \E[f(\vec x)\chi_T(\vec x)]$. For a boolean function $f:\{\pm 1\}^d\mapsto \{0,1\}$, we can write $f$ in the Fourier basis as follows, $f(\x)=\sum_{T\subseteq [d]} \E[f(\vec z)\chi_T(\vec z)]\chi_T(\x)=\sum_{T\subseteq [d]} \wh{f}(T)\chi_T(\x)$. Note that $\chi_T(\x)$ is an orthonormal polynomial basis under the uniform distribution over $\{\pm 1\}^d$; this means that $\E[\chi_T(\x)\chi_{T'}(\x)]= \delta_{T,T'}$, where $ \delta$ is the Kronecker delta. Further, for any two functions $f_1,f_2:\{\pm 1\}^d\mapsto \{0,1\}$, we have that $\E[f_1(\x)f_2(\x)]=\sum_{T\subseteq [d]} \wh{f_1}(T)\wh{f_2}(T)$. We also define the normalized Kravchuk polynomials as follows.

\begin{definition}[Normalized Kravchuk Polynomials] \label{def:KravSym}
For $n,a,b \in \Z_+$ with $0\le a,b\le n$, 
the normalized Kravchuk polynomial $\mathcal{K}(n,a,b)$ 
is defined by 
\[\mathcal{K}(n,a,b):= \frac{1}{\binom{n}{a}\binom{n}{b}}\sum_{A\subseteq[n],B\subseteq[n],|A|=a,|B|=b} (-1)^{|A\cap B|}\;.
\]
\end{definition}
One can think of the normalized Kravchuk polynomial $\mathcal{K}(n,a,b)$
as the expectation over the random subsets $A,B$ of size $a$ and $b$ 
of $-1$ to the number of elements in the intersection of $A$ and $B$. 
Note that for $n,a,b \in \Z_+$ $\mathcal{K}(n,a,b)=\mathcal{K}(n,b,a)$ 
and $|\mathcal{K}(n,a,b)|=|\mathcal{K}(n,a,n-b)|$.
Furthermore, by definition, it also holds that $|\mathcal{K}(n,a,b)|\leq 1$.
The proof of the following lemma can be found in \Cref{app:lower-bound}.
\begin{restatable}{lemma}{lemPropertiesKravchuk}\label{lem:propertiesKravchuk}
    Let $d,m,k\in \Z$. Then the following hold:
    \begin{enumerate}
        \item For $k\leq d/2$, it holds  
     $ \left|\mathcal{K}(d,m,k)\right|\leq e^k2^{3k}\big((kd^{-1})^{k/2}+(|d/2-m|d^{-1})^k\big)\;.
        $
         \item If $k\leq d/2$ and $|d/2-m|\leq d/4$, then  $|\mathcal{K}(d,m,k)|= \exp(-\Omega(k))$.
    \end{enumerate}
\end{restatable}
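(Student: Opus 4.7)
My plan is to derive both bounds from the generating function identity
$$\binom{d}{k}\mathcal{K}(d,m,k) = K_k(m;d) = [z^k](1-z)^m(1+z)^{d-m},$$
applying Cauchy's integral formula on a contour $|z|=r$ with $r \in (0,1)$ to be chosen adaptively. By the symmetry $\mathcal{K}(d,d-m,k)=(-1)^k\mathcal{K}(d,m,k)$ (a direct consequence of substituting $z\mapsto -z$ in the generating function), I may assume WLOG that $\delta := d/2 - m \geq 0$, so $\delta = |d/2-m|$. Factoring
$$(1-z)^m(1+z)^{d-m} = (1-z^2)^{d/2}\bigl((1+z)/(1-z)\bigr)^{\delta}$$
and using the circle-maxima $|1-z^2|\leq 1+r^2$ and $|(1+z)/(1-z)|\leq (1+r)/(1-r)$ (both attained at $z=\pm r$), I obtain the master bound
$$|\mathcal{K}(d,m,k)| \;\leq\; \frac{r^{-k}(1+r^2)^{d/2}\bigl((1+r)/(1-r)\bigr)^{\delta}}{\binom{d}{k}}.$$

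For part (1), I combine this with the elementary lower bound $\binom{d}{k}\geq (d/k)^k$ (valid for $k\leq d/2$ since each factor in $\prod_{j=0}^{k-1}(d-j)/(k-j)$ is at least $d/k$) and choose $r$ adaptively: if $\delta\leq\sqrt{kd}$ I take $r = \sqrt{k/d}/2$, and if $\delta>\sqrt{kd}$ I take $r = k/(4\delta)$. Each choice keeps $r\leq 1/2$, so the elementary inequality $\log((1+r)/(1-r))\leq 3r$ applies, and routine substitution produces bounds of the form $C_1^k(k/d)^{k/2}$ and $C_2^k(\delta/d)^k$ respectively, with $C_1 = 2e^{13/8}$ and $C_2 = 4e^{25/32}$. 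A quick numerical check shows both are at most $8e$, so the two contributions fit within $e^k2^{3k}(k/d)^{k/2}$ and $e^k 2^{3k}(\delta/d)^k$; summing yields part~(1).

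For part~(2), under the extra hypothesis $\delta\leq d/4$, the bound of part~(1) fails to decay exponentially when $k$ is a constant fraction of $d$. I refine using two extra ingredients. First, the sharper Stirling estimate $\binom{d}{k}\geq 2^{dH(k/d)}/\sqrt{8k(d-k)/d}$ (with $H$ the binary entropy) paired with the saddle-point radius $r^\ast = \sqrt{k/(d-k)}$ of the $\delta=0$ problem collapses the master bound to $|\mathcal{K}(d,d/2,k)|\lesssim \sqrt{d}\cdot 2^{-dH(k/d)/2}$, which is $\exp(-\Omega(k))$ because $H(p)\geq p$ on $[0,1/2]$. Second, to handle $\delta>0$, I invoke the Parseval-type identity
$$\sum_{m=0}^d \binom{d}{m}\,\mathcal{K}(d,m,k)^2 \;=\; \frac{2^d}{\binom{d}{k}},$$
obtained by Fourier-analyzing the uniform distribution on weight-$k$ Boolean strings (its Fourier coefficients are exactly the $\mathcal{K}(d,|S|,k)$). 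This yields $|\mathcal{K}(d,m,k)|\leq 2^{d/2}/\sqrt{\binom{d}{k}\binom{d}{m}}$, and for $m/d\in[1/4,3/4]$ we have $H(m/d)\geq H(1/4)>0.8$, so combined with $H(k/d)\geq k/d$ this delivers $\exp(-\Omega(k))$ outside a residual regime of very small $k$. In that residual regime the standard binomial approximation to the hypergeometric variable $H\sim\mathrm{Hypergeom}(d,m,k)$ (whose parity expectation equals $\mathcal{K}(d,m,k)$) gives $\mathcal{K}\approx (2\delta/d)^k$ up to an $O(k^2/d)$ correction, and $(2\delta/d)^k\leq 2^{-k}$ by the hypothesis $\delta\leq d/4$.

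The main obstacle is stitching together the three regimes (entropic saddle-point, Parseval, and binomial approximation) uniformly across $k\leq d/2$ and $\delta\leq d/4$; the most delicate case is when $k$ is close to $d/2$ and $\delta$ is close to $d/4$, where the natural saddle-point radius $r^\ast\to 1$ collides with the singularity of $((1+z)/(1-z))^{\delta}$ at $z=1$. Fortunately, this is exactly the regime where the entropic gain $dH(k/d)/2\cdot\ln 2$ is maximized, which should dominate the singular-factor cost after a careful contour perturbation $r = r^\ast(1-\eta)$ with $\eta$ tuned accordingly.
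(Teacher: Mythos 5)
Your part (1) is correct: the contour-integral (max-modulus) bound $|\mathcal{K}(d,m,k)|\le r^{-k}(1+r^2)^{d/2}\big((1+r)/(1-r)\big)^{|d/2-m|}/\binom{d}{k}$ together with $\binom{d}{k}\ge (d/k)^k$ and your two adaptive choices of $r$ does give constants $2e^{13/8}$ and $4e^{25/32}$, both below $8e$, so the stated inequality follows. This is a genuinely different (and more self-contained) route than the paper, which simply quotes a Kravchuk estimate from prior work (Claim 22 of BIJL21) and divides by $\binom{d}{k}\ge(d/k)^k$; in effect you reprove that black-box estimate. The Parseval identity $\sum_m\binom{d}{m}\mathcal{K}(d,m,k)^2=2^d/\binom{d}{k}$ you invoke for part (2) is also correct.

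Part (2), however, has a genuine gap in the intermediate range of $k$. Your Parseval bound $|\mathcal{K}(d,m,k)|\le 2^{d/2}/\sqrt{\binom{d}{k}\binom{d}{m}}$ only decays when $H(k/d)>1-H(1/4)\approx 0.19$, i.e.\ when $k\ge \epsilon_0 d$ for an absolute constant $\epsilon_0$ (roughly $0.03$); so the ``residual regime'' is not ``very small $k$'' but all of $k\le \epsilon_0 d$. In that regime your part (1) bound does not help when $|d/2-m|$ is as large as $d/4$, since its second term is $e^k2^{3k}(1/4)^k=(2e)^k$, which grows. And the hypergeometric-to-binomial comparison you propose there gives (via coupling/total variation) only an \emph{additive} error $O(k^2/d)$ on top of $(2|d/2-m|/d)^k\le 2^{-k}$; an additive $k^2/d$ is far larger than $e^{-\Omega(k)}$ once $k\gg\log d$, so the argument fails for $\log d\ll k\le\epsilon_0 d$ (a multiplicative version of the approximation is false, e.g.\ at $m=d/2$ the binomial prediction is identically $0$ while $\mathcal{K}(d,d/2,k)\neq 0$ for even $k$). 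Also, the case you flag as most delicate, $k\approx d/2$ with $|d/2-m|\approx d/4$, is actually the easy one (Parseval handles it outright), and no circle-contour perturbation can rescue the max-modulus bound there: at $k=d/2$, $|d/2-m|=d/4$ the minimized exponent $\min_r\{-\tfrac12\ln r+\tfrac12\ln(1+r^2)+\tfrac14\ln\tfrac{1+r}{1-r}\}\approx 0.733$ exceeds $\ln 2$, so the bound cannot decay. The fixes are either to optimize your own master bound with the sharp entropy form of $\binom{d}{k}$ (this does give $\exp(-\Omega(k))$ for all $k\le 0.4d$, $|d/2-m|\le d/4$, which meshes with Parseval), or to argue as the paper does for $k=\Omega(d)$: write $\mathcal{K}(d,m,k)=\E_{A,B}[(-1)^{|A\cap B|}]$, condition on $A$ being a union of pairs $\{2i-1,2i\}$-representatives, observe the conditional expectation vanishes unless $B$ hits every pair in $0$ or $2$ points, and show via Poissonization that this event has probability $\exp(-\Omega(k))$.
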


For a vector $\vec v\in\{\pm 1\}^d$, we define the boolean function $f_{\vec v}(\x)=\1\{\vec v\cdot \x \geq t\}$. We first calculate the Fourier transform of $f_{\vec v}(\x)$. The
proof can be found in \Cref{app:lower-bound}.

\begin{restatable}[Fourier Coefficients]{claim}{claimFourierCoef}\label{claim:fourier-coef}
    Fix vector $\vec v\in\{\pm 1\}^d$ and let $f_{\vec v}(\x)=\1\{\vec v\cdot \x \geq t\}$. 
    For $T\subseteq[d]$, we have that the Fourier coefficient of $f$ at $\chi_{T}(\x)$, 
    i.e., $\wh{f}(T)$, is given by  
 $$\wh{f}(T)=\E[f_{\vec v}(\x)\chi_T(\x)]=\chi_T(\vec v)(-1)^{|T|}2^{-d}\sum_{s=t}^d \binom{d}{s}\mathcal{K}(d,s,|T|)\;.
    $$
\end{restatable}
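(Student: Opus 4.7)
The plan is to reduce the computation to the base case $\vec v = \vec 1$ by a change of variables, then count explicitly using the Kravchuk polynomial. The key observation is that the map $\x \mapsto \y$ defined by $y_i := v_i x_i$ is a bijection of $\{\pm 1\}^d$ to itself that preserves the uniform distribution, satisfies $\vec v \cdot \x = \vec 1 \cdot \y$ (since $v_i^2 = 1$), and transforms the character $\chi_T$ cleanly via $\chi_T(\x) = \prod_{i \in T} v_i y_i = \chi_T(\vec v)\,\chi_T(\y)$. Substituting this into the definition of $\wh{f}(T)$ immediately pulls out the factor $\chi_T(\vec v)$, reducing the task to computing $\E_\y[\1\{\vec 1 \cdot \y \geq \theta\}\chi_T(\y)]$ for the original threshold $\theta$.

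For the second step, I would parameterize $\y$ by the set $S = \{i : y_i = +1\}$, so that $\vec 1 \cdot \y = 2|S| - d$ and $\chi_T(\y) = (-1)^{|T \setminus S|} = (-1)^{|T|} (-1)^{|T \cap S|}$. Writing $t := \lceil (d+\theta)/2\rceil$ for the induced threshold on $|S|$, the expectation becomes
$$\E_\y[\1\{\vec 1\cdot\y \geq \theta\}\chi_T(\y)] = \frac{(-1)^{|T|}}{2^d} \sum_{s=t}^{d} \sum_{|S|=s} (-1)^{|T \cap S|}.$$
Here I would invoke the definition of the normalized Kravchuk polynomial (\Cref{def:KravSym}): by permutation symmetry, the inner sum $\sum_{|S|=s}(-1)^{|T\cap S|}$ depends only on $|T|$, and averaging the definition over all $T$ with $|T|$ fixed gives the identity $\sum_{|S|=s}(-1)^{|T\cap S|} = \binom{d}{s}\mathcal{K}(d,s,|T|)$.

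Assembling the pieces yields exactly
$$\wh{f}(T) = \chi_T(\vec v)(-1)^{|T|}\,2^{-d}\sum_{s=t}^{d}\binom{d}{s}\mathcal{K}(d,s,|T|),$$
where $t$ is the induced integer threshold on the count of agreements between $\x$ and $\vec v$ (matching the summation index convention used in the claim). No step is a genuine obstacle; the only mild subtlety is keeping the bookkeeping between the threshold $\theta$ on $\vec v \cdot \x$ and the equivalent integer threshold $t$ on the number of $+1$ coordinates of $\y$ consistent, which is handled by a single change-of-variable. This entire argument is a direct expansion plus the interpretation of the Kravchuk polynomial as an expectation of $(-1)^{|A \cap B|}$ over uniformly random sets of fixed size.
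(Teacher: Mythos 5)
Your proof is correct and follows essentially the same route as the paper's: your change of variables $y_i := v_i x_i$ is the same as the paper's reparametrization of the sum by the agreement set $A = \{i : x_i = v_i\}$ (indeed your $S$ coincides with $A$), and both proofs identify the inner sum $\sum_{|S|=s}(-1)^{|T\cap S|}$ with $\binom{d}{s}\mathcal K(d,s,|T|)$ by averaging over the choice of $T$. You are slightly more careful about the bookkeeping between the real-valued threshold on $\vec v\cdot\x$ and the integer threshold $t$ on the agreement count than the paper's statement, which silently reuses the symbol $t$ for both.
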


\noindent\textbf{Proof of \Cref{lemma:correlation-bound}}\quad 
Using \Cref{claim:fourier-coef}, we have that
    \begin{align*}
        \E[ f_{\vec v}(\x)f_{\vec u}(\x)]&= \sum_{T\subseteq[d]}\widehat{f}_{\vec v}(T)\widehat{f}_{\vec u}(T)
        =\sum_{k=0}^d\left(2^{-d}\sum_{s=t}^d\binom{d}{s}\mathcal K(d,k,s)\right)^2\sum_{T\subseteq[d],|T|=k}\chi_T(\vec v)\chi_T(\vec u)
        \\&=\sum_{k=0}^d\binom{d}{k}\left(2^{-d}\sum_{s=t}^d\binom{d}{s}\mathcal K(d,k,s)\right)^2\mathcal K(d,m,k)\;,
    \end{align*}
    where $m$ is the number of components for which $\vec v,\vec u$ agree. We proceed by bounding each term of this sum. To this end, we denote $R_{k}=\binom{d}{k}\left(2^{-d}\sum_{s=t}^d\binom{d}{s}\mathcal K(d,k,s)\right)^2 \mathcal{K}(d,k,m)$. First note that $R_0=\E[f_{\vec v}(\x)]^2$; to see this, observe that $R_0=\E[f_{\vec v}(\x)\chi_{\emptyset}(\x)]^2=\E[f_{\vec v}(\x)]^2$. Next, we bound $R_d$. We have the following claim, whose proof can be found in \Cref{app:lower-bound}.
    \begin{restatable}{claim}{clmLowerBoundEdge}\label{clm:lower-bound-edge}
        It holds that $|R_d|\leq 2^{-2d}\binom{d-1}{t-1}^2$.
    \end{restatable}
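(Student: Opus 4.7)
The plan is to evaluate the inner Kravchuk factor $\mathcal{K}(d,d,s)$ exactly using \Cref{def:KravSym}, which collapses because the only subset $A\subseteq[d]$ of size $d$ is $A=[d]$. Indeed, for such $A$ one has $|A\cap B|=|B|=s$ for every $B$ with $|B|=s$, so
\[
\mathcal{K}(d,d,s)=\frac{1}{\binom{d}{d}\binom{d}{s}}\sum_{B\subseteq[d],|B|=s}(-1)^{s}=(-1)^{s}.
\]
The same calculation gives $\mathcal{K}(d,d,m)=(-1)^{m}$, so in particular $|\mathcal{K}(d,d,m)|\le 1$.

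Substituting into the definition of $R_d$ yields
\[
R_d=2^{-2d}\left(\sum_{s=t}^{d}(-1)^{s}\binom{d}{s}\right)^{2}\mathcal{K}(d,d,m),
\]
so the claim reduces to proving $\bigl|\sum_{s=t}^{d}(-1)^{s}\binom{d}{s}\bigr|\le \binom{d-1}{t-1}$. I would establish the sharper identity
\[
\sum_{s=t}^{d}(-1)^{s}\binom{d}{s}=(-1)^{t}\binom{d-1}{t-1},
\]
which follows from a one-line telescoping argument using Pascal's rule $\binom{d}{s}=\binom{d-1}{s}+\binom{d-1}{s-1}$: the tail sum rewrites as a telescoping alternating sum whose only surviving term is $(-1)^{t}\binom{d-1}{t-1}$. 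Equivalently, one can invoke the classical identity $\sum_{s=0}^{k}(-1)^{s}\binom{d}{s}=(-1)^{k}\binom{d-1}{k}$ together with $\sum_{s=0}^{d}(-1)^{s}\binom{d}{s}=0$.

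Combining the three ingredients gives $|R_d|\le 2^{-2d}\binom{d-1}{t-1}^{2}\cdot 1$, as claimed. There is no real obstacle here; the only thing worth being careful about is the boundary case $t=0$, where both sides vanish under the usual convention $\binom{d-1}{-1}=0$, so the inequality holds trivially.
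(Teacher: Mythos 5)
Your proof is correct and follows essentially the same route as the paper's: evaluate $\mathcal{K}(d,d,s)=(-1)^s$ from the definition, bound $|\mathcal{K}(d,d,m)|\le 1$, and compute the alternating tail sum $\sum_{s=t}^d(-1)^s\binom{d}{s}=(-1)^t\binom{d-1}{t-1}$ via the same pair of classical binomial identities. The only difference is your added remark about the $t=0$ boundary case, which the paper leaves implicit.
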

Therefore, using that $\binom{d-1}{t-1}=\binom{d-1}{d-t-2}=(t/d)\binom{d}{t}$, we get that $|R_d|\leq (t/d)^2 (2^{-d}\binom{d}{t})^2\leq \eps^2 /d$. 
    We next bound $R_k$ for $k\in \{1,2, \dots, d-1\}$ (see \Cref{app:lower-bound} for the proof).
\begin{restatable}{claim}{clmLowerBoundLargeK} \label{clm:lower-bound-large-k}
    Let $c>0$ be a sufficiently large constant. We have that $ \sum_{k=c\log(d/\eps)}^{d-c\log(d/\eps)} R_k\leq \eps^2/d$.
\end{restatable}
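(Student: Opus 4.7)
The plan is to recognize that $R_k$ factors as $R_k = W_k \cdot \mathcal{K}(d,k,m)$, where $W_k := \binom{d}{k}\bigl(2^{-d}\sum_{s=t}^d\binom{d}{s}\mathcal{K}(d,k,s)\bigr)^2$ is precisely the level-$k$ Fourier mass of $f_{\vec v}$: by \Cref{claim:fourier-coef} together with $\chi_T(\vec v)^2 = 1$, summing $\widehat{f_{\vec v}}(T)^2$ over $T$ of size $k$ yields exactly $W_k$. Parseval then supplies the crucial ``Fourier budget''
\[
\sum_{k=0}^d W_k \;=\; \E[f_{\vec v}(\x)^2] \;=\; \E[f_{\vec v}(\x)] \;=\; \eps,
\]
using that $f_{\vec v}$ is $\{0,1\}$-valued.

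The heart of the argument is to show that the Kravchuk factor $|\mathcal{K}(d,k,m)|$ is uniformly tiny throughout the middle range $c\log(d/\eps)\leq k \leq d-c\log(d/\eps)$. First I would translate the hypothesis $|\vec v\cdot\vec u|\leq O(d/\polylog(d/\eps))$ into $|d/2-m| = |\vec v\cdot\vec u|/2 \ll d/4$, which places us squarely in the regime where \Cref{lem:propertiesKravchuk}(2) applies. For $k\leq d/2$, that part directly gives $|\mathcal{K}(d,k,m)|=\exp(-\Omega(k))$. For $k>d/2$ I would invoke the reflection identity $|\mathcal{K}(n,a,b)| = |\mathcal{K}(n,a,n-b)|$ noted after \Cref{def:KravSym} and apply the lemma with $k' = d-k \leq d/2$, obtaining $|\mathcal{K}(d,k,m)| = \exp(-\Omega(d-k))$.

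Combining the two halves, on the middle range one has
\[
|\mathcal{K}(d,k,m)| \;\leq\; \exp(-\Omega(\min(k,\,d-k))) \;\leq\; (\eps/d)^{\Omega(c)},
\]
and choosing the absolute constant $c$ sufficiently large forces this bound to be at most $\eps/d$. The claim then follows from the trivial estimate
\[
\Big|\sum_{k=c\log(d/\eps)}^{d-c\log(d/\eps)} R_k\Big| \;\leq\; \Big(\max_k |\mathcal{K}(d,k,m)|\Big)\sum_{k=0}^d W_k \;\leq\; \frac{\eps}{d}\cdot \eps \;=\; \frac{\eps^2}{d}.
\]

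The main obstacle is merely bookkeeping: checking that the hypothesis on $|\vec v\cdot\vec u|$ is strong enough that the ``bulk'' condition $|d/2-m|\leq d/4$ of \Cref{lem:propertiesKravchuk}(2) applies throughout the range, and handling the symmetric upper half $k > d/2$ via the reflection identity rather than re-deriving a Kravchuk bound from scratch. No new estimates beyond the Fourier formula in \Cref{claim:fourier-coef} and the exponential decay supplied by \Cref{lem:propertiesKravchuk}(2) are required.
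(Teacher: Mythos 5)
Your proposal is correct and mirrors the paper's own argument: identify $W_k$ as the level-$k$ Fourier weight of $f_{\vec v}$ so that Parseval gives $\sum_k W_k=\E[f_{\vec v}^2]=\eps$, then use \Cref{lem:propertiesKravchuk}(2) (plus the reflection $|\mathcal{K}(d,m,k)|=|\mathcal{K}(d,m,d-k)|$) to make $|\mathcal{K}(d,m,k)|\leq\eps/d$ uniformly across the middle range, and multiply. The one bookkeeping step you flag — checking $|d/2-m|=|\vec v\cdot\vec u|/2\leq d/4$ from the hypothesis of \Cref{lemma:correlation-bound} — is indeed the only hypothesis verification needed, and it goes through.
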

Finally, the 
 following claim bounds the small degree terms.
\begin{claim}\label{clm:lower-bound-small-k}
    Let $k'=c\log(d/\eps)$,  where $c>0$ is the absolute constant as in \Cref{clm:lower-bound-large-k}. For $0\leq k\leq k'$ or $d-k'\leq k\leq d$, we have
    $|R_k|\leq 4\eps^2k'|\vec v\cdot \vec u|/d$.
\end{claim}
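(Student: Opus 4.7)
The plan is to bound $|R_k| = W_k \cdot |\mathcal{K}(d,m,k)|$, where $W_k := \binom{d}{k} I_k^2 = \sum_{|T|=k}\widehat{f_{\vec v}}(T)^2$ is the level-$k$ Fourier weight of $f_{\vec v}$, and $m = (d+\vec v\cdot\vec u)/2$ so that $|d/2-m| = |\vec v\cdot\vec u|/2$. I would control the two factors separately and then combine them.

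For $W_k$, the direct Parseval bound $W_k \leq \E[f_{\vec v}] = \eps$ is too weak. To sharpen it I would exploit the halfspace structure of $f_{\vec v}$. Using the generating-function identity $\sum_{s}\binom{d}{s}\mathcal{K}(d,s,k)z^s = (1+z)^{d-k}(1-z)^k$ and expanding, one can rewrite
\[
I_k = 2^{-k}\sum_{j=0}^k (-1)^j \binom{k}{j}\, G(t-j), \qquad G(x) := \pr_{S\sim\mathrm{Bin}(d-k,1/2)}[S \geq x],
\]
i.e., as a $k$-th finite difference of the binomial tail at $t$. Since $G$ is well-approximated by a Gaussian tail and $t$ lies $\Theta(\sqrt{\log(1/\eps)})$ standard deviations into that tail, standard estimates of $\binom{d-k}{t-j}2^{-(d-k)}$ give $|I_k| \lesssim \eps(C\log(1/\eps)/d)^{k/2}$, from which $W_k \lesssim \eps^{2}(C\log(1/\eps))^{k}/k!$ after Stirling.

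For $|\mathcal{K}(d,m,k)|$ I would apply Lemma~\ref{lem:propertiesKravchuk}(1) to obtain $|\mathcal{K}(d,m,k)| \leq (8e)^{k}\bigl((k/d)^{k/2}+(|\vec v\cdot\vec u|/(2d))^{k}\bigr)$. For the key case $k=1$ I would sidestep this and use the exact identity $|\mathcal{K}(d,m,1)| = |\vec v\cdot\vec u|/d$, which yields $|R_1| \leq W_1 |\vec v\cdot\vec u|/d \lesssim \eps^{2}\log(1/\eps)\,|\vec v\cdot\vec u|/d$, fitting inside $4\eps^{2}k'|\vec v\cdot\vec u|/d$ since $k' = c\log(d/\eps) \geq \log(1/\eps)$. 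For $2\leq k\leq k'$, multiplying the two bounds yields two contributions: the ``anisotropic'' term $W_k\cdot(8e)^{k}(|\vec v\cdot\vec u|/(2d))^{k}$ is small by the hypothesis $|\vec v\cdot\vec u|\leq O(d/\polylog(d/\eps))$ together with the $1/k!$ factor in $W_k$, while the ``isotropic'' term $W_k\cdot(8e)^{k}(k/d)^{k/2}$ is controlled using $k\leq k'=c\log(d/\eps)$, so that $(8eC\log(1/\eps))^{k}(k/d)^{k/2}/k!$ remains within $\eps^{2}k'|\vec v\cdot\vec u|/d$. The range $d-k'\leq k\leq d-1$ reduces to this small-$k$ analysis via the symmetry $|\mathcal{K}(d,m,k)| = |\mathcal{K}(d,m,d-k)|$.

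The main obstacle is carrying out Step~1 rigorously: bounding the $k$-th finite difference of the binomial tail sharply enough so that $W_k$ inherits the full $\eps^{2}$ factor (rather than the $\eps$ one gets from Parseval), and doing so uniformly in $k$ up to $k'$. The rest is mostly bookkeeping using Lemma~\ref{lem:propertiesKravchuk} together with the hypothesis $|\vec v\cdot\vec u|\leq O(d/\polylog(d/\eps))$.
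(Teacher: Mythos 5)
Your plan shares the paper's overall structure --- factor $|R_k| = W_k \cdot |\mathcal{K}(d,m,k)|$ and control the two pieces separately using \Cref{lem:propertiesKravchuk} for the Kravchuk factor --- but it diverges on the crucial step of sharpening $W_k$ past Parseval to extract the extra factor of $\eps$. The paper does this directly on the sum $I_k = 2^{-d}\sum_{s\ge t}\binom{d}{s}\mathcal{K}(d,k,s)$: it splits at $|s-d/2|\le c'\sqrt{dk\log(d/\eps)}$, uses \Cref{lem:propertiesKravchuk} to get the $(\mathrm{polylog}(d/\eps)/d)^{k/2}$ decay on the bulk (where the second term in the Kravchuk bound is small because $|s-d/2|/d$ is small), pulls the remaining $\sum_{s\ge t}\binom{d}{s}2^{-d}=\eps$ outside, and kills the tail by Hoeffding. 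You instead propose the generating-function identity $\sum_s \binom{d}{s}\mathcal{K}(d,s,k)z^s = (1+z)^{d-k}(1-z)^k$ to rewrite $I_k$ as a $k$-th finite difference of the shifted binomial tail, then appeal to Gaussian heuristics to claim $|I_k|\lesssim\eps(C\log(1/\eps)/d)^{k/2}$.

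That finite-difference reformulation is a correct and genuinely different device, and if made rigorous it would yield the stronger bound $W_k\lesssim\eps^2(C\log(1/\eps))^k/k!$. But you explicitly flag the estimate as ``the main obstacle'' and do not prove it, and this estimate is exactly the content of the claim: everything else you write (the Kravchuk bound, the case split on the two terms, the symmetry $|\mathcal{K}(d,m,k)|=|\mathcal{K}(d,m,d-k)|$) is bookkeeping. Without a rigorous bound on the $k$-th finite difference of a binomial tail evaluated $\Theta(\sqrt{\log(1/\eps)})$ standard deviations out --- uniform in $k$ up to $k'=c\log(d/\eps)$, with controlled constants so the $(8e)^k$ factors don't overwhelm the $1/k!$ --- the proposal is a plan rather than a proof. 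In particular, your $k=1$ sanity check (via $|I_1|=\tfrac12\binom{d-1}{t-1}2^{-(d-1)}$ and the density-vs-tail relation for the binomial) is fine, but the cancellation structure for $k\ge 2$ is precisely where the work is, and the paper's Hoeffding-plus-Kravchuk split is a concrete way to make that cancellation quantitative that you would need to either reproduce or replace. One smaller point: like the paper's statement, your argument implicitly starts at $k\ge 1$; $R_0=\eps^2$ does not satisfy the stated bound when $|\vec v\cdot\vec u|/d$ is small, and the $k=0$ term is in fact handled separately in the surrounding proof of \Cref{lemma:correlation-bound}.
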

\begin{proof}
We provide the proof for the case where $0\leq k\leq k'$, as the other case is symmetric.
    Let $a=|d-2m|/d$ and note that $a=|\vec v\cdot \vec u|/d$. From \Cref{lem:propertiesKravchuk}, we have that  $|\mathcal{K}(d,m,k)|\leq (a^k+(\log(d/\eps)/d)^{k/2})$. Thus, it follows that
\begin{align*}
    |R_k|&=\binom{d}{k}\left(2^{-d}\sum_{s=t}^d\binom{d}{s}\mathcal K(d,k,s)\right)^2\mathcal K(d,m,k)
    \\&\leq 2^{4k}\binom{d}{k}\left(2^{-d}\sum_{s=t}^d\binom{d}{s}\mathcal K(d,k,s)\right)^2(a^k+(\log(d/\eps)/d)^{k/2})
    \\&\leq 2^{4k}\binom{d}{k}\Bigg(2^{-d}\sum_{\substack{s=t\\|s-d/2|\leq c'\sqrt{d k\log(d/\eps)}}}^d\binom{d}{s}|\mathcal K(d,k,s)| + (\eps/d)^{2ck}\Bigg)^2(a^k+(\log(d/\eps)/d)^{k/2})\;,
\end{align*}
where we used that $ |\mathcal K(d,k,s)| \leq 1$ and that $\sum_{i=k}^d\binom{d}{i}2^{-d}\leq 2\exp(-(k-n/2)^2/n)$ from Hoeffding's inequality, hence  $\sum_{\substack{s\geq d/2 c'\sqrt{d k\log(d/\eps)}}}^d\binom{d}{s} \leq (\eps/d)^{2c k}$. Futhermore, note that from \Cref{lem:propertiesKravchuk}, we have that $|\mathcal K(d,k,s)|\leq (2k\sqrt{\log(d/\eps)/d})^k$ for $|s-d/2|\leq c'\sqrt{d k\log(d/\eps)}$. Therefore, we have that
\begin{align*}
    |R_k|&\leq 2^{4k}\binom{d}{k}\left((2\sqrt{\log(d/\eps)/d}k)^k\sum_{s=t}^d\binom{d}{s}2^{-d}+ (\eps/d)^{2ck}\right)^2(a^k+(\log(d/\eps)/d)^{k/2})
    \\&\leq 2^{4k}\eps^2\binom{d}{k}(2\sqrt{\log(d/\eps)/d}k)^{2k}(a^k+(\log(d/\eps)/d)^{k/2})\;,
\end{align*}
where we used that by our choice of $t$ it holds $\E[f_{\vec v}(\x)]=\eps$; therefore, $\sum_{s=t}^d\binom{d}{s}2^{-d}\leq \eps$. Using the fact that $\kappa\leq c \log(d/\eps)$ and that $\binom{d}{k}\leq d^k$, we get that
$|R_k|\leq \eps^2 (2k')^{C'k}(a^{k}+d^{-k/2})\;.
$
Therefore, if $a\leq C/\poly(k')$ for some sufficiently small absolute constant  $C>0$, we get that all the terms are bounded by the first term, i.e., we get that
$|R_{k}|\leq 4\eps^2 k'\alpha$.
\end{proof}
In summary, we have that $\sum_{k=0}^d R_k\leq C\log^2(d/\eps)\eps^2 |\vec v\cdot \vec u|/d+\eps^2$, for some absolute constant $C>0$. 
This completes the proof. \hfill$\blacksquare$

\vspace{-0.2cm}

\section{Conclusions} \label{sec:conc}
We studied the classical problem of learning margin halfspaces
with Random Classification Noise. Our main finding is an information-computation tradeoff
for SQ algorithms and low-degree polynomial tests. Specifically, our lower bounds 
suggest that efficient learners require sample complexity at least $\Omega(1/(\gamma^{1/2} \eps^2))$ (while $\widetilde{O}(1/(\gamma^2 \eps))$ samples information-theoretically suffice).
A number of interesting open questions remain. First, there is still a gap between 
$\widetilde{O}(1/(\gamma^2 \eps^2))$ --- the sample complexity of our algorithm --- and the
lower bound of $\Omega(1/(\gamma^{1/2} \eps^2))$. We believe that an SQ lower bound of 
$\Omega(1/(\gamma \eps^2))$ can be obtained with a more careful construction, but it is not clear what 
the optimal bound may be. Second, it would be interesting to obtain reduction-based computational 
hardness matching our SQ lower bound, along the lines of recent results~\citep{GupteVV22, DKMR22, DKR23}.

\bibliography{clean2}

\begin{thebibliography}{25}
\providecommand{\natexlab}[1]{#1}
\providecommand{\url}[1]{\texttt{#1}}
\expandafter\ifx\csname urlstyle\endcsname\relax
  \providecommand{\doi}[1]{doi: #1}\else
  \providecommand{\doi}{doi: \begingroup \urlstyle{rm}\Url}\fi

\bibitem[Angluin and Laird(1988)]{AL88}
D.~Angluin and P.~Laird.
\newblock Learning from noisy examples.
\newblock \emph{Machine Learning}, 2\penalty0 (4):\penalty0 343--370, 1988.

\bibitem[B{\l}asiok et~al.(2021)B{\l}asiok, Ivanov, Jin, Lee, Servedio, and
  Viola]{BIJL21}
J.~B{\l}asiok, P.~Ivanov, Y.~Jin, C.~Lee, R.~Servedio, and E.~Viola.
\newblock Fourier growth of structured f2-polynomials and applications.
\newblock In \emph{Approximation, Randomization, and Combinatorial
  Optimization. Algorithms and Techniques (APPROX/RANDOM 2021)}. Schloss
  Dagstuhl-Leibniz-Zentrum f{\"u}r Informatik, 2021.

\bibitem[Blum et~al.(1997)Blum, Frieze, Kannan, and Vempala]{BFK+:97}
A.~Blum, A.~Frieze, R.~Kannan, and S.~Vempala.
\newblock A polynomial time algorithm for learning noisy linear threshold
  functions.
\newblock \emph{Algorithmica}, 22\penalty0 (1/2):\penalty0 35--52, 1997.

\bibitem[Brennan et~al.(2020)Brennan, Bresler, Hopkins, Li, and
  Schramm]{brennan2020statistical}
M.~Brennan, G.~Bresler, S.~Hopkins, J.~Li, and T.~Schramm.
\newblock Statistical query algorithms and low-degree tests are almost
  equivalent.
\newblock \emph{arXiv preprint arXiv:2009.06107}, 2020.

\bibitem[Bylander(1994)]{Bylander94}
T.~Bylander.
\newblock Learning linear threshold functions in the presence of classification
  noise.
\newblock In \emph{Proceedings of the Seventh Annual {ACM} Conference on
  Computational Learning Theory, {COLT} 1994}, pages 340--347, 1994.

\bibitem[Cohen(1997)]{Cohen:97}
E.~Cohen.
\newblock Learning noisy perceptrons by a perceptron in polynomial time.
\newblock In \emph{Proceedings of the Thirty-Eighth Symposium on Foundations of
  Computer Science}, pages 514--521, 1997.

\bibitem[Devroye and Lugosi(2001)]{DL:01}
L.~Devroye and G.~Lugosi.
\newblock \emph{Combinatorial methods in density estimation}.
\newblock Springer Series in Statistics, Springer, 2001.

\bibitem[Diakonikolas and Kane(2022)]{DK20-SQ-Massart}
I.~Diakonikolas and D.~Kane.
\newblock Near-optimal statistical query hardness of learning halfspaces with
  massart noise.
\newblock In \emph{Conference on Learning Theory}, volume 178 of
  \emph{Proceedings of Machine Learning Research}, pages 4258--4282. {PMLR},
  2022.

\bibitem[Diakonikolas et~al.(2017)Diakonikolas, Kane, and Stewart]{DKS17-sq}
I.~Diakonikolas, D.~M. Kane, and A.~Stewart.
\newblock Statistical query lower bounds for robust estimation of
  high-dimensional gaussians and gaussian mixtures.
\newblock In \emph{2017 IEEE 58th Annual Symposium on Foundations of Computer
  Science (FOCS)}, pages 73--84, 2017.
\newblock \doi{10.1109/FOCS.2017.16}.

\bibitem[Diakonikolas et~al.(2019)Diakonikolas, Gouleakis, and Tzamos]{DGT19}
I.~Diakonikolas, T.~Gouleakis, and C.~Tzamos.
\newblock Distribution-independent pac learning of halfspaces with massart
  noise.
\newblock In H.~Wallach, H.~Larochelle, A.~Beygelzimer, F.~d'Alch\'{e} Buc,
  E.~Fox, and R.~Garnett, editors, \emph{Advances in Neural Information
  Processing Systems 32}, pages 4751--4762. Curran Associates, Inc., 2019.

\bibitem[Diakonikolas et~al.(2020)Diakonikolas, Kontonis, Tzamos, and
  Zarifis]{DKTZ20}
I.~Diakonikolas, V.~Kontonis, C.~Tzamos, and N.~Zarifis.
\newblock Learning halfspaces with massart noise under structured
  distributions.
\newblock In \emph{Conference on Learning Theory, {COLT}}, 2020.

\bibitem[Diakonikolas et~al.(2022{\natexlab{a}})Diakonikolas, Kane, Kontonis,
  Tzamos, and Zarifis]{DiakonikolasKKT22}
I.~Diakonikolas, D.~M. Kane, V.~Kontonis, C.~Tzamos, and N.~Zarifis.
\newblock Learning general halfspaces with general massart noise under the
  gaussian distribution.
\newblock In \emph{{STOC} '22: 54th Annual {ACM} {SIGACT} Symposium on Theory
  of Computing, 2022}, pages 874--885. {ACM}, 2022{\natexlab{a}}.

\bibitem[Diakonikolas et~al.(2022{\natexlab{b}})Diakonikolas, Kane, Manurangsi,
  and Ren]{DKMR22}
I.~Diakonikolas, D.~M. Kane, P.~Manurangsi, and L.~Ren.
\newblock Cryptographic hardness of learning halfspaces with massart noise.
\newblock \emph{CoRR}, abs/2207.14266, 2022{\natexlab{b}}.
\newblock \doi{10.48550/arXiv.2207.14266}.
\newblock URL \url{https://doi.org/10.48550/arXiv.2207.14266}.

\bibitem[Diakonikolas et~al.(2023)Diakonikolas, Kane, and Ren]{DKR23}
I.~Diakonikolas, D.~M. Kane, and L.~Ren.
\newblock Near-optimal cryptographic hardness of agnostically learning
  halfspaces and relu regression under gaussian marginals.
\newblock \emph{CoRR}, abs/2302.06512, 2023.
\newblock \doi{10.48550/arXiv.2302.06512}.
\newblock URL \url{https://doi.org/10.48550/arXiv.2302.06512}.

\bibitem[Feldman et~al.(2017)Feldman, Grigorescu, Reyzin, Vempala, and
  Xiao]{FeldmanGRVX17}
V.~Feldman, E.~Grigorescu, L.~Reyzin, S.~Vempala, and Y.~Xiao.
\newblock Statistical algorithms and a lower bound for detecting planted
  cliques.
\newblock \emph{J. {ACM}}, 64\penalty0 (2):\penalty0 8:1--8:37, 2017.

\bibitem[Gupte et~al.(2022)Gupte, Vafa, and Vaikuntanathan]{GupteVV22}
A~Gupte, N.~Vafa, and V.~Vaikuntanathan.
\newblock Continuous {LWE} is as hard as {LWE} {\&} applications to learning
  gaussian mixtures.
\newblock In \emph{63rd {IEEE} Annual Symposium on Foundations of Computer
  Science, {FOCS} 2022}, pages 1162--1173, 2022.

\bibitem[Kearns(1998)]{Kearns:98}
M.~J. Kearns.
\newblock Efficient noise-tolerant learning from statistical queries.
\newblock \emph{Journal of the ACM}, 45\penalty0 (6):\penalty0 983--1006, 1998.

\bibitem[Kontonis et~al.(2023)Kontonis, Iliopoulos, Trinh, Baykal, Menghani,
  and Vee]{KIT23}
V.~Kontonis, F.~Iliopoulos, K.~Trinh, C.~Baykal, G.~Menghani, and E.~Vee.
\newblock Slam: Student-label mixing for distillation with unlabeled examples.
\newblock \emph{arXiv preprint arXiv:2302.03806}, 2023.

\bibitem[Littlestone(1989)]{Littlestone:89b}
N.~Littlestone.
\newblock From online to batch learning.
\newblock In \emph{Proceedings of the Second Annual Workshop on Computational
  Learning Theory}, pages 269--284, 1989.

\bibitem[Massart and Nedelec(2006)]{Massart2006}
P.~Massart and E.~Nedelec.
\newblock Risk bounds for statistical learning.
\newblock \emph{Ann. Statist.}, 34\penalty0 (5):\penalty0 2326--2366, 10 2006.

\bibitem[Nasser and Tiegel(2022)]{NasserT22}
R.~Nasser and S.~Tiegel.
\newblock Optimal {SQ} lower bounds for learning halfspaces with massart noise.
\newblock In \emph{Conference on Learning Theory}, volume 178 of
  \emph{Proceedings of Machine Learning Research}, pages 1047--1074. {PMLR},
  2022.
\newblock URL \url{https://proceedings.mlr.press/v178/nasser22a.html}.

\bibitem[Rosenblatt(1958)]{Rosenblatt:58}
F.~Rosenblatt.
\newblock The {P}erceptron: a probabilistic model for information storage and
  organization in the brain.
\newblock \emph{Psychological Review}, 65:\penalty0 386--407, 1958.

\bibitem[Shalev-Shwartz and Ben-David(2014)]{SB14}
S.~Shalev-Shwartz and S.~Ben-David.
\newblock \emph{Understanding machine learning: From theory to algorithms}.
\newblock Cambridge university press, 2014.

\bibitem[Smale and Zhou(2007)]{SZ07}
S.~Smale and D.~Zhou.
\newblock Learning theory estimates via integral operators and their
  approximations.
\newblock \emph{Constructive approximation}, 26\penalty0 (2):\penalty0
  153--172, 2007.

\bibitem[Valiant(1984)]{val84}
L.~G. Valiant.
\newblock A theory of the learnable.
\newblock In \emph{Proc.\ 16th Annual ACM Symposium on Theory of Computing
  (STOC)}, pages 436--445. ACM Press, 1984.

\end{thebibliography}

\appendix
\newpage

\section{Omitted Proofs from \Cref{sec:algo}}\label{appx:omitted-pfs-algo}

\subsection{Proof of \Cref{lemma:main-regret-bnd}}
We restate and prove the following.

\mainregretbnd*
\begin{proof}
We first argue, using standard regret analysis provided for completeness, that
    \begin{equation}\label{eq:SubGD-regret}
        \frac{1}{t+1}\sum_{s=0}^t\hgn(\w_s)\cdot (\w_s - \wstar) \leq \frac{2}{\mu (t+1)} + \frac{\mu (1 - \eta)^2}{2},
    \end{equation}  
where $\mu$ is the step size specified in \Cref{algo:main}. 
    
Fix any $t \in \{0, 1, \dots, T-1\}$. Recall that $\w_{t+1} = \proj_{\mathcal{B}}\big(\w_t - \mu \hat{\vec g}_N(\w_t)\big)$ and $\w^* = \proj_{\cal B}(\w^*).$ Hence, by the nonexpansivity of the projection operator, we have 
\begin{equation}\label{eq:descent-1}
        \|\w_{t+1} - \wstar\|_2 \leq \|\w_t - \wstar - \mu \hat{\vec g}_N(\w_t)\|_2.
    \end{equation}
Further, expanding the square $\|\w_t - \w^* - \mu \hat{\vec g}_N(\w_t)\|_2^2$ and using \Cref{claim:subgrad-bnd}, we get
\begin{align*}
        \|\w_t - \wstar - \mu \hat{\vec g}_N(\w_t)\|_2^2 &= \|\w_t - \wstar\|_2^2 + \mu^2 \|\hat{\vec g}_N(\w_t)\|_2^2 -2\mu\hat{\vec g}_N(\w_t)\cdot( \w_t - \w^*  )\\
        &\leq \|\w_t - \wstar\|_2^2 + \mu^2(1-\eta)^2 -2\mu \hat{\vec g}_N(\w_t)\cdot( \w_t - \w^*  ).
    \end{align*}
Hence, combining the last inequality with \Cref{eq:descent-1}, we get
\begin{equation}\label{eq:pot-change}
        \|\w_{t+1} - \wstar\|_2^2 
        \leq \|\w_t - \wstar\|_2^2 + \mu^2(1-\eta)^2 -2\mu\hat{\vec g}_N(\w_t)\cdot( \w_t - \w^* ).
    \end{equation}
To obtain \Cref{eq:SubGD-regret}, it remains to rearrange \Cref{eq:pot-change} and telescope. In particular, for $T$ itearations and $\mu = \frac{2}{(1-\eta)\sqrt{T+1}},$ we have 
\begin{equation}\label{eq:SubGD-regret-with-mu}
        \frac{1}{T+1}\sum_{t=0}^T\hgn(\w_t)\cdot( \w_t - \wstar) \leq \frac{2(1-\eta)}{\sqrt{T+1}}.
    \end{equation}  
Writing $\hgn(\w_t)\cdot( \w_t - \wstar)$ as $\hgn(\w_t)\cdot( \w_t - \wstar) = \hgn(\wstar)\cdot( \w_t - \wstar) + (\hgn(\w_t) - \hgn(\wstar))\cdot( \w_t - \wstar)$ and rearranging \Cref{eq:SubGD-regret-with-mu}, we further get
\begin{align}
        \frac{1}{T+1}\sum_{t=0}^T(\hgn(\w_t) - \hgn(\wstar))\cdot( \w_t - \wstar) &\leq \frac{2(1-\eta)}{\sqrt{T+1}} + \hgn(\wstar)\cdot\Big(\wstar - \frac{1}{T+1}\sum_{t=1}^t \w_t \Big)\notag\\
        &\leq \frac{2(1-\eta)}{\sqrt{T+1}} + 2\|\hgn(\wstar)\|_2, \label{eq:in-between-regret}
    \end{align}
where we used Cauchy-Schwarz inequality and $\w^*, \w_t \in \mathcal{B},$ for all $t \in \{0, \dots, T\}.$ 

    Recall from \Cref{eq:grad-to-misclassification} that
\begin{align*}
        (\g_{\eta}(\w) - \g_{\eta}(\wstar))\cdot( \w - \wstar) &= (1 - 2\eta) \1\{\sign(\w\cdot \x) \neq \sign(\wstar \cdot \x)\}(|\w\cdot \x| + |\wstar \cdot \x|)\\
        &\geq (1 - 2\eta)\gamma \1\{\sign(\w\cdot \x) \neq \sign(\wstar \cdot \x)\},
    \end{align*}
where the inequality holds by \Cref{assmpt:margin+RCN}, \Cref{assmpt:margin}. Hence, by the definitions of $\hgn$ and $\widehat{\pr}_N$ (from \Cref{eq:emp-grad} and \Cref{eq:emp-prob-disagreement-def}), we have
\begin{equation}\label{eq:emp-class-error-to-regret}
        (\hgn(\w_t) - \hgn(\wstar))\cdot( \w_t - \wstar) \geq (1 - 2\eta)\gamma \widehat{\pr}_N(\w_t).
    \end{equation}
Combining \Cref{eq:in-between-regret} and \Cref{eq:emp-class-error-to-regret}, we then obtain the claimed regret bound, using simple algebraic manipulations. 
\end{proof}

\subsection{Proof of \Cref{lemma:sample-complexity}}

We restate the lemma and provide proof.
\lemmaSampleComplexity*

We first bound the error $E_2 =  \frac{2}{(1-2\eta)\gamma}\|\hgn(\wstar)\|_2$.  Observe that $\E[\g_{\eta}(\wstar)]=0$ and that, by \Cref{claim:subgrad-bnd}, $\|\g_{\eta}(\wstar)\|_2\leq 1-\eta$ surely. We use the following Bennett-type inequality \begin{fact}[\citep{SZ07}, Lemma 1]\label{fct:Bennet-vectors}
    Let $\vec Z_1,\ldots,\vec Z_n\in \R^d$ be random variables such that for each $i\in[n]$ it holds $\|\vec Z_i\|_2\leq M < \infty$ almost surely and let $\sigma^2=\sum_{i=1}^n\E[\|\vec Z_i\|_2^2]$. Then, we have that for any $\eps > 0,$
    \[
    \pr\left[\left\|\frac{1}{n}\sum_{i=1}^n \left(\vec Z_i- \E[\vec Z_i]\right)\right\|_2\geq \eps\right]\leq 2\exp\left(-\frac{n\eps}{2M}\log\bigg(1+\frac{n M\eps}{\sigma^2}\bigg)\right)\;.
    \]
    \end{fact}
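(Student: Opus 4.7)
The plan is a standard Cram\'er--Chernoff (exponential Markov) argument adapted to Hilbert-space-valued sums. First I would center the summands by defining $\mathbf{X}_i := \mathbf{Z}_i - \E[\mathbf{Z}_i]$, so that $\|\mathbf{X}_i\|_2 \le 2M$ almost surely (by the triangle inequality and Jensen) and $\E[\|\mathbf{X}_i\|_2^2] \le \sigma_i^2 := \E[\|\mathbf{Z}_i\|_2^2]$, up to absorbing an absolute constant at the end. Writing $S_n := \sum_{i=1}^n \mathbf{X}_i$, the problem reduces to establishing an MGF bound of the form
$$\E\bigl[e^{\lambda \|S_n\|_2}\bigr] \;\le\; 2\exp\!\left(\tfrac{\sigma^2}{M^2}\bigl(e^{\lambda M} - 1 - \lambda M\bigr)\right)$$
and then applying Markov's inequality with an optimally chosen $\lambda > 0$.

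The scalar input is the classical one-dimensional Bennett MGF bound: for any fixed unit vector $\mathbf{u}$, the real-valued random variables $\mathbf{u}\cdot\mathbf{X}_i$ are independent, centered, bounded by $2M$, with variance at most $\sigma_i^2$, so a standard computation yields $\E\bigl[e^{\lambda\,\mathbf{u}\cdot S_n}\bigr] \le \exp\!\bigl(\tfrac{\sigma^2}{M^2}(e^{\lambda M}-1-\lambda M)\bigr)$. The key challenge is to pass from this directional bound to a bound on $\|S_n\|_2 = \sup_{\|\mathbf{u}\|_2=1}\mathbf{u}\cdot S_n$ without a dimension-dependent factor; a naive $\epsilon$-net over $\mathbb{S}^{d-1}$ would introduce precisely such a factor, which we must avoid (in particular, the Smale--Zhou statement is dimension-free and applies to general separable Hilbert spaces).

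The main obstacle---the technical heart of the proof---is this dimension-free passage from directional MGF to norm MGF. I see two standard routes. The first is a Pinelis-type convex-ordering/moment-comparison argument: show that for every convex, symmetric, non-decreasing $\Phi$ on $[0,\infty)$, the quantity $\E[\Phi(\|S_n\|_2)]$ is dominated by its one-dimensional analogue via a sign-flip symmetrization combined with Jensen applied to the rotation-invariant functional. The second is the Yurinskii martingale expansion: telescope $\|S_n\|_2 = \sum_{i=1}^n (\|S_i\|_2 - \|S_{i-1}\|_2)$, observe that the conditional increment at step $i$ is dominated, in the MGF sense, by $\widehat{\mathbf{u}}_{i-1}\cdot\mathbf{X}_i$ with the data-dependent direction $\widehat{\mathbf{u}}_{i-1} := S_{i-1}/\|S_{i-1}\|_2$, and iterate the per-step scalar Bennett bound along the filtration. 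Either route produces the displayed MGF bound for $\|S_n\|_2$ with no dimension dependence.

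Finally, applying Markov's inequality to $\exp(\lambda(\|S_n\|_2 - n\epsilon))$ and optimizing over $\lambda$ with $\lambda = M^{-1}\log\!\bigl(1 + nM\epsilon/\sigma^2\bigr)$ yields the Bennett form $2\exp\!\bigl(-(\sigma^2/M^2)\,h(nM\epsilon/\sigma^2)\bigr)$, where $h(u) := (1+u)\log(1+u) - u$. The elementary inequality $h(u) \ge \tfrac{u}{2}\log(1+u)$ for $u \ge 0$ then yields the stated tail bound $2\exp\!\bigl(-\tfrac{n\epsilon}{2M}\log\!\bigl(1 + \tfrac{nM\epsilon}{\sigma^2}\bigr)\bigr)$, completing the argument.
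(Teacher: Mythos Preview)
The paper does not prove this statement at all: it is stated as a \emph{Fact} and attributed directly to \citet{SZ07}, Lemma~1, then invoked as a black box in the proof of \Cref{lemma:sample-complexity}. There is therefore no ``paper's own proof'' to compare against.

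That said, your sketch is a faithful outline of how such vector-valued Bennett inequalities are actually established in the source literature (Pinelis, Smale--Zhou). The Cram\'er--Chernoff scheme together with a dimension-free MGF bound for $\|S_n\|_2$ is exactly the right architecture, and both routes you mention (Pinelis' convex-ordering/symmetrization and the Yurinskii-style martingale telescoping of $\|S_i\|_2-\|S_{i-1}\|_2$) are the standard ways to get from directional control to norm control without incurring a $d$-dependent union bound. The final optimization step and the inequality $h(u)\ge \tfrac{u}{2}\log(1+u)$ are correct. One small remark: your hedge ``up to absorbing an absolute constant'' for the centered second moment is unnecessary, since $\E\|\mathbf{Z}_i-\E\mathbf{Z}_i\|_2^2 = \E\|\mathbf{Z}_i\|_2^2 - \|\E\mathbf{Z}_i\|_2^2 \le \E\|\mathbf{Z}_i\|_2^2$ holds exactly; the place where a constant genuinely may shift is the a.s.\ bound $\|\mathbf{X}_i\|_2\le 2M$ versus $M$, which affects the leading $1/(2M)$ factor---getting the stated constant requires keeping the centering inside the MGF argument rather than pre-bounding crudely.
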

    Note that $\E[\|\g_{\eta}(\wstar)\|_2^2=O(\eta)$ and using \Cref{fct:Bennet-vectors}, along with the inequality $\log(1+z)\geq z/2$, for $z\in(0,1)$ (note that $\sigma^2$ is at most $nM$), we get that for any $\hat{\epsilon}$ and $N\geq \Omega(\frac{\log(1/\delta)}{\hat{\eps}^2})$, with probability at least $1-\delta/2$, we have 
\[
\|\E[\g_{\eta}(\wstar)]-\hgn(\wstar)\|_2 = \|\hgn(\wstar)\|_2\leq \hat{\eps}\;.
\]
To complete bounding $E_2,$ it remains to choose $\hat{\epsilon} = \frac{(1-2\eta)\gamma\bar{\epsilon}}{8}$.

    To complete the proof and bound 
    $E_3 = \frac{1}{T+1}\sum_{t=0}^T\big[\pr[\sign(\w_t\cdot \x) \neq \sign(\wstar \cdot \x)] - \widehat{\pr}_N(\w_t)\big]$, 
    we use uniform convergence results for the function 
    $\w \mapsto \1\{\sign(\w\cdot \x) \neq \sign(\wstar \cdot \x)\}$. 
    Note that this boolean concept class is a subset of the class 
    of the intersection of two halfspaces. The latter class has 
    VC dimension $O(d)$. Thus, by the standard VC inequality 
    combined with uniform convergence (see e.g., p. 31 of~\citet{DL:01}) we have that 
    $N = {\Omega}(\frac{d}{\bar{\epsilon}}\log(1/\delta))$ 
    samples suffice so that with probability $1 - \delta/2,$ 
    we have $E_3 \leq \frac{\bar{\epsilon}}{4}.$
    \hfill$\blacksquare$

\subsection{Testing to Find the Right Hypothesis}

The following lemma justifies the claim made at the end of the proof of \Cref{thm:main-algo} and completes its proof. We use the following fact from \citet{SB14}.
\begin{fact}[Thereom 6.8 of \citet{SB14}]\label{lemma:testing}
    Given a finite set of hypotheses $\mathcal H$, by drawing $N=O(\frac{\log(|\mathcal H|+\log(\frac{1}{\delta}))}{\eps^2})$ samples, it is guranteed that with probability at least $1-\delta$ it holds
\[
\min_{h\in \mathcal H}\Big\{ \frac{1}{N}\sum_{i=1}^N \1\{h(\x^{(i)})\neq y^{(i)}\}\Big\} \leq \min_{h\in \mathcal H}  \pr[h(\x)\neq y] +\eps\;.
\]
\end{fact}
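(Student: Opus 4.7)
The proof I would write is a standard uniform-convergence argument via Hoeffding's inequality and a union bound over the finite class $\mathcal H$. First, for any fixed $h \in \mathcal H$, the random variables $\1\{h(\x^{(i)}) \neq y^{(i)}\}$ are i.i.d.\ Bernoulli with mean $L(h) := \pr[h(\x) \neq y]$, so Hoeffding's inequality yields
\[
\pr\Big[\,\big|\hat L_N(h) - L(h)\big| > \eps\,\Big] \;\leq\; 2\, e^{-2 N \eps^2},
\]
where $\hat L_N(h) := \tfrac{1}{N}\sum_{i=1}^N \1\{h(\x^{(i)}) \neq y^{(i)}\}$ denotes the empirical $0$-$1$ error of $h$ on the $N$ samples.

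Second, I would apply a union bound over all $|\mathcal H|$ hypotheses, concluding that with probability at least $1 - 2|\mathcal H|\, e^{-2N\eps^2}$ we have $|\hat L_N(h) - L(h)| \leq \eps$ simultaneously for every $h \in \mathcal H$. Solving $2|\mathcal H|\, e^{-2N\eps^2} \leq \delta$ for $N$ gives the claimed sample complexity $N = O((\log|\mathcal H| + \log(1/\delta))/\eps^2)$. Under this good event, picking any $h^\star \in \arg\min_{h \in \mathcal H} L(h)$, the stated inequality follows immediately from
\[
\min_{h \in \mathcal H} \hat L_N(h) \;\leq\; \hat L_N(h^\star) \;\leq\; L(h^\star) + \eps \;=\; \min_{h \in \mathcal H} L(h) + \eps.
\]

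I do not expect a substantive obstacle: this is a textbook uniform-convergence statement for finite hypothesis classes. The only mild subtlety worth flagging is that the one-sided inequality as literally stated would already follow from applying Hoeffding's inequality to the single hypothesis $h^\star$, requiring only $N = O(\log(1/\delta)/\eps^2)$ samples and no $\log|\mathcal H|$ term. The extra $\log|\mathcal H|$ factor in the displayed sample-complexity bound reflects that the Fact is really phrased via \emph{two-sided} uniform convergence, which is also what is actually needed in the proof of \Cref{thm:main-algo}: there one picks the empirical-risk minimizer from $\{\w_0,\dots,\w_T\}$ and must then transfer the empirical guarantee back to a bound on its \emph{true} misclassification error, a step that genuinely requires the two-sided uniform bound and therefore the $\log|\mathcal H|$ overhead.
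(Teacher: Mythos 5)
Your proof is correct and is the standard Hoeffding-plus-union-bound argument behind the cited result; the paper itself does not prove this Fact but delegates it to Shalev-Shwartz and Ben-David, Theorem~6.8, and your argument is exactly what that reference does. Your remark on the one-sided vs.\ two-sided distinction is also well taken: the displayed inequality, read literally, would follow from a single Hoeffding application to the best hypothesis $h^\star$ with no $\log|\mathcal{H}|$ term, but what the selection step in the proof of \Cref{thm:main-algo} genuinely needs is the two-sided uniform guarantee (so that the \emph{true} error of the empirical minimizer among $\{\w_0,\dots,\w_T\}$ can be controlled), and that is what accounts for the $\log|\mathcal{H}|$ dependence in the stated sample complexity.
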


\subsection{Dimension Reduction}

We will use the following Johnson-Lindenstrauss lemma as our main technique to reduce the dimension of the space.

\begin{lemma}[Johnson-Lindenstrauss]\label{lem:JL}
	Let $\beta,\eps$ be some positive constants. Let $A\in\R^{m\times d}$ be a random matrix with each entry $A_{ij}$ sampled from $\mathrm{Uniform}\{-1/\sqrt{m}, 1/\sqrt{m}\}$ where $m= O(\log(1/\beta)/\eps^2)$. Then, for any unit vector $\bu, \bv\in\R^d$, it holds
	\begin{equation*}
		\pr_A[|\bu\cdot\bv - (A\bu)\cdot(A\bv)|\geq \eps]\leq \beta.
	\end{equation*}
 Consequently, for any unit vector $\bu\in\R^d$, we have $\pr_A[|\|\bu\|_2^2 - \|A\bu\|_2^2|\geq \eps]\leq \beta$.
\end{lemma}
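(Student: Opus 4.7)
\textbf{Proof plan for \Cref{lem:JL}.} My plan is to first establish the norm-preservation statement via a standard subgaussian/Bernstein argument, and then derive the inner-product statement from it by polarization. Write $A_{ij} = r_{ij}/\sqrt{m}$, where the $r_{ij}$ are i.i.d.\ Rademacher random variables. Then for any fixed $\bw \in \R^d$,
$$
\|A\bw\|_2^2 \;=\; \frac{1}{m}\sum_{i=1}^m \Bigl(\sum_{j=1}^d r_{ij} w_j\Bigr)^2 \;=\; \frac{1}{m}\sum_{i=1}^m Y_i^2,
$$
where $Y_i := \sum_j r_{ij}w_j$ are i.i.d.\ with $\E[Y_i^2]=\|\bw\|_2^2$ (since $\E[r_{ij}r_{ik}]=\delta_{jk}$). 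By Hoeffding's lemma, each $Y_i$ is $\|\bw\|_2$-subgaussian, so $Y_i^2$ is subexponential with parameters controlled by $\|\bw\|_2^2$.

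The first step is to apply a standard Bernstein/subexponential concentration inequality to $\frac{1}{m}\sum_{i=1}^m Y_i^2$ to conclude
$$
\Pr\!\Bigl[\,\bigl|\|A\bw\|_2^2 - \|\bw\|_2^2\bigr| \geq \varepsilon \|\bw\|_2^2\,\Bigr] \;\leq\; 2\exp\!\bigl(-c\, m\, \min(\varepsilon,\varepsilon^2)\bigr)
$$
for an absolute constant $c>0$. Taking $m = C\log(1/\beta)/\varepsilon^2$ with $C$ sufficiently large and $\varepsilon\leq 1$ makes the right-hand side at most $\beta/2$. In particular, setting $\bw=\bu$ with $\|\bu\|_2=1$ gives the ``consequently'' statement.

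The second step is polarization. For unit vectors $\bu,\bv$ I would use the identity
$$
4\bigl[(A\bu)\cdot(A\bv) - \bu\cdot\bv\bigr] \;=\; \bigl(\|A(\bu+\bv)\|_2^2 - \|\bu+\bv\|_2^2\bigr) - \bigl(\|A(\bu-\bv)\|_2^2 - \|\bu-\bv\|_2^2\bigr).
$$
Apply the first step to $\bw = \bu+\bv$ and $\bw = \bu-\bv$ separately, each with target deviation $\varepsilon\|\bw\|_2^2$ and failure probability $\beta/2$; by a union bound, with probability at least $1-\beta$ the right-hand side is bounded in absolute value by $\varepsilon(\|\bu+\bv\|_2^2 + \|\bu-\bv\|_2^2) = 2\varepsilon(\|\bu\|_2^2+\|\bv\|_2^2) = 4\varepsilon$. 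Dividing by $4$ yields $|(A\bu)\cdot(A\bv) - \bu\cdot\bv|\leq \varepsilon$. Re-scaling the constant $C$ hidden in $m = O(\log(1/\beta)/\varepsilon^2)$ absorbs the factor-of-two in the union bound.

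The only nontrivial step is the scalar Bernstein bound in step one; all of the algebra in step two is a direct identity. Since the Bernstein-type inequality for sums of squares of subgaussians is a textbook result, I would cite it (e.g., Vershynin's \emph{High-Dimensional Probability}, Theorem~2.8.2 together with the subexponential Bernstein inequality) rather than rederive it, and the proof of \Cref{lem:JL} then reduces to the identity plus a union bound over two vectors.
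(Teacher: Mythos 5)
The paper does not supply a proof of \Cref{lem:JL}; it is cited as the classical Johnson--Lindenstrauss lemma with $\pm 1/\sqrt{m}$ (Rademacher-entry) sketching matrices, and the paper only proves the downstream \Cref{cor:JL}. Your proposed proof is correct and is the standard route: writing $\|A\bw\|_2^2 = \frac{1}{m}\sum_i Y_i^2$ with $Y_i = \sum_j r_{ij}w_j$ mean-zero $\|\bw\|_2$-subgaussian, applying Bernstein for subexponential sums to get relative-error concentration with $m = O(\log(1/\beta)/\eps^2)$, and then obtaining the inner-product statement from the norm statement via the polarization identity and a union bound over $\bu + \bv$ and $\bu - \bv$. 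One small presentational note: the lemma phrases norm preservation as a ``consequence'' of the inner-product bound (which it is, via $\bu = \bv$), whereas your argument proves norm preservation first and derives the inner-product bound by polarization; both directions are fine, and yours is the order in which textbooks actually prove the result. The only thing worth being explicit about if this were written out fully is that the Bernstein bound $2\exp(-c\,m\min(\eps,\eps^2))$ requires $\eps \le 1$ to reduce to the $\eps^2$ regime, and the constant hidden in $m = O(\log(1/\beta)/\eps^2)$ needs to absorb both the additive $\log 4$ from asking for failure probability $\beta/2$ per event and the union bound; you note the latter, and the former is routine.
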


\begin{corollary}\label{cor:JL}
    Let $\beta, \beta', \gamma$ be some positive constants such that $\beta\leq \beta'$ and $\gamma\in(0,1)$. Let $A\in\R^{m\times d}$ be a random matrix with each entry $A_{ij}$ sampled uniformly at random from $\{-1/\sqrt{m}, 1/\sqrt{m}\}$ where $m=O(\log(1/\beta)/\gamma^2)$. Then, for $\x\sim\D_\x$ and $\w^*\in\mathcal{B}$, with probability at least $1-\beta/\beta'$ it holds
    \begin{equation*}
        \pr_{\x\sim\D_\x}[|\w^*\cdot\x - (A\w^*)\cdot(A\x)|\geq \gamma/2]\leq \beta'.
    \end{equation*}
    In addition, with probability at least $1-\beta/\beta'$ it holds $\pr_{\x\sim\D_\x}[|\|\x\|_2^2 - \|A\x\|_2^2|\geq \gamma/2]\leq \beta'$.
\end{corollary}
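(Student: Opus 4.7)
The plan is to reduce the statement to Lemma~\ref{lem:JL} (Johnson--Lindenstrauss) applied pointwise in $\x$, and then convert the resulting expectation bound over $A$ into a high-probability statement via Markov's inequality.

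First I would establish the inner-product version. Fix any realization $\x \in \mathbb{S}^{d-1}$ and set $\bu = \w^*/\|\w^*\|_2$ (the case $\w^* = \vec 0$ is trivial). Applying Lemma~\ref{lem:JL} to the unit vectors $\bu$ and $\x$ with distortion parameter $\gamma/2$ and failure parameter $\beta$ — which requires exactly $m = O(\log(1/\beta)/\gamma^2)$ rows — gives
\[
\pr_A\bigl[\,|(A\bu)\cdot(A\x) - \bu\cdot\x| \geq \gamma/2\,\bigr] \leq \beta.
\]
Multiplying both sides of the event inside the probability by $\|\w^*\|_2 \leq 1$ only shrinks the left-hand side, so for the same matrix $A$ we also have $\pr_A[|(A\w^*)\cdot(A\x) - \w^*\cdot\x| \geq \gamma/2] \leq \beta$ for every fixed $\x$.

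Next I would take the expectation over $\x\sim\D_\x$ and apply Fubini's theorem, obtaining
\[
\E_A\!\left[\pr_{\x\sim\D_\x}\!\bigl[|(A\w^*)\cdot(A\x) - \w^*\cdot\x| \geq \gamma/2\bigr]\right] \;\leq\; \beta.
\]
Since the random variable $F(A) := \pr_{\x\sim\D_\x}[|(A\w^*)\cdot(A\x) - \w^*\cdot\x| \geq \gamma/2]$ is nonnegative and has expectation at most $\beta$, Markov's inequality yields $\pr_A[F(A) \geq \beta'] \leq \beta/\beta'$, which is exactly the claim. The hypothesis $\beta \leq \beta'$ just guarantees that the resulting probability $1-\beta/\beta'$ is nonnegative.

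The second (norm) statement follows by the same three-step template applied to the second conclusion of Lemma~\ref{lem:JL}: for each fixed $\x$ one has $\pr_A[|\|A\x\|_2^2 - \|\x\|_2^2| \geq \gamma/2] \leq \beta$, and then Fubini together with Markov transforms this into the desired high-probability bound over the choice of $A$. The only mildly delicate point is the scaling step for a general $\w^*\in\mathcal{B}$ rather than a unit vector, but this reduces to the elementary observation that $\|\w^*\|_2\le 1$ only decreases the distortion. I do not expect any genuine obstacle: the corollary is essentially a packaging of Lemma~\ref{lem:JL} via Fubini plus Markov.
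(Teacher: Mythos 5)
Your proof is correct and follows essentially the same route as the paper: apply \Cref{lem:JL} pointwise in $\x$, take expectation over $A$ and $\x$ (Fubini), and then apply Markov's inequality to the random variable $P(A) = \pr_{\x\sim\D_\x}[|\w^*\cdot\x - (A\w^*)\cdot(A\x)|\geq \gamma/2]$. The only difference is that you explicitly handle the normalization for general $\w^*\in\mathcal{B}$ (scaling the unit-vector form of the JL lemma by $\|\w^*\|_2\le 1$), a small gap the paper glosses over; your treatment of it is valid.
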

\begin{proof}
    \Cref{lem:JL} indicates that since the matrix $A$ is generated independent of the distribution $\D_\x$, for any unit vector $\x\sim\D_\x$ the norm of the transformed vector $A\x$ is close to 1 with constant probability. To be specific, we have:
    \begin{equation}\label{eq:full-expectation of wx-AxAw}
        \pr_{A,\x\sim\D_\x}[|\w^*\cdot\x - (A\w^*)\cdot(A\x)|\geq \gamma/2]\leq \beta.
    \end{equation}
    Now let $P(A) = \pr_{\x\sim\D_\x}[|\w^*\cdot\x - (A\w^*)\cdot(A\x)|\geq \gamma/2] = \E_{\x\sim\D_\x}[\1\{|\w^*\cdot\x - (A\w^*)\cdot(A\x)|\geq\gamma/2\}|A]$ be a random variable determined by $A$. Note that $\E_A[P(A)]\leq \beta$ by \Cref{eq:full-expectation of wx-AxAw}. Thus, applying Markov inequality to $P(A)$ we get
    \begin{equation*}
        \pr_A[P(A) \geq \beta']\leq \frac{\E_A[P(A)]}{\beta'}\leq \frac{\beta}{\beta'}.
    \end{equation*}
    Therefore, for any given matrix $A$ sampled from the distribution $A_{ij}\sim\mathrm{Uniform}\{\pm\frac{1}{\sqrt{m}}\}$ where $m = O(\log(1/\beta)/\gamma^2)$, we have with probability at least $1 - \beta/\beta'$,
    \begin{equation*}
        \pr_{\x\sim\D_\x}[|\w^*\cdot\x - (A\w^*)\cdot(A\x)|\geq \gamma/2]\leq \beta'.
    \end{equation*}
    Following the same idea, we can also show that with probability at least $1-\beta/\beta'$ it holds $\pr_{\x\sim\D_\x}[|\|\x\|_2^2 - \|A\x\|_2^2|\geq \gamma/2]\leq \beta'$.
\end{proof}

\begin{algorithm}[ht!]

\KwIn{ $\epsilon > 0$, $\gamma \in(0,1)$, $\eta \in (0, 1/2),$ $m > 0$, sample $\{\x^{(i)}, y^{(i)}\}_{i=1}^N$ drawn i.i.d.\ from $\D$, a random matrix $A\in\R^{m\times d}$ generated such that $A_{ij} = 1/\sqrt{m}$ w.p. $1/2$ and $A_{ij} = -1/\sqrt{m}$ w.p. $1/2$, any $\vec w_0 \in \mathcal{B}$} 
$T = \big\lceil \big(\frac{48(1-\eta)}{(1-2\eta)\gamma\eps}\big)^2 - 1 \big\rceil,$ $\mu = \frac{1}{(1-\eta)\sqrt{T+1}}$, $\Bar{\x}\ith = A\x\ith$ for $i = 1,\cdots, N$.

\For{$t = 0:T-1$}
{
$\Bar{\vec g}_N(\Bar{\w}_t) = \frac{1}{2N}\sum_{i=1}^N \big((1-2\eta)\sign(\Bar{\w}_t \cdot \bar{\x}^{(i)}) - y^{(i)}\big)\Bar{\x}^{(i)}$

$\Bar{\w}_{t+1} = \proj_{\|\Bar{\w}\|_2\leq 1}\big(\Bar{\w}_t - \mu \Bar{\vec g}_N(\Bar{\w}_t)\big)$

}
\KwRet{ $\{A^\top\Bar{\w}_0, \dots, A^\top\Bar{\w}_T\}$} 

\caption{Dimension-Reduced Margin Halfspace Learner with RCN}\label{algo:main-dim-reduced}

\end{algorithm}

\begin{theorem}
    Fix $\eps>0,\gamma\in(0,1)$. Let the number of iterations be $T = O\big(\frac{(1-\eta)^2}{\gamma^2\eps^2}\big)$ and set the stepsize $\mu = \frac{1}{(1-\eta)\sqrt{T+1}}$. Furthermore, let $A\in\R^{m\times d}$ be a matrix generated from the distribution described in \Cref{algo:main-dim-reduced} with $m = O\big(\frac{\log((1-2\eta)/(\delta\eps))}{\gamma^2}\big)$. Then running \Cref{algo:main-dim-reduced} for $T$ iterations with $N = \wt{\Omega}\big( \big(\frac{\eta}{\gamma^2\eps^2} + \frac{(1-2\eta)}{\gamma^2\eps}\log\big(\frac{1-2\eta}{\delta\eps}\big)\big) \log(\frac{1}{\delta}) \big)$ i.i.d.\ samples drawn from distribution $\D$, \Cref{algo:main-dim-reduced} learns $\gamma$-margin halfspaces corrupted with $\eta$-RCN with error $\eta + \eps$ with probability at least $1-\delta$.
\end{theorem}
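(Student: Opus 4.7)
The plan is to reduce the problem to the low-dimensional setting handled by \Cref{thm:main-algo} via the Johnson--Lindenstrauss projection $A$, while losing only a factor of $2$ in the margin and an additional $\eps/2$ in the error. The core observation is that \Cref{algo:main-dim-reduced} is precisely \Cref{algo:main} run on the pushforward samples $(A\x^{(i)}, y^{(i)})$ in dimension $m$, and then pulled back via $A^\top$.

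First I would set up the projected problem. Apply \Cref{cor:JL} twice, once for $\w^*$ and once for the norm, with parameters $\beta = \delta/8$ and $\beta' = \eps/8$, and $m = O(\log(1/(\delta\eps))/\gamma^2)$. A union bound yields that with probability at least $1 - \delta/2$ over the draw of $A$, the set
\[
G := \{\x \in \mathbb{S}^{d-1} : |\w^* \cdot \x - (A\w^*) \cdot (A\x)| \leq \gamma/2 \text{ and } |\|A\x\|_2 - 1| \leq \gamma/2\}
\]
satisfies $\pr_{\x \sim \D_\x}[\x \notin G] \leq \eps/4$. Condition on this good event for $A$. For every $\x \in G$, the margin assumption $|\w^* \cdot \x| \geq \gamma$ together with the JL bound gives $|(A\w^*)\cdot (A\x)| \geq \gamma/2$ and $\sgn((A\w^*)\cdot (A\x)) = \sgn(\w^* \cdot \x)$. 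Moreover, by a standard JL bound for the fixed vector $\w^*$, we also have $\|A\w^*\|_2 \in [1-\gamma/2, 1+\gamma/2]$ with probability $1-\delta/8$, which we also include in the conditioning.

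Next I would invoke \Cref{thm:main-algo} in the projected space. Let $\bar{\D}$ be the distribution of $(A\x, y)$ with $(\x,y) \sim \D$, and let $\bar{\D}_G$ be its restriction to $\x \in G$. Let $\vec u^* := A\w^*/\|A\w^*\|_2$. On $\bar{\D}_G$, after further rescaling $A\x$ to the sphere (which changes $\sgn(\vec u^* \cdot A\x)$ never, and rescales the margin by a factor in $[1 - O(\gamma), 1+O(\gamma)]$), we obtain an instance of \Cref{assmpt:margin+RCN} in dimension $m$, with target $\vec u^*$, margin $\gamma' = \gamma/4$, and noise rate $\eta$. \Cref{algo:main-dim-reduced} runs exactly \Cref{algo:main} in this projected space with $T = \Theta(1/(\gamma^2 \eps^2))$ and step size $\mu$; note that $\gamma' = \Theta(\gamma)$ so the choice of $T$ and $\mu$ matches the requirements of \Cref{thm:main-algo} up to constants. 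By \Cref{thm:main-algo} with $\eps/4$ in place of $\eps$, using
\[
N = \wt{\Omega}\!\left(\frac{m(1-2\eta)}{\eps} + \frac{\eta}{\eps^2 \gamma^2}\right)\log(1/\delta) = \wt{\Omega}\!\left(\frac{(1-2\eta)}{\gamma^2 \eps}\log\!\tfrac{1}{\delta \eps} + \frac{\eta}{\eps^2 \gamma^2}\right)\log(1/\delta)
\]
samples suffices, with probability at least $1-\delta/4$, to produce some iterate $\bar{\w}_t$ with $\pr_{(A\x, y) \sim \bar{\D}_G}[\sgn(\bar{\w}_t \cdot A\x) \neq y] \leq \eta + \eps/4$. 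The selection step (using a fresh hold-out sample, exactly as in the proof of \Cref{thm:main-algo}) then identifies such a $\bar{\w}_t$ with probability at least $1-\delta/8$.

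Finally I would pull back to the original space. Since $A^\top \bar{\w}_t \cdot \x = \bar{\w}_t \cdot A\x$, the returned classifier's error under $\D$ equals the error of $\bar{\w}_t$ under $\bar{\D}$. By a total-probability decomposition on $\{\x \in G\} \sqcup \{\x \notin G\}$,
\[
\pr_{(\x,y)\sim\D}[\sgn(A^\top \bar{\w}_t \cdot \x) \neq y] \leq \pr[\x \in G]\,(\eta + \eps/4) + \pr[\x \notin G] \leq \eta + \eps/4 + \eps/4 \leq \eta + \eps.
\]
A union bound over the constant-many failure events closes the probability accounting at $1-\delta$. The main technical delicacy, and the step I would double-check most carefully, is the approximate-margin translation in the projected space: since $\|A\x\|_2$ and $\|A\w^*\|_2$ are only approximately $1$, one must verify that the constants in \Cref{thm:main-algo} still absorb these multiplicative $1 \pm O(\gamma)$ distortions, which they do because the algorithm's projection step uses the unit ball $\mathcal{B}$ and \Cref{claim:subgrad-bnd} only needs $\|A\x\|_2 = O(1)$.
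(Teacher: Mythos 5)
There is a genuine gap in the step where you invoke \Cref{thm:main-algo} on the restricted distribution $\bar{\D}_G$. \Cref{algo:main-dim-reduced} draws its $N$ training samples from $\D$ (equivalently, $\bar{\D}$ after projection), \emph{not} from $\bar{\D}_G$: the set $G$ depends on the unknown $\w^*$, so the algorithm has no way to filter for it. To justify "pretending" the training set is drawn from $\bar{\D}_G$, you would need to condition on the event that all $N$ samples land in $G$. With your choice $\beta' = \eps/8$, the probability that a single sample lands in $G$ is only $1 - \eps/8$, so the probability that all $N$ do is roughly $(1 - \eps/8)^N$ — exponentially small, since $N = \widetilde{\Omega}(\eta/(\gamma^2\eps^2))$. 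A training sample $\x^{(i)}$ outside $G$ is not a minor annoyance: $\|A\x^{(i)}\|_2$ can be as large as $\sqrt{m}$, so \Cref{claim:subgrad-bnd} no longer gives $\|\hat{\vec g}_N\|_2 = O(1)$, and the margin condition in the projected space can fail. The paper's proof fixes exactly this by shrinking the per-sample bad probability to $\bar\eps/(2N)$ (see the definition of $\mathcal{E}_A$ and event $\mathcal{E}$ in \Cref{claim:condition-on-A}), so that a union bound over the $N$ samples succeeds with probability $1 - \bar\eps/2$; this is also why the paper's $m$ carries an extra factor of $\log N$ inside the logarithm.

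A secondary issue, which you flag but do not resolve, is that even on the event that all samples land in $G$, the projected points $A\x^{(i)}$ are only approximately unit-norm and $A\w^*$ is only approximately unit-norm, whereas \Cref{thm:main-algo} is stated for $\D$ supported on $\mathbb{S}^{m-1}$ with a unit-norm target. \Cref{algo:main-dim-reduced} does not rescale, so \Cref{thm:main-algo} cannot be applied as a black box; the constants do not simply "absorb" the distortion — the analysis must be re-run with the weaker subgradient bound $\|\bar{\vec g}_N(\bar{\w})\|_2 \le 2(1-\eta)$ (\Cref{claim:norm-bar-g}) and the degraded margin $\gamma/2$, yielding a slightly worse but still acceptable $T$ and $\mu$. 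The paper's proof carries this out explicitly rather than invoking \Cref{thm:main-algo} directly. Your final total-probability decomposition over $\{\x\in G\}\sqcup\{\x\notin G\}$ is correct and matches the paper, but it only helps once the trained iterate is shown to perform well on $\bar{\D}_G$, which is exactly the step with the gap.
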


\begin{proof}
    The goal of the proof is to show that the analysis of \Cref{algo:main} can be transformed to \Cref{algo:main-dim-reduced} with minor modifications.
    For simplicity, let's denote $\barw = A\w\in\R^m$ and $\barx = A\x\in\R^m$ for any $\w,\x\in\R^d$. Similarly, we have $\barw^* = A\w^*$ and $\barx\ith = A\x\ith$.

    We first show that as a consequence of \Cref{lem:JL} and \Cref{cor:JL}, with a large probability that the $A$ generated in \Cref{algo:main-dim-reduced} is a ``good matrix'' in the sense that for most of the points in $\D$, $A\x$ will not be far away from $\x$. Formally, we have the following claim.
    
    \begin{claim}\label{claim:condition-on-A}
        Fix some constants $\gamma,\bar{\eps},\delta > 0, N > 1$ and let $m = O(\frac{\log(1/\beta)}{\gamma^2})$ where $\beta = \frac{\bar{\eps}\delta}{20N}$. For any $A$ generated in \Cref{algo:main-dim-reduced}, denote $\mathcal{E}_A = \{\x\in\mathcal{S}^{d-1}: |\w^*\cdot\x - \barw^*\cdot\barx|\leq \gamma/2,\; |\|\x\|_2^2 - \|\barx\|_2^2|\leq \gamma/2\}$ and let $\evt$ be the set of $A$ such that $\barw^*$ is close to $\w^*$ and moreover, for any $\x\sim\D_\x$, $\x\in\evt_A$ with high probability, i.e., $\mathcal{E} = \big\{A\in\R^{m\times d}: \pr_{\x\sim\D_\x}[\x\in\mathcal{E}_A]\geq 1 - \bar{\eps}/(2N),\;|\|\w^*\|_2^2 - \|\barw^*\|_2^2|\leq \gamma/2\big\}$. Then, 
        \begin{enumerate}
            \item $\evt$ happens with probability at least $1-\frac{2}{5}\delta$;
            \item If $\evt$ happens, then for any $N$ i.i.d.\ samples $\{\x\ith\}_{i=1}^N$, it holds $\pr[\x\ith \in\evt_A,\forall i\in[N]\,]\geq 1-\frac{\bar{\eps}}{2}$.
        \end{enumerate}
    \end{claim}
    \begin{proof}
     According to \Cref{lem:JL}, we know that $\pr[|\|\w^*\|_2^2 - \|\barw^*\|_2^2|\leq \gamma/2]\geq 1-\beta = 1-\bar{\eps}\delta/(20 N)$. Furthermore, recall that in \Cref{cor:JL} (with a union bound) we showed with probability at least $1-4N\beta/\bar{\eps} = 1 - \delta/5$, $\pr_{\x\sim\D_\x}[\x\in\mathcal{E}_A]\geq 1-\bar{\eps}/(2N)$, therefore the first claim follows from a union bound on these 2 events.
     
     Now conditioned on $\evt$. Given any $N$ samples $\{\x\ith,y\ith\}^N_{i=1}$, we know that $\pr[\x\ith\in\evt_A]\geq 1 - \frac{\bar{\eps}}{2N}$, hence applying union bound we get $\pr[\x\ith\in\evt_A, \forall i\in[N]\,]\geq 1-\frac{\bar{\eps}}{2}$.
\end{proof}

    For the analysis below, we will condition on the event $A\in\evt$ which occurs with probability at least $1 - \frac{2}{5}\delta$. We begin with showing that under such fixed $A$, the norm of $\|\Bar{\g}_N(\barw)\|_2$ can be bounded by $2(1-\eta)$.
    \begin{claim}\label{claim:norm-bar-g}
        Given $N$ samples $\{\x\ith,y\ith\}^N_{i=1}$ and suppose that $\x\ith\in\mathcal{E}_A,\forall i=1,\cdots, N$, we have
        \begin{equation*}
            \sup_{\barw\in\R^m}\|\Bar{\g}_N(\barw)\|_2\leq 2(1-\eta).
        \end{equation*}
    \end{claim}
    \begin{proof}
Since $\x\ith\in\evt_A$, we have $\|\barx\ith\|_2\leq \|\x\ith\|_2 + \gamma/2$. Hence, it holds
        \begin{align*}
             \sup_{\barw\in\R^m}\|\Bar{\g}_N(\barw)\|_2 &= \sup_{\barw\in\R^m}\bigg\{\frac{1}{2N}\big\|\sum_{i=1}^N \big((1-2\eta)\sign(\Bar{\w} \cdot \bar{\x}^{(i)}) - y^{(i)}\big)\Bar{\x}^{(i)}\big\|_2\bigg\}\\
             &\leq (1-\eta)\|\barx\ith\|_2 \leq (1-\eta)(\|\x\ith\|_2 + \gamma/2)\leq 2(1-\eta),
        \end{align*}
        where in the last inequality we used the fact that $\|\x\ith\|_2 = 1$ and $\gamma\in(0,1)$.
    \end{proof}

     We then study the difference between $\|\barw_{t+1} - \barw^*\|_2$ and $\|\barw_t - \barw^*\|_2$, which is almost analogous to the analysis we have seen in \Cref{lemma:main-regret-bnd} with the only differences being that: (i) with probability at least $1-\bar{\eps}/2$ we have $\x\ith\in\evt_A$ for $i\in[N]$, hence $\|\Bar{\g}_N(\barw)\|_2\leq 2(1-\eta)$ for every $\barw_t$ and $\barw^*$ as shown in \Cref{claim:condition-on-A} and \Cref{claim:norm-bar-g}; (ii) since $|\w^*\cdot\x\ith - \barw^*\cdot\barx\ith|\leq \gamma/2$ for all $i\in[N]$, it holds $\sgn(\w^*\cdot\x) = \sgn(\barw^*\cdot\barx\ith)$; (iii) $\|\barw^*\|_2\leq 2$ since we have $|\|\w^*\|_2^2 - \|\barw^*\|_2^2|\leq \gamma/2$ conditioning on the event $\evt$. 
     
     Now we further condition on the event that $\x\ith\in\evt_A$ for $i\in[N]$ and denote the distribution of $\D_\x$ restricted on $\evt_A$ as $\D_\x(\evt_A)$. Then, choosing $\mu = \frac{1}{(1-\eta)\sqrt{T+1}}$ and following the same steps as in \Cref{lemma:main-regret-bnd}, we have
    $
        \pr_{\x\sim\D_\x(\evt_A)}[\sgn((A^\top\barw_t)\cdot\x)\neq \sgn(\w^*\cdot\x)]\leq E_1'+E_2'+E_3',
    $
    where $E_1'=\frac{8(1 - \eta)}{(1 - 2\eta)\gamma\sqrt{T + 1}}$, $E_2'=\frac{6}{(1 - 2\eta)\gamma}\|\Bar{\g}_N(\barw^*)\|_2$ and $E_3' = \frac{1}{T+1}\sum_{t=0}^T\big\{\pr_{\x\sim\D_\x(\evt_A)}[\sgn((A^\top\barw_t)\cdot\x)\neq \sgn(\w^*\cdot\x)] - \wh{\pr}_N(A^\top\barw)\big\}$.

     We show that our choice of $N$ and $T$ suffices to make $\pr_{\x\sim\D_\x}[\sgn((A^\top\barw_t)\cdot\x)\neq \sgn(\w^*\cdot\x)]\leq \bar{\eps}$. First, $T=\big(\frac{48(1-\eta)}{(1-2\eta)\gamma\bar{\eps}}\big)^2$ renders $E_1'\leq \bar{\eps}/6$. Next, observe that $\E_{\x\sim\D_\x(\evt_A)}[\g_\eta(\barw^*;\barx,y)] = 0$ and recall that $\|\g_\eta(\barw^*;\barx\ith,y\ith)\|_2\leq 2(1-\eta)$, thus by \Cref{fct:Bennet-vectors} we know $N= \Omega\big(\log(1/\delta)/\bar{\eps}^2\big)$ suffices to make $E_2'\leq \bar{\eps}/6$ with probability $1-\delta/10$. Finally, since linear threshold function class $\barw\mapsto \1\{\sgn((A^\top\barw)\cdot\x)\neq \sgn(\w^*\cdot\x)\}$ has VC dimension $m+1$, therefore by standard VC dimension arguments choosing $N\geq\Omega(\frac{m}{\bar{\eps}}\log(1/\delta))$ we are guaranteed to have $E_3'\leq \frac{\bar{\eps}}{6}$ with probability $1-\delta/10$. Recall that $\pr[\x\in\evt_A]\geq 1-\bar{\eps}/2$, hence under our choice of $m,N,T$, under the condition of event $\evt$ and $\x\ith\in\evt_A$ for $i\in[N]$, we have with probability at least $1-\delta/5$,
     \begin{equation*}
         \pr_{\x\sim\D_\x}[\sgn((A^\top\barw_t)\cdot\x)\neq \sgn(\w^*\cdot\x)] \leq \pr_{\x\in\D_\x(\evt_A)}[\sgn((A^\top\barw_t)\cdot\x)\neq \sgn(\w^*\cdot\x)]\bigg(1-\frac{\bar{\eps}}{2}\bigg) + \frac{\bar{\eps}}{2}\leq \bar{\eps}.
     \end{equation*}
     Finally, applying a union bound on all of these 3 events, we know that with probability at least $1-\frac{2}{5}\delta - \frac{\bar{\eps}}{2} - \frac{1}{5}\delta\geq 1-\delta$, $\pr_{\x\sim\D_\x}[\sgn((A^\top\barw_t)\cdot\x)\neq \sgn(\w^*\cdot\x)]\leq \bar{\eps}$. By substituting $\bar{\eps}$ with $\eps/(1-2\eta)$, we get a learner with error $\eta + \eps$ and the proof is complete.
    \end{proof}

\section{Omitted Proofs from \Cref{sec:lowerbound}}\label{app:lower-bound}

\subsection{Additional Background on SQ Model}\label{app:sq-background}

We will use the framework of Statistical Query (SQ) algorithms for problems 
over distributions~\citep{FeldmanGRVX17}.
We require the following standard definition.
\begin{definition}[Decision/Testing Problem over Distributions]\label{def:decision}
Let $D$ be a distribution and $\D$ be a family of distributions over $\R^d$. 
We denote by $\mathcal{B}(\mathcal{D},D)$ the decision (or hypothesis testing) 
problem in which the input distribution $D'$ is promised to satisfy either 
(a) $D'=D$ or (b) $D'\in\mathcal{D}$, and the goal of the algorithm 
is to distinguish between these two cases.
\end{definition}

\noindent To define the SQ dimension, we need the following definition.

\begin{definition}[Pairwise Correlation] \label{def:pc}
The pairwise correlation of two distributions with probability mass functions (pmfs)
$D_1, D_2 : \mathcal{X} \to \R_+$ with respect to a distribution with pmf $D:\mathcal{X} \to \R_+$,
where the support of $D$ contains the supports of $D_1$ and $D_2$,
is defined as $\chi_{D}(D_1, D_2) + 1 \eqdef \sum_{x\in\mathcal{X} } D_1(x) D_2(x)/D(x)$.
We say that a collection of $s$ distributions $\mathcal{D} = \{D_1, \ldots , D_s \}$ over $\mathcal{X} $
is $(\gamma, \beta)$-correlated relative to a distribution $D$ if
$|\chi_D(D_i, D_j)| \leq \gamma$ for all $i \neq j$, and $|\chi_D(D_i, D_j)| \leq \beta$ for $i=j$.
\end{definition}

\noindent The following notion of dimension effectively characterizes the difficulty of the decision problem.

\begin{definition}[SQ Dimension] \label{def:sq-dim}
For $\gamma ,\beta> 0$, a decision problem $\mathcal{B}(\mathcal{D},D)$, 
where $D$ is fixed and $\mathcal{D}$ is a family of distributions over $\mathcal{X}$,
let $s$ be the maximum integer such that there exists
$\mathcal{D}_D \subseteq \D$ such that $\D_D$ is $(\gamma,\beta)$-correlated
relative to $D$ and $|\D_D|\ge s$.
We define the {\em Statistical Query dimension} with pairwise correlations $(\gamma, \beta)$
of $\mathcal{B}$ to be $s$ and denote it by $\mathrm{SD}(\mathcal{B},\gamma,\beta)$.
\end{definition}

\noindent The connection between SQ dimension and lower bounds is captured by the following lemma.

\begin{lemma}[\citep{FeldmanGRVX17}] \label{lem:sq-from-pairwise}
Let $\mathcal{B}(\D,D)$ be a decision problem, 
where $D$ is the reference distribution and $\D$ is a class of distributions over $\mathcal{X}$. 
For $\gamma, \beta >0$, let $s= \mathrm{SD}(\mathcal{B}, \gamma, \beta)$.
Any SQ algorithm that solves $\mathcal{B}$ with probability at least $2/3$ 
requires at least $s \cdot \gamma /\beta$ queries to the
$\mathrm{STAT}(\sqrt{2\gamma})$ or $\mathrm{VSTAT}(1/\gamma)$ oracles.
\end{lemma}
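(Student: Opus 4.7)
The plan is to prove the lemma by a counting argument: show that any single statistical query can only distinguish a bounded number of distributions from the family $\mathcal{D}_D$ relative to $D$, and then conclude that any SQ algorithm must make sufficiently many queries to distinguish all $s$ of them. Let $\mathcal{D}_D = \{D_1,\dots,D_s\} \subseteq \mathcal{D}$ be a $(\gamma,\beta)$-correlated family witnessing $\mathrm{SD}(\mathcal{B},\gamma,\beta)=s$. I would first reduce the decision problem to a search problem: if a randomized SQ algorithm solves $\mathcal{B}(\mathcal{D},D)$ with success probability $2/3$, then on input drawn uniformly from $\{D_1,\dots,D_s\}$ it must, with probability $\geq 2/3 - o(1)$, certify rejection of $D$; in particular, at least one query $f_i$ the algorithm makes on input $D_i$ must elicit a response that is inconsistent with $D$, i.e.\ $|\E_{D_i}[f_i] - \E_{D}[f_i]| > \tau$ where $\tau = \sqrt{2\gamma}$ for the $\mathrm{STAT}$ version.

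The heart of the proof is the following "single query cannot distinguish many distributions" estimate: for any bounded function $f:\mathcal{X}\to[-1,1]$, the set $A_f := \{i \in [s] : |\E_{D_i}[f] - \E_D[f]| > \tau\}$ has cardinality at most $\beta/\gamma$. To establish this, I would centralize by writing $g := f - \E_D[f]$ and consider the vectors $u_i := (D_i - D)/D$ inside $L^2(D)$, so that $\E_{D_i}[g] - \E_D[g] = \langle g, u_i\rangle_D$. The pairwise correlation condition translates directly into a Gram matrix bound: $\langle u_i,u_j\rangle_D = \chi_D(D_i,D_j)$, so the Gram matrix of $\{u_i\}_{i\in A_f}$ has diagonal entries at most $\beta$ and off-diagonal entries at most $\gamma$ in absolute value. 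Expanding $\|\sum_{i\in A_f}\epsilon_i u_i\|_D^2$ for a sign vector $\epsilon_i = \mathrm{sgn}\langle g,u_i\rangle_D$ and applying Cauchy--Schwarz against $g$ (which has $\|g\|_D\leq 1$), each summand contributes at least $\tau^2 = 2\gamma$ in absolute value, while the total is bounded by $|A_f|\beta + |A_f|^2\gamma$. Rearranging yields $|A_f| \leq \beta/\gamma$.

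Given this per-query bound, the lower bound follows by a union-bound / covering argument. Consider the transcript tree of the (possibly adaptive, possibly randomized) SQ algorithm: there is an input-independent simulator that, on the reference distribution $D$, produces a response tree of depth $q$ (the number of queries). One then argues that at most $q \cdot (\beta/\gamma)$ of the $s$ distributions $D_i$ can produce a transcript that diverges from the $D$-transcript at some query; all others must be declared indistinguishable from $D$, contradicting the success guarantee. Hence $q \geq s\gamma/\beta$, giving the claimed STAT bound with tolerance $\sqrt{2\gamma}$. For the $\mathrm{VSTAT}(1/\gamma)$ version, the main modification is that the admissible tolerance is the larger of $\gamma$ and $\sqrt{\gamma\,\mathrm{Var}_D[f]}$; the counting estimate is then carried out with a variance-adjusted inner product and an analogous rearrangement (using $\mathrm{Var}_D[f]\leq 1$ and the same Gram structure) gives the same $\beta/\gamma$ bound on $|A_f|$.

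The main obstacle I anticipate is handling adaptivity and randomization cleanly in the counting step: a naive reduction only controls non-adaptive algorithms, and one must either (i) fix the algorithm's internal coins and bound, for every distribution $D_i$, the probability over the oracle's responses that the transcript deviates from the canonical $D$-transcript, or (ii) invoke Yao's principle to move to a deterministic algorithm against a uniform mixture, and then argue via the above $|A_f|\leq \beta/\gamma$ bound that only $\beta/\gamma$ of the $s$ hard distributions are ruled out at each query node. The variance-sensitive argument for $\mathrm{VSTAT}$ adds a second subtlety, since the tolerance now depends on the query; handling the two regimes ($\tau^2 = \gamma$ vs.\ $\tau^2 = \gamma\,\mathrm{Var}_D[f]$) separately and then combining the bounds is the cleanest route.
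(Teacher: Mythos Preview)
The paper does not prove this lemma; it is stated as a known result cited from Feldman et al.\ and invoked as a black box in the proof of the main SQ lower bound. So there is no ``paper's own proof'' to compare against.

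That said, your proposal is a correct sketch of the standard argument from that reference. The core per-query estimate $|A_f| \leq \beta/\gamma$ via the Gram-matrix/Cauchy--Schwarz computation is exactly the mechanism used there (and your arithmetic checks out: with $\|g\|_D^2 = \Var_D[f] \leq 1$ and $\tau^2 = 2\gamma$, the inequality $(|A_f|\tau)^2 \leq |A_f|\beta + |A_f|^2\gamma$ rearranges to $|A_f| \leq \beta/\gamma$). The handling of adaptivity via a canonical $D$-transcript is also the standard route. One point worth making explicit in that step: the adversarial oracle, when the true distribution is $D_i$, is free to return the value $\E_D[f_j]$ whenever $|\E_{D_i}[f_j] - \E_D[f_j]| \leq \tau$; this is what forces the transcript on $D_i$ to coincide with the $D$-transcript and makes the union bound over the $q$ query nodes legitimate. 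Your remark about the $\mathrm{VSTAT}$ case requiring a variance-adjusted version of the same inequality is also correct and is how the original reference handles it.
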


\subsection{Proof of \Cref{fact:near-orth-vec-disc}}
We restate and prove the following fact.

\factNearOrthVecDisc*

\begin{proof}
    Sample two vectors $\vec v,\vec u$ at random, i.e., $\vec v,\vec u\sim \mathcal U_d$. Note that $\vec v\cdot \vec u$ is a sum of Rademacher random variables. We use the following concentration inequality:
    \begin{fact}\label{fct:Rademacher} Let $z_1,\ldots, z_n$ be Rademacher random variables. Then, for any $t>0$, it holds
       \[
       \pr\left[\left|\sum_{i=1}^n z_i \right|\geq t\sqrt{n}\right] \leq2  \exp\left(-\frac{t^2}{2}\right).
       \] 
    \end{fact}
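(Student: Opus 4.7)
The plan is to prove this via the classical Chernoff bounding method applied to the exponential moment generating function. First I would fix $s > 0$ and apply Markov's inequality to the nonnegative random variable $\exp(s \sum_{i=1}^n z_i)$, yielding
\[
\pr\left[\sum_{i=1}^n z_i \geq t\sqrt{n}\right] \leq e^{-s t\sqrt{n}} \, \E\!\left[\exp\!\left(s\sum_{i=1}^n z_i\right)\right].
\]
By independence of the $z_i$, the expectation factors as $\prod_{i=1}^n \E[e^{s z_i}]$, and since each $z_i$ is $\pm 1$ with equal probability, $\E[e^{s z_i}] = \cosh(s)$.

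The key analytic step is the inequality $\cosh(s) \leq e^{s^2/2}$, which I would obtain by comparing Taylor series: $\cosh(s) = \sum_{k \geq 0} s^{2k}/(2k)!$ while $e^{s^2/2} = \sum_{k\geq 0} s^{2k}/(2^k k!)$, and one checks termwise that $(2k)! \geq 2^k k!$. Substituting this gives
\[
\pr\left[\sum_{i=1}^n z_i \geq t\sqrt{n}\right] \leq \exp\!\left(\tfrac{n s^2}{2} - s t\sqrt{n}\right).
\]
I then optimize over $s > 0$, taking $s = t/\sqrt{n}$, which yields the one-sided bound $\exp(-t^2/2)$.

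To obtain the two-sided bound, I would repeat the same computation applied to $-\sum_i z_i$ (which is identically distributed to $\sum_i z_i$ by symmetry of the Rademacher distribution), or equivalently apply the one-sided bound to the random variables $-z_i$. A union bound over the two tails then introduces the factor of $2$ in front, producing the claimed inequality. I do not anticipate a genuine obstacle here: the main technical content is the elementary termwise comparison $\cosh(s) \leq e^{s^2/2}$, and once that is established the rest is a direct optimization.
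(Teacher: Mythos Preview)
Your proof is correct and is the standard Chernoff/Hoeffding argument for Rademacher sums. The paper itself does not prove this fact at all: it simply states it as a known concentration inequality and applies it, so there is nothing to compare your argument against. One small remark: the statement as written does not explicitly say the $z_i$ are independent, but the bound is of course false without independence (take $z_1=\cdots=z_n$); your proof correctly and necessarily uses independence, which is clearly the intended hypothesis.
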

    Using \Cref{fct:Rademacher}, we get that for $t=d^{c}$ for some $0<c<1/2$, we get that
    \[
   \pr\left[\left|\vec v\cdot \vec u \right|\geq d^{1/2+c}\right] \leq 2  \exp\left(-\frac{d^{2c}}{2}\right). 
    \]
    From union bound we get that there exists $2^{\Omega(d^c)}$ such vectors.
\end{proof}
\subsection{Proof of \Cref{clm:pmf-conditional}}
       We restate and prove the following claim.

\clmPmfConditional*
       
\begin{proof}
    Denote $g_{D_{\vec v}}$ the pmf of $D_{\vec v}$. We show the claim only for the distribution $A_{\vec v}$ as $B_{\vec v}$ follows similarly. Note that $A_{\vec v}(\x)= \frac{g_{D_{\vec v}}(\x,y=1)}{\pr[ y=1]}$. By construction, we have that $\pr[ y=1]=\eta +(1-2\eta)2\eps$ and $g_{D_{\vec v}}(\x,y=1)=(\eta \1\{f_{\vec v}(\x)=0\}+(1-\eta)\1\{f_{\vec v}(\x)> 0\})\mathcal{U}_d(\x)=(\eta +(1-2\eta)f_{\vec v}(\x))\mathcal{U}_d(\x)$. Therefore, $A_{\vec v}(\x)=\frac{\eta +(1-2\eta)f_{\vec v}(\x)}{\eta +(1-2\eta)2\eps}{\mathcal{U}_d(\x)}$. Similarly, we show that $B(\x)=\frac{1-\eta -(1-2\eta)f_{\vec v}(\x)}{1-\eta -(1-2\eta)2\eps}{\mathcal{U}_d(\x)}$.
\end{proof}

\subsection{Proof of \Cref{lem:a-b-correlation}}
We restate and prove the following.

\lemABCorrelation*

\begin{proof}
Denote $\kappa_1=1/\pr_{(\x,y)\sim D_{\vec v}}[y=1]$ and $\kappa_0=1/\pr_{(\x,y)\sim D_{\vec v}}[y=0]$. We have that
\begin{align*}
    \chi_{D_0}(D_{\vec v},D_{\vec u})&= \pr_{(\x,y)\sim D_{\vec v}}[y=1]\chi_{\mathcal{U}_d}(A_{\vec v},A_{\vec u}) +\pr_{(\x,y)\sim D_{\vec v}}[y=0]\chi_{\mathcal{U}_d}(B_{\vec v},B_{\vec u})
    \\&=\kappa_1^{-1}\chi_{\mathcal{U}_d}(A_{\vec v},A_{\vec u}) +\kappa_0^{-1}\chi_{\mathcal{U}_d}(B_{\vec v},B_{\vec u})\;.
\end{align*} 
We now bound each term in the above expression.
\begin{claim}\label{clm:bound-a-b-terms}
    We have $\chi_{\mathcal{U}_d}(A_{\vec v},A_{\vec u})\leq (1-2\eta)\kappa_1^2(\E[f_{\vec v}(\x)f_{\vec u}(\x)]-\E[f_{\vec v}(\x)]\E[f_{\vec u}(\x)])$ and $\chi_{\mathcal{U}_d}(B_{\vec v},B_{\vec u})\leq (1-2\eta)\kappa_0^2(\E[f_{\vec v}(\x)f_{\vec u}(\x)]-\E[f_{\vec v}(\x)]\E[f_{\vec u}(\x)])$.
\end{claim}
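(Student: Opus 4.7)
\textbf{Proof plan for \Cref{clm:bound-a-b-terms}.} The plan is a direct algebraic computation using the explicit pmf formulas from \Cref{clm:pmf-conditional} to express each $\chi_{\mathcal{U}_d}$ as (up to a known scalar) the covariance of $f_{\vec v}$ and $f_{\vec u}$ under $\mathcal{U}_d$, after which the stated bound follows from the elementary inequality $(1-2\eta)^2 \leq (1-2\eta)$.

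Setting $a := \E[f_{\vec v}(\x)] = \E[f_{\vec u}(\x)]$ (both equal to $2\eps$ by construction), I have $\kappa_1^{-1} = \pr_{D_{\vec v}}[y=1] = \eta + (1-2\eta)a$. The key preparatory step is to rewrite the density ratio in ``centered'' form:
\[
\frac{A_{\vec v}(\x)}{\mathcal{U}_d(\x)} = \kappa_1\bigl(\eta + (1-2\eta)f_{\vec v}(\x)\bigr) = 1 + \kappa_1(1-2\eta)\bigl(f_{\vec v}(\x) - a\bigr),
\]
where the second equality follows from $\eta + (1-2\eta)a = \kappa_1^{-1}$. An analogous identity holds for $A_{\vec u}$, with the same $\kappa_1$ because $\E[f_{\vec v}] = \E[f_{\vec u}]$.

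Next, I would plug these expressions into the definition $\chi_{\mathcal{U}_d}(A_{\vec v},A_{\vec u}) = \E_{\x\sim\mathcal{U}_d}[(A_{\vec v}(\x)/\mathcal{U}_d(\x))(A_{\vec u}(\x)/\mathcal{U}_d(\x))] - 1$ and expand the product. Because $\E[f_{\vec v}(\x) - a] = \E[f_{\vec u}(\x) - a] = 0$, the two linear cross-terms vanish, the constant $1$ cancels with the $-1$, and only the quadratic term survives:
\[
\chi_{\mathcal{U}_d}(A_{\vec v},A_{\vec u}) = \kappa_1^2(1-2\eta)^2\,\E\bigl[(f_{\vec v}(\x)-a)(f_{\vec u}(\x)-a)\bigr] = \kappa_1^2(1-2\eta)^2\bigl(\E[f_{\vec v}f_{\vec u}]-\E[f_{\vec v}]\E[f_{\vec u}]\bigr).
\]
Since $0 < 1-2\eta < 1$, one has $(1-2\eta)^2 \leq (1-2\eta)$, and the covariance on the right is non-negative in the near-orthogonal regime that is relevant for \Cref{lemma:correlation-bound}; combining the two gives the claimed upper bound on $\chi_{\mathcal{U}_d}(A_{\vec v}, A_{\vec u})$.

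For the bound on $\chi_{\mathcal{U}_d}(B_{\vec v},B_{\vec u})$, I would repeat the computation with $B$ in place of $A$. Using $\kappa_0^{-1} = 1-\eta - (1-2\eta)a$, the analogous centered rewrite is
\[
\frac{B_{\vec v}(\x)}{\mathcal{U}_d(\x)} = 1 - \kappa_0(1-2\eta)\bigl(f_{\vec v}(\x) - a\bigr),
\]
and the sign flip is immaterial because both factors of the product pick it up. The same expansion produces $\chi_{\mathcal{U}_d}(B_{\vec v}, B_{\vec u}) = \kappa_0^2(1-2\eta)^2 (\E[f_{\vec v}f_{\vec u}] - \E[f_{\vec v}]\E[f_{\vec u}])$, from which the second half of the claim follows identically. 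The main obstacle is purely notational: choosing the centered parameterization so that the linear cross-terms vanish and a clean covariance expression pops out, rather than any nontrivial inequality.
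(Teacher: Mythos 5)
Your route --- rewrite the density ratios $A_{\vec v}/\mathcal{U}_d$ and $A_{\vec u}/\mathcal{U}_d$ in centered form so the linear cross-terms drop out after multiplying --- is the paper's route exactly; the approach is identical. Your bookkeeping is actually the more careful one: the exact identity is
\[
\chi_{\mathcal{U}_d}(A_{\vec v},A_{\vec u}) \;=\; (1-2\eta)^2\,\kappa_1^2\,\bigl(\E[f_{\vec v}(\x)f_{\vec u}(\x)]-\E[f_{\vec v}(\x)]\E[f_{\vec u}(\x)]\bigr),
\]
with a \emph{squared} $(1-2\eta)^2$ (one factor of $(1-2\eta)$ is picked up from each of $A_{\vec v}-\mathcal{U}_d$ and $A_{\vec u}-\mathcal{U}_d$). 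The paper's displayed chain shows only a single factor of $(1-2\eta)$, which is a typo.

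The one place your argument does not hold up is the step that downgrades $(1-2\eta)^2$ to $(1-2\eta)$. That step is valid only when $\E[f_{\vec v}(\x)f_{\vec u}(\x)]-\E[f_{\vec v}(\x)]\E[f_{\vec u}(\x)]\geq 0$, and you assert that nonnegativity holds in the near-orthogonal regime rather than proving it. It does not hold in general: on the hypercube, two LTFs with orthogonal defining vectors and the same threshold can have strictly negative covariance. For example, with $d=4$, $\vec v=(1,1,1,1)$, $\vec u=(1,1,-1,-1)$, and threshold $4$, one gets $\E[f_{\vec v}(\x)f_{\vec u}(\x)]=0$ while $\E[f_{\vec v}(\x)]\E[f_{\vec u}(\x)]=1/256$, so the covariance is $-1/256<0$ --- and this is exactly the small-$\eps$, near-orthogonal regime the construction lives in. The clean fix is simply to stop at the exact equality above and carry the stronger $(1-2\eta)^2$ factor through \Cref{lem:a-b-correlation}; that is, modulo the typo, what the paper's own computation produces, and it only improves the downstream bound.
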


\begin{proof}
We first bound $\chi_{\mathcal{U}_d}(A_{\vec v},A_{\vec u})$ as the other follows similarly.
We have that
\begin{align*}
    \chi_{\mathcal{U}_d}(&A_{\vec v},A_{\vec u})=
    \sum_{\x\in\{\pm 1\}^d}\frac{(A_{\vec v}(\x)-\mathcal{U}_d(\x))(A_{\vec u}(\x)-\mathcal{U}_d(\x))}{\mathcal{U}_d(\x)}\\
    &=\frac{1-2\eta}{(\eta +(1-2\eta)\E[f_{\vec v}(\x)])^2}\sum_{\x\in\{\pm 1\}^d}(f_{\vec v}(\x)-\E[f_{\vec v}(\x)])(f_{\vec u}(\x)-\E[f_{\vec u}(\x)])\mathcal{U}_d(\x)
    \\&=\frac{1-2\eta}{(\eta +(1-2\eta)\E[f_{\vec v}(\x)])^2}\left(\E[f_{\vec v}(\x)f_{\vec u}(\x)]-\E[f_{\vec v}(\x)]\E[f_{\vec u}(\x)]\right)\;.
\end{align*}
Working in a similar way, we also get that $  \chi_{\mathcal{U}_d}(B_{\vec v},B_{\vec u})
   =(1-2\eta)\kappa_0^2(\E[f_{\vec v}(\x)f_{\vec u}(\x)]-\E[f_{\vec v}(\x)]\E[f_{\vec u}(\x)])$.
\end{proof}
Using \Cref{clm:bound-a-b-terms}, we get that
\[
\chi_{D_0}(D_{\vec v},D_{\vec u})\leq (1-2\eta)(\kappa_1+\kappa_0)\E[f_{\vec v}(\x)f_{\vec u}(\x)]\leq 2(1-2\eta)\E[f_{\vec v}(\x)f_{\vec u}(\x)]\;.
\]

 It remains to bound  $ \chi^2(D_{\vec v},D_0)$.
We show the following:
\begin{claim}
   Let $\kappa=(1-2\eta)/(\eta +(1-2\eta)2\E[f_{\vec v}(\x)] -\E[f_{\vec v}(\x)]^2)^2$. It holds that 
    \[
    \chi^2(D_{\vec v},D_0)\leq (1-2\eta)\E[f_{\vec v}(\x)] -\E[f_{\vec v}(\x)]^2\;.
    \] 
\end{claim}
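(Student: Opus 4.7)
The plan is to compute $\chi^2(D_{\vec v}, D_0)$ by a direct expansion, exploiting the fact that both $D_{\vec v}$ and $D_0$ share the same $\x$-marginal $\mathcal{U}_d$. Writing $\mu := \E_{\x}[f_{\vec v}(\x)]$ and $p := \pr_{D_0}[y=1] = \eta + (1-2\eta)\mu$, the definition of $D_{\vec v}$ gives $\pr_{D_{\vec v}}[y=1\mid \x] = \eta + (1-2\eta) f_{\vec v}(\x)$, so the joint pmf difference is
\[
D_{\vec v}(\x, y=1) - D_0(\x, y=1) = (1-2\eta)\,(f_{\vec v}(\x) - \mu)\,\mathcal{U}_d(\x),
\]
and the $y=0$ counterpart is its negative. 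Since the $\x$-marginals agree, the chi-squared divergence reduces to a pointwise average (over $\x$) of a conditional chi-squared between two Bernoulli distributions in $y$, with no cross-terms — this is precisely what makes the computation cleaner than the pairwise-correlation bound in \Cref{clm:bound-a-b-terms}.

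Substituting into $\chi^2(D_{\vec v}, D_0) = \sum_{(\x,y)}(D_{\vec v}(\x,y) - D_0(\x,y))^2/D_0(\x,y)$ and using the identity $\tfrac{1}{p}+\tfrac{1}{1-p}=\tfrac{1}{p(1-p)}$, I would obtain
\[
\chi^2(D_{\vec v},D_0)
\;=\; (1-2\eta)^2\,\E_{\x}\big[(f_{\vec v}(\x)-\mu)^2\big]\cdot\frac{1}{p(1-p)}
\;=\; \frac{(1-2\eta)^2\,\mathrm{Var}(f_{\vec v}(\x))}{p(1-p)}.
\]
Since $f_{\vec v}$ is $\{0,1\}$-valued, $\mathrm{Var}(f_{\vec v}(\x)) = \mu - \mu^2 = \E[f_{\vec v}(\x)] - \E[f_{\vec v}(\x)]^2$, which is exactly the numerator appearing in the target bound.

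To finish, I would lower-bound $p(1-p)$. Expanding gives $p(1-p) = \eta(1-\eta) + (1-2\eta)^2(\mu - \mu^2)$, so in the intended application $\eta = 1/3$ is bounded away from $1/2$ and $\mu = 2\eps$ is small, which implies $p(1-p) \geq (1-2\eta)$ up to an absolute constant. Combining this with the identity above yields $\chi^2(D_{\vec v},D_0) \leq (1-2\eta)\,(\E[f_{\vec v}(\x)] - \E[f_{\vec v}(\x)]^2)$, as stated. The core step is the one-line algebraic expansion enabled by the matching $\x$-marginals; the only real care required is in tracking the denominator $p(1-p)$ in the parameter regime of interest, and explaining why the $(1-2\eta)^2$ in the numerator loses one power of $(1-2\eta)$ against this denominator.
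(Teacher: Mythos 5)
Your plan follows essentially the same route as the paper's: expand $\chi^2(D_{\vec v},D_0)$ directly, exploiting the shared $\x$-marginal $\mathcal{U}_d$ and the closed-form expressions for the conditionals. The paper first computes $\chi^2(\mathcal{U}_d,A_{\vec v})$ and $\chi^2(\mathcal{U}_d,B_{\vec v})$ separately and reweights by $\kappa_1^{-1}$ and $\kappa_0^{-1}$; you expand the joint pmf difference all at once. Algebraically these are the same, and in fact your intermediate formula
\[
\chi^2(D_{\vec v},D_0) \;=\; \frac{(1-2\eta)^2\bigl(\E[f_{\vec v}(\x)]-\E[f_{\vec v}(\x)]^2\bigr)}{p(1-p)}, \qquad p := \eta+(1-2\eta)\E[f_{\vec v}(\x)],
\]
is more careful than the paper's, which writes $(1-2\eta)\kappa_1^2$ where $(1-2\eta)^2\kappa_1^2$ should appear, and whose final ``Combining, we get the result'' would require $\kappa_1+\kappa_0\leq 1$, which is false (it equals $1/(p(1-p))\geq 4$).

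The one gap in your argument is exactly the one you flag: you assert $p(1-p)\geq (1-2\eta)$ ``up to an absolute constant'' but then conclude the claim literally, with constant $1$. In the regime the paper uses ($\eta=1/3$, $\E[f_{\vec v}(\x)]=2\eps$ small), one has $p(1-p)\approx\eta(1-\eta)=2/9$ while $1-2\eta=1/3$, so $p(1-p)<1-2\eta$ and your formula gives $\chi^2(D_{\vec v},D_0)\approx\tfrac12\bigl(\E[f_{\vec v}(\x)]-\E[f_{\vec v}(\x)]^2\bigr)$, which exceeds the stated $\tfrac13\bigl(\E[f_{\vec v}(\x)]-\E[f_{\vec v}(\x)]^2\bigr)$. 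So the claim as written does not follow from your computation (and in fact fails at $\eta=1/3$); what you do prove, using $p(1-p)\geq\eta(1-\eta)\geq\eta(1-2\eta)$, is $\chi^2(D_{\vec v},D_0)\leq \tfrac{1}{\eta}(1-2\eta)\bigl(\E[f_{\vec v}(\x)]-\E[f_{\vec v}(\x)]^2\bigr)$, and you should state it that way. The paper's own proof suffers the same unacknowledged constant loss, but it is harmless downstream: the main text only uses $\chi^2(D_{\vec v},D_0)\leq 4(1-2\eta)\eps$, which has enough slack for either version.
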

\begin{proof}
Let $\kappa=(1-2\eta)/(\eta +(1-2\eta)2\E[f_{\vec v}(\x)] -\E[f_{\vec v}(\x)]^2)^2$
    We have that
    \[
    \chi^2(D_{\vec v},D_0)=\kappa_1^{-1}\chi^2(\mathcal{U}_d,A_{\vec v})+\kappa_0^{-1}\chi^2(\mathcal{U}_d,B_{\vec v})\;.
    \]
\begin{align*}
    \chi^2(\mathcal{U}_d,A_{\vec v})=
    \sum_{\x\in\{\pm 1\}^d}\frac{(A_{\vec v}(\x)-\mathcal{U}_d(\x))^2}{\mathcal{U}_d(\x)}&=(1-2\eta)\kappa_1^2\sum_{\x\in\{\pm 1\}^d}(f_{\vec v}(\x)-\E[f_{\vec v}(\x)])^2\mathcal{U}_d(\x)
   \\& = (1-2\eta)\kappa_1^2(\E[f_{\vec v}(\x)] -\E[f_{\vec v}(\x)]^2)\;.
\end{align*}
 Similarly, we show that  $\chi^2(\mathcal{U}_d,B_{\vec v})=(1-2\eta)\kappa_0^2(\E[f_{\vec v}(\x)] -\E[f_{\vec v}(\x)]^2)$. Combining, we get the result.
\end{proof}
\end{proof}

\subsection{Proof of \Cref{lem:propertiesKravchuk}}

\new{The following is a more detailed version of \Cref{lem:propertiesKravchuk}. }
\begin{lemma}
    Let $d,m,k\in \Z$. Then, the following hold
    \begin{enumerate}
        \item $|\mathcal{K}(d,m,k)|\leq 1$ for any $k\in \Z$.
        \item For $k\leq d/2$, it holds
        \[
      \left|\mathcal{K}(d,m,k)\right|\leq e^k2^{3k}\left(\left(\frac{k}{d}\right)^{k/2}+\left(\frac{|d/2-m|}{d}\right)^k\right)\;.
        \]
         \item If $k\leq d/2$ and $|d/2-m|\leq d/4$, then  $|\mathcal{K}(d,m,k)|= \exp(-\Omega(k))$. 
    \end{enumerate}
\end{lemma}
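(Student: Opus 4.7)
Part (1) is immediate from the definition: $\mathcal{K}(n,a,b) = \E_{A,B}[(-1)^{|A\cap B|}]$ for independent uniform random subsets $A,B$ of sizes $a,b$, so $|\mathcal{K}(n,a,b)| \le 1$ trivially.

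For part (2), the main tool is the generating-function identity
\[
(1-z)^m(1+z)^{d-m} \;=\; \sum_{k=0}^d \binom{d}{k}\,\mathcal{K}(d,m,k)\,z^k,
\]
which follows by expanding both factors via the binomial theorem and matching coefficients with the definition of $\mathcal{K}$. Combined with Cauchy's integral formula, this yields
\[
\binom{d}{k}\,\mathcal{K}(d,m,k) \;=\; \frac{1}{2\pi i}\oint_{|z|=r}\frac{(1-z)^m(1+z)^{d-m}}{z^{k+1}}\,dz
\]
for any $r>0$. The plan is to bound this by $\max_{|z|=r}|(1-z)^m(1+z)^{d-m}|/r^k$ and optimize $r$ via a saddle-point analysis. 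The two terms in the claimed bound correspond to two natural regimes: when $m$ is bounded away from $d/2$, the real saddle point lies near $r \approx |d-2m|/d$, producing the ``binomial-approximation'' term $(|d/2-m|/d)^k$ (matching the formula $(1-2m/d)^k$ one would get by sampling $k$ indicators \emph{with} replacement from a set of $m$ marked elements out of $d$); when $m\approx d/2$, the dominant saddle is near $r\approx\sqrt{k/d}$, giving the ``Gaussian-fluctuation'' term $(k/d)^{k/2}$. The prefactor $e^k 2^{3k}$ absorbs $\binom{d}{k}^{-1}$ (bounded via Stirling) together with various constants from the saddle-point slack.

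Part (3) does \emph{not} follow directly from the bound in (2), whose prefactor $(8e)^k$ dominates once $k/d$ is a constant fraction of one. Instead, I would use the probabilistic interpretation $\mathcal{K}(d,m,k) = \E[(-1)^X]$ with $X\sim\mathrm{Hyp}(d,m,k)$, and compare with the binomial analog $\E[(-1)^Y]=(1-2m/d)^k$ for $Y\sim\mathrm{Bin}(k,m/d)$. Under $|d/2-m|\le d/4$ the binomial main term has magnitude $|1-2m/d|^k\le 2^{-k}$; a standard estimate (e.g.\ via negative association of hypergeometric sampling, or direct inspection of the exact formula $|\mathcal{K}(d,d/2,k)| = \binom{d/2}{k/2}/\binom{d}{k}$ in the worst case $m=d/2$, which by Stirling is $\Theta((k/(ed))^{k/2})$) bounds the coupling error by $O((k/d)^{k/2})$, which is $O(2^{-k/2})$ when $k\le d/2$. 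Both contributions are $\exp(-\Omega(k))$, proving the claim.

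The main obstacle is the saddle-point contour optimization in part (2): one must handle both regimes in a unified way and extract the claimed two-term upper bound with explicit constants, in particular by splitting the integrand near the saddle from the bulk contribution on the rest of the contour. Part (3) is conceptually routine once one has a quantitative comparison between the hypergeometric and binomial distributions in the relevant parameter range.
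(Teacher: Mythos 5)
Your Part (1) argument is identical to the paper's. For Parts (2) and (3) you take a genuinely different route, so here is the comparison.

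For Part (2), the paper does not re-derive the bound: it cites Claim 22 of Barak--Impagliazzo--Jabs--Lovett (BIJL21), which reads
\[
\left|\mathcal{K}(d,m,k)\right|\leq \frac{e^k2^{3k}}{\binom{d}{k}}\left(\left(\frac{d}{k}\right)^{k/2}+\left(\frac{|d/2-m|}{k}\right)^k\right),
\]
and then simply applies $\binom{d}{k}\geq (d/k)^k$. Your generating-function identity $(1-z)^m(1+z)^{d-m}=\sum_k \binom{d}{k}\mathcal{K}(d,m,k)z^k$ is the right object, and the saddle-point heuristic correctly predicts the two regimes and the two terms in the bound; this would be a self-contained route to the same estimate, but what you have written is a plan, not a proof. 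The contour optimization and the handling of the bulk away from the saddle is exactly where the work lies, and that work is not done here.

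For Part (3), both you and the paper notice (correctly) that the Part (2) bound has an $(8e)^k$-type prefactor and is useless once $k/d$ is bounded away from zero, so a separate argument is needed in that regime. But the separate arguments differ. The paper fixes $A$ to be the odd-indexed elements $\{1,3,\dots,2m-1\}$ (using symmetry), partitions $[d]$ into $m$ consecutive pairs $\{2i-1,2i\}$, and observes via a swapping involution that $\E[(-1)^{|A\cap B|}\,|\,s_1^B,\dots,s_m^B]=0$ whenever any pair is hit by exactly one element of $B$; it then Poissonizes $B$ to bound the probability that every pair is hit $0$ or $2$ times by $\exp(-\Omega(k))$, and pays an $O(\sqrt{k})$ factor to decondition on $|B|=k$. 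Your approach instead compares the hypergeometric variable $|A\cap B|$ to a binomial, writes the main term $(1-2m/d)^k\leq 2^{-k}$, and wants to bound the hypergeometric--binomial error by $O((k/d)^{k/2})$ uniformly for $m\in[d/4,3d/4]$. This is plausible in spirit, but two things are under-justified: (i) your claim that $m=d/2$ is ``the worst case'' for the error term is asserted, not argued (negative association would give monotone coupling in a suitable sense, but you need a uniform quantitative comparison over $m$, not only the endpoint $m=d/2$); (ii) the Stirling asymptotics you quote for $\binom{d/2}{k/2}/\binom{d}{k}$, namely $\Theta((k/(ed))^{k/2})$, are accurate only for $k=o(d)$, whereas Part (3) is needed precisely when $k$ is a constant fraction of $d$. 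Both issues can presumably be fixed, but as written your Part (3) is also a sketch rather than a proof.

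In short: same Part (1); Part (2) you propose a from-scratch saddle-point derivation where the paper cites a known bound; Part (3) you propose a binomial-comparison argument where the paper uses a pair-swapping involution plus Poissonization. Both of your alternative routes are reasonable directions but neither is carried out to completion, and Part (3) has two specific claims that would need justification.
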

\begin{proof} The first part follows from the fact that $\mathcal{K}(d,m,k)$ is the expectation of a random variable with support in $[-1,1]$.
For the next claims, we use the following fact.
\begin{fact}[Claim 22 of \citet{BIJL21}]
    Let $d,m,k\in \Z$, then
    \[
    \left|\mathcal{K}(d,m,k)\right|\leq \frac{e^k2^{3k}}{\binom{d}{k}}\left(\left(\frac{d}{k}\right)^{k/2}+\left(\frac{|d/2-m|}{k}\right)^k\right)\;.
    \]
\end{fact}
Using the inequality $\binom{d}{k}\geq (d/k)^k$, we have that
\[
\left|\mathcal{K}(d,m,k)\right|\leq e^k2^{3k}\left(\left(\frac{k}{d}\right)^{k/2}+\left(\frac{|d/2-m|}{d}\right)^k\right)\;.
\]
If $k\leq d/12$ and $|d/2-m|\leq d/4$, then we have that 
$|\mathcal{K}(d,m,k)|\leq 2\exp(-0.2 k)$.
We provide a proof for the final part, 
i.e., the case where $d/12\leq k\leq d/2$. Denote $Y_{A,B}=(-1)^{A\cap B}$. The sum we want to bound is equal to 
$\E_{A,B}[Y_{A,B}]$. Denote $A'=\{1,3,\ldots,2m-1\}$ and $s_i^B=|B\cap \{2i-1,2i\}|$ for $i=1,\ldots,m$. Note that the $\E_{A,B}[Y_{A,B}\mid A=A',s_1^B,\ldots,s_m^B]=0$, if we condition that any $s_i^B=1$. This holds because if $s_i^B=1$ for some $i$, then we can swap which $2i$ and $2i-1$ is in $B$ to create $B'$ and hence $(-1)^{A\cap B}+(-1)^{A\cap B'}=0$.

It suffices to show that if we choose $B$ at random, i.e., $B$ is a uniform subset of $[d]$ of size $k$, then with probability at most $\exp(-\Omega(k))$
we are in the case where no $s_i^B$ is equal to $1$. To show that, we create a new random variable $B'$ and we sample $B'$ as follows: we let the size of $B'$ be 
equal \new{to} $\mathrm{Bin}(d,k/d)$,  
which is equivalent to sampling each element of $[d]$ with probability $k/d$. Now each $s_i^{B'}$ is independent of each other. The probability \new{that} each $s_i^{B'}$ \new{is} equal to one 
is $2k/d(1-k/d)=\Omega(k/d)$. Therefore, the probability that no $s_i^{B'}$ is one is at most 
$$(1-\Omega(k/d))^m\leq \exp(-\Omega(-km/d))=\exp(-\Omega(k))\;,$$ 
where we used that $|d/2-m|\leq d/4$. 
Therefore, $|\E[Y_{A,B'}|A]|\leq \exp(-\Omega(k))$. 
It remains to \new{relate} the expectation with respect $B'$ 
to the expectation of $B$. Note that according to the sampling rule, 
there is an $\Omega(1/\sqrt{k})$ probability of generating 
a uniform subset of size $k$, but the probability {that} $Y$ 
\new{is} non-zero is at most $\exp(-\Omega(k))$. 
Therefore, we have that 
$$|\E[Y_{A,B'}|A]|=|\E[Y_{A,B'}|A,|B'|=k]|\leq \exp(-\Omega(k))\sqrt{k}=\exp(-\Omega(k)) \;,$$ 
where we used that $k$ is large enough by assumption.
Therefore, $|\E[Y_{A,B}|A]|\leq \exp(-\Omega(k))$ and  
the total expectation is at most $\exp(-\Omega(k))$.

\end{proof}

\subsection{Proof of \Cref{claim:fourier-coef}}
We restate and prove the following.

\claimFourierCoef*

\begin{proof}
    We have that
    \begin{align*}
         \E[f_{\vec v}(\x)\chi_T(\x)]&=2^{-d}\sum_{\x\in\{\pm 1\}^d} f_{\vec v}(\x)\chi_T(\x)
         \\&=2^{-d}\sum_{s=t}^d \sum_{A\subseteq [d],|A|=s}\prod_{i\in T\cap A}\vec v_i \prod_{i\in T\cap \bar A}(-\vec v_i)
         \\&=\chi_T(\vec v)(-1)^{|T|}2^{-d}\sum_{s=t}^d \sum_{A\subseteq [d],|A|=s}(-1)^{|T\cap A|}
         \\&= \chi_T(\vec v)(-1)^{|T|}2^{-d}\sum_{s=t}^d \sum_{A\subseteq [d],|A|=s}\binom{d}{|T|}^{-1} \sum_{B\subseteq [d],|B|=|T|}(-1)^{|A\cap T|}
         \\&=\chi_T(\vec v)(-1)^{|T|}2^{-d}\sum_{s=t}^d \binom{d}{|s|}\mathcal{K}(d,s,|T|) \;,
    \end{align*}
    where in the first equality we changed the summation so that $s$ is the number of $\x_i$ that agree with $\vec v_i$, and we sum from $t$ as if $\vec v$ and $\x$ agree in more than $t$ coordinates, then the indicator is positive.  In the third inequality, we used the fact that, due to the symmetry, the sum only depends on the size of $|T|$; hence, we sum over all subsets with size $|T|$ and divide by the number of subsets with size $|T|$.
\end{proof}
\subsection{Proof of \Cref{clm:lower-bound-edge}}
We restate and prove the following: 

\clmLowerBoundEdge*
\begin{proof}
        We have that 
        \[
        \mathcal K(d,d,s)=\frac{1}{\binom{d}{s}}\sum_{A\subseteq[d],|A|=s}(-1)^{|A\cap [d]|}=(-1)^s\;,
        \]
        hence, $R_d=(2^{-d} \sum_{s=t}^d(-1)^s\binom{d}{s})^2\mathcal{K}(d,d,m)$. Therefore, $|R_d|\leq 2^{-2d}( \sum_{s=t}^d(-1)^s\binom{d}{s})^2$. Using the two identities about binomial sums, i.e., that $\sum_{s=0}^t (-1)^s\binom{d}{s}=(-1)^t\binom{d-1}{t}$ and $\sum_{s=0}^d(-1)^s\binom{d}{s}=0$, we have that $|R_d|\leq 2^{-2d}\binom{d-1}{t-1}^2$.
    \end{proof}

\subsection{Proof of \Cref{clm:lower-bound-large-k}}
We restate and prove the following claim.

\clmLowerBoundLargeK*

\begin{proof}
    Note that, $\sum_{k=0}^d\binom{d}{k}\left(2^{-d}\sum_{s=t}^d\binom{d}{s}\mathcal K(d,k,s)\right)^2=\E[f_{\vec v}^2(\x)]=\eps$. From \Cref{lem:propertiesKravchuk}, we get that $|\mathcal K(d,m,k)|\leq \exp(-c k)$, where $c>0$ is an absolute constant. Therefore, if $d/2\geq k\geq c\log(d/\eps)$ we have that $|\mathcal K(d,m,k)|\leq \eps/d$. Furthermore, using the fact that $|\mathcal K(d,m,k)|$ is symmetric with center $d/2$, we get that if $d/2\leq k\leq d-c\log(d/\eps)$, \new{then} we also have that $|\mathcal K(d,m,k)|\leq \eps/d$. Therefore, we have that 
    \begin{align*}
            \sum_{k=c\log(d/\eps)}^{d/2-c\log(d/\eps)} R_k
            &\leq \sum_{k=c\log(d/\eps)}^{d/2-c\log(d/\eps)} |R_k|
            \leq (\eps/d)\sum_{k=c\log(d/\eps)}^{d/2-c\log(d/\eps)} \binom{d}{k}\left(2^{-d}\sum_{s=t}^d\binom{d}{s}\mathcal K(d,k,s)\right)^2\leq \eps^2/d\;.
    \end{align*}
This completes the proof. 
\end{proof}

\section{Lower Bound for Low-Degree Polynomial Testing}\label{app:low-degree}

We begin by formally defining a hypothesis problem. 
\begin{definition}[Hypothesis testing] \label{def:hypothesis-testing-general}
    Let a distribution $D_0$ and a set $\mathcal{S} = \{ D_u \}_{u \in S}$ of distributions on $\R^d$. Let $\mu$ be a prior distribution on the indices $S$ of that family. We are given access (via i.i.d.\ samples or oracle) to an \emph{underlying} distribution where one of the two is true:
    \begin{itemize}
        \item $H_0$: The underlying distribution is $D_0$.
        \item $H_1$: First $u$ is drawn from $\mu$ and then the underlying distribution is set to be  $D_u$.
    \end{itemize}
    We say that a (randomized) algorithm solves the hypothesis testing problem if it succeeds with non-trivial probability (i.e.,  greater than $0.9$). 
\end{definition}
\begin{definition} \label{prob:hyp-def}
   Let $D_0$ be the joint distribution $D_0$ over the pair $(\x,y) \in \{\pm 1\}^d\times\{0,1\}$ where $\x\sim \mathcal U_d$ and $y \sim D_0(y)$ independently of $\x$. Let $D_v$ be the joint distribution over pairs $(\x,y) \in \{\pm 1\}^d\times\{0,1\}$ where the marginal on $y$ is again $D_0(y)$ but the conditional distribution $E_{\vec v}(\x|1)$ is of the form $A_{\vec v}$ (as in \Cref{thm:sq-theorem}) and the conditional distribution $E_{\vec v}(\x|0)$ is of the form $B_{\vec v}$ . Define $\mathcal S = \{E_{\vec v} \}_{{\vec v} \in S}$ for $S$ being the set of $d$-dimensional nearly orthogonal vectors from \Cref{fact:near-orth-vec-disc} and let the hypothesis testing problem  be distinguishing between $D_0$ vs. $\mathcal{S}$ with prior $\mu$ being the uniform distribution on $S$.
\end{definition}

We  need the following variant of the statistical dimension from \citet{brennan2020statistical}, which is closely related to the hypothesis testing problems considered in this section. Since this is a slightly different definition from the statistical dimension ($\mathrm{SD}$) used so far, we will assign the distinct notation ($\mathrm{SDA}$) for it.

\paragraph{Notation} For $f:\R \to \R$, $g:\R \to \R$ and a distribution $D$, we define the inner product $\langle f,g \rangle_D = \E_{X \sim D}[f(X)g(X)]$ and the norm $\snorm{D}{f} = \sqrt{\langle f,f \rangle_D }$.

\begin{definition}[Statistical Dimension] \label{def:SDA}
    For the hypothesis testing problem of \Cref{def:hypothesis-testing-general}, we define the \emph{statistical dimension} $\mathrm{SDA}(\mathcal{S},\mu, n)$ as follows:
    \begin{align*}
        \mathrm{SDA}(\mathcal{S},\mu, n) = \max \left\lbrace q \in \mathbb{N}  :  \E_{u,v \sim \mu}[| \langle  \bar{D}_u,\bar{D}_v  \rangle_{D_0} - 1 | \; | \; E] \leq \frac{1}{n} \; \text{for all events $E$ s.t. } \pr_{u,v \sim \mu}[E] \geq \frac{1}{q^2} \right\rbrace \;.
    \end{align*}
We will omit writing $\mu$ when it is clear from the context.
\end{definition}

The following lemma translates the $(\gamma,\beta)$-correlation of $\mathcal S$ to a lower bound for the statistical dimension of the hypothesis testing problem.
The proof is very similar to that of Corollary 8.28 of~\citet{brennan2020statistical} but it is given below for completeness.
\begin{lemma} \label{lem:SDA-bound}
    Let $0<c<1/2$ and $d,m \in \Z_+$. Consider the hypothesis testing problem of \Cref{prob:hyp-def}. Then, for any $q \geq 1$,
    \begin{align*}
            \mathrm{SDA}\left( \mathcal{D}, \left( \frac{\eps^{-1}\Omega(d)^{1/2-c}}{(1-2\eta)\eps(q^2 /2^{\Omega(d^{c/2})} + 1)} \right)  \right) \geq q \;.
    \end{align*}
\end{lemma}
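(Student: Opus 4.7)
The plan is to unfold the definition of $\mathrm{SDA}$ and bound the required conditional expectation directly using the pairwise correlation estimates already established in the paper. Concretely, to prove $\mathrm{SDA}(\mathcal{D}, n) \geq q$, I need to show that for every event $E$ on pairs $(\vec v, \vec u) \in S \times S$ satisfying $\pr_{\vec v, \vec u \sim \mu}[E] \geq 1/q^2$, one has $\E_{\vec v, \vec u}[|\chi_{D_0}(D_{\vec v}, D_{\vec u})| \mid E] \leq 1/n$, where I have used that $\langle \bar D_{\vec v}, \bar D_{\vec u}\rangle_{D_0} - 1 = \chi_{D_0}(D_{\vec v}, D_{\vec u})$.

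First I would split the expectation according to whether $\vec v = \vec u$ or not:
\begin{equation*}
\E[|\chi| \mid E] \;=\; \frac{1}{\pr[E]} \bigl(\E[|\chi|\, \mathbf 1_E \mathbf 1_{\vec v = \vec u}] + \E[|\chi|\, \mathbf 1_E \mathbf 1_{\vec v \neq \vec u}]\bigr).
\end{equation*}
For the diagonal piece, I would use \Cref{lem:a-b-correlation} with $\vec v = \vec u$, which gives $|\chi_{D_0}(D_{\vec v}, D_{\vec v})| = \chi^2(D_{\vec v}, D_0) \leq (1-2\eta)\eps$, together with $\pr[\vec v = \vec u] = 1/|S| = 1/2^{\Omega(d^c)}$ (from \Cref{fact:near-orth-vec-disc}). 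For the off-diagonal piece, I would combine \Cref{lem:a-b-correlation} with \Cref{lemma:correlation-bound} to obtain
\begin{equation*}
|\chi_{D_0}(D_{\vec v}, D_{\vec u})| \;\leq\; 2(1-2\eta)\bigl(\E[f_{\vec v}(\x) f_{\vec u}(\x)] - \E[f_{\vec v}(\x)]\E[f_{\vec u}(\x)]\bigr) \;\leq\; O\bigl((1-2\eta) \eps^2 \log^2(d/\eps) |\vec v \cdot \vec u|/d\bigr),
\end{equation*}
and then invoke the near-orthogonality from \Cref{fact:near-orth-vec-disc} to bound $|\vec v \cdot \vec u|/d \leq d^{c-1/2}$ uniformly over distinct $\vec v, \vec u \in S$. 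The key point is that the off-diagonal bound is a deterministic worst-case estimate, so conditioning on $E$ does not blow it up, while the diagonal bound is killed by the exponentially small probability $1/|S|$ even after paying the $q^2$ blowup from $\pr[E] \geq 1/q^2$.

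Combining the two pieces yields
\begin{equation*}
\E[|\chi| \mid E] \;\leq\; \frac{q^2 (1-2\eta)\eps}{2^{\Omega(d^c)}} + O\bigl((1-2\eta) \eps^2 \log^2(d/\eps) \, d^{c-1/2}\bigr).
\end{equation*}
To finish, I would verify that this is at most $1/n$ with the $n$ stated in the lemma by straightforward algebra: factoring out $(1-2\eta)\eps^2/d^{1/2-c}$, the second summand gives the ``$+1$'' inside the parenthesis, and the first summand gives the ``$q^2/2^{\Omega(d^{c/2})}$'' term once one uses that $2^{\Omega(d^c)}$ dominates $\eps^{-1} \cdot 2^{\Omega(d^{c/2})} \cdot d^{1/2-c}$ in the relevant regime of $d$. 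The main obstacle — and it is a mild one — is bookkeeping the polylogarithmic $\log^2(d/\eps)$ factor from \Cref{lemma:correlation-bound} against the polynomial $d^{c-1/2}$ target, which is handled by slightly loosening the exponent on $d$ (absorbed into the $\Omega(\cdot)$ in the statement). No new distributional arguments are required; the lemma is essentially a repackaging of the already-proven correlation bounds into the $\mathrm{SDA}$ framework so that it can be fed into a low-degree testing lower bound downstream.
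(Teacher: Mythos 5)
Your proposal matches the paper's proof: both unfold the $\mathrm{SDA}$ definition, split the conditional expectation $\E[|\chi_{D_0}(D_{\vec v},D_{\vec u})|\,|\,E]$ into the diagonal ($\vec v=\vec u$) and off-diagonal ($\vec v\neq\vec u$) pieces, kill the diagonal contribution using $\pr[\vec v=\vec u\mid E]\le q^2/|S|$ together with the $\beta$-bound on $\chi^2(D_{\vec v},D_0)$, and bound the off-diagonal piece uniformly by the $\gamma$-correlation coming from \Cref{lem:a-b-correlation}, \Cref{lemma:correlation-bound}, and \Cref{fact:near-orth-vec-disc}. The only cosmetic difference is that the paper writes the split compactly as $\min\bigl(1,\frac{1}{|S|\pr[E]}\bigr)\beta+\max\bigl(0,1-\frac{1}{|S|\pr[E]}\bigr)\gamma$ whereas you bound $\pr[\vec v\neq\vec u\mid E]\le 1$, a negligibly weaker estimate; you also correctly flag (as the paper tacitly does) that the $\log^2(d/\eps)$ factor from \Cref{lemma:correlation-bound} must be absorbed into the hidden constants.
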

    \begin{proof}
    The first part is to calculate the correlation of the set $\mathcal S$. By \Cref{thm:sq-theorem}, we know that the set $\mathcal{S}$ is $(\gamma,\beta)$-correlated with $\gamma = (1-2\eta)\eps^2\Omega(d)^{c-1/2} $ and $\beta = 4(1-2\eta)\eps$. 
    
    We next calculate the SDA according to \Cref{def:SDA}. We denote by $\bar{E}_{\vec v}$ the ratios of the density of $E_{\vec v}$ to the density of $R$. Note that the quantity  $\langle \bar{E}_{\vec u},\bar{E}_{\vec v} \rangle - 1$ used there is equal to $\langle \bar{E}_{\vec u} - 1,\bar{E}_{\vec v}  - 1\rangle$. Let $E$ be an event that has $\pr_{\vec u,\vec v \sim \mu}[E] \geq 1/q^2$. For $d$ sufficiently large we have that
    \begin{align*}
        \E_{u,v \sim \mu} [| \langle \bar{E}_{\vec u} ,\bar{E}_{\vec v} \rangle - 1 | E ] &\leq \min\left( 1,\frac{1}{|\mathcal{S}_{}| \pr[E]} \right) \beta 
        +\max\left( 0,1-\frac{1}{|\mathcal{S}_{}| \pr[E]} \right) \gamma \\
         &\leq (1-2\eta)\eps\left( \frac{q^2}{2^{\Omega(d^c)}}  + \frac{\eps}{\Omega(d)^{1/2-c}}\right)= (1-2\eta)\eps\left( \frac{\eps^{-1}\Omega(d)^{1/2-c}}{q^2 /2^{\Omega(d^{c/2})} + 1} \right)^{-1} 
        \;,
    \end{align*}
     where the first inequality uses that $\pr[\vec u=\vec v | E] = \pr[\vec u=\vec v , E]/\pr[E]$ and bounds the numerator in two different ways: $\pr[\vec u=\vec v , E]/\pr[E] \leq \pr[\vec u=\vec v ]/\pr[E] = 1/(|\mathcal{S}| \pr[E])$ and $\pr[\vec u=\vec v , E]/\pr[E] \leq \pr[E]/\pr[E] =1$.
\end{proof}

\subsection{Preliminaries: Low-Degree Method} \label{sec:appendix-low-degree-basics}

We begin by recording the necessary notation, definitions, and facts. This section mostly follows~\citet{brennan2020statistical}.

\paragraph{Low-Degree Polynomials} A function $f : \R^a \to \R^b$ is a polynomial of degree at most $k$ if it can be written in the   
form 
\begin{align*}
    f(x) = (f_1(x), f_2(x), \ldots, f_b(x) )\;,
\end{align*}
where each $f_i : \R^a \to \R$ is a polynomial of degree at most $k$. We allow polynomials to have random coefficients as long as they are independent of the input $x$. When considering \emph{list-decodable estimation} problems, an algorithm in this model of computation is a polynomial $f: \R^{d_1 \times n} \to \R^{d_2 \times \ell}$, where $d_1$ is the dimension of each sample, $n$ is the number of samples, $d_2$ is the dimension of the output hypotheses, and $\ell$ is the number of hypotheses returned. On the other hand, \citet{brennan2020statistical} focuses on \emph{binary hypothesis testing} problems defined in \Cref{def:hypothesis-testing-general}. 

A degree-$k$ polynomial test for \Cref{def:hypothesis-testing-general} is a degree-$k$ polynomial $f : \R^{d \times n} \to \R$ and a threshold $t \in \R$. The corresponding algorithm consists of evaluating $f$ on the input $x_1, \ldots, x_n$ and returning $H_0$ if and only if $f(x_1, \ldots, x_n) > t$.
\begin{definition}[$n$-sample $\eps$-good distinguisher]
    We say that the polynomial  $p : \R^{d\times n} \mapsto \R$ is an $n$-sample $\eps$-distinguisher 
    for the hypothesis testing problem in \Cref{def:hypothesis-testing-general} if 
    $$|{\E_{X \sim D_0^{ \otimes n }} [p(X)]  - \E_{u \sim \mu} \E_{X \sim D_{u}^{\otimes n}} [p(X)] }| \geq \eps \sqrt{ \Var_{X \sim D_0^{\otimes n}} [p(X)]} \;.$$ 
    We call $\eps$ the \emph{advantage} of the distinguisher.
\end{definition}
Let $\mathcal{C}$ be the linear space of polynomials with  a degree at most $k$. The best possible advantage is given by the \emph{low-degree likelihood ratio}\begin{equation*}
    \max_{\substack{p \in \mathcal C \\  \E_{X \sim D_0^{\otimes n}}[p^2(X)] \leq 1}}  |{\E_{u \sim \mu} \E_{X \sim D_{u}^{\otimes n}} [p(X)] - \E_{X\sim D_0^{ \otimes n }} [p(X)]}| 
    = \snorm{D_0^{ \otimes n }}{\E_{u \sim \mu}\left[(\bar{D}_u^{\otimes n})^{\leq k}\right]  -  1}\;,
\end{equation*}
where we denote $\bar{D}_u = D_u/D_0$ and the notation $f^{\leq k}$ denotes the orthogonal projection of $f$ to $\mathcal C$.

Another notation we will use regarding a finer notion of degrees is the following:  We say that the polynomial $f(x_1,\ldots, x_n) : \R^{d \times n} \to \R$ has \emph{samplewise degree} $(r,k)$ if it is a polynomial, where each monomial uses at most $k$ different samples from $x_1,\ldots, x_n$ and uses degree at most $r$ for each of them.  In analogy to what was stated for the best degree-$k$ distinguisher, the best distinguisher of samplewise degree $(r,k)$-achieves advantage $\snorm{D_0^{ \otimes n }}{\E_{u \sim \mu} [(\bar{D}_u^{\otimes n})^{\leq r,k}] - 1}$ the notation $f^{\leq r,k}$
now means the orthogonal projection of $f$ to the space of all samplewise degree-$(r,k)$ polynomials with unit norm.

\subsection{Hardness of Hypothesis Testing Against Low-Degree Polynomials} \label{appendix:hardness-of-hypothesis-testing}
We restate and prove the following.
\begin{theorem}\label{thm:hypothesis-testing-hardness}
        Let $0<c<1/2$. Consider the hypothesis testing problem of \Cref{prob:hyp-def}. For $d \in \Z_+$ with  $d$ larger than an absolute constant, any $n \leq \Omega(d)^{1/2-c}/(\eps^2(1-2\eta))$ and any even integer $k < d^{c/4}$, we have that
        \begin{align*}
            \snorm{D_0^{ \otimes n }}{\E_{\vec v \sim \mu} \left[(\bar{E}_{\vec v}^{\otimes n})^{\leq \infty,\Omega(k)}\right] - 1}^2 \leq 1\;.
        \end{align*}
    \end{theorem}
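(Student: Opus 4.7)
The plan is to compute the low-degree norm explicitly as a sum of moments of the pairwise correlations $\chi_{D_0}(E_{\vec u}, E_{\vec v})$, and then to bound these moments using \Cref{lem:a-b-correlation} and \Cref{lemma:correlation-bound}.

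First, I would derive the standard moment identity. Setting $\tilde E_{\vec v} := \bar E_{\vec v} - 1$ (mean-zero under $D_0$), the expansion
\begin{equation*}
\bar E_{\vec v}^{\otimes n} \;=\; \prod_{i=1}^n \bigl(1 + \tilde E_{\vec v}(x_i)\bigr) \;=\; \sum_{I \subseteq [n]} \prod_{i \in I} \tilde E_{\vec v}(x_i)
\end{equation*}
is an orthogonal decomposition in $L^2(D_0^{\otimes n})$, and the projection onto samplewise degree $(\infty, m)$ retains exactly the subsets $I$ with $|I| \leq m$. Taking squared norms and observing that cross terms with $I \neq J$ vanish by the mean-zero property yields
\begin{equation*}
\snorm{D_0^{\otimes n}}{\E_{\vec v \sim \mu}[(\bar E_{\vec v}^{\otimes n})^{\leq \infty, m}] - 1}^2 \;=\; \sum_{j=1}^m \binom{n}{j}\, \E_{\vec u, \vec v \sim \mu}\!\bigl[\chi_{D_0}(E_{\vec u}, E_{\vec v})^j\bigr].
\end{equation*}

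Second, I would split the expectation into the diagonal case $\vec u = \vec v$ (probability $1/|S|$) and the off-diagonal case. \Cref{lem:a-b-correlation} gives $\chi_{D_0}(E_{\vec v}, E_{\vec v}) = \chi^2(E_{\vec v}, D_0) \leq \beta := O((1-2\eta)\eps)$, while \Cref{lemma:correlation-bound} combined with \Cref{fact:near-orth-vec-disc} gives $|\chi_{D_0}(E_{\vec u}, E_{\vec v})| \leq \gamma := O((1-2\eta)\eps^2\, d^{c-1/2})$ for distinct $\vec u, \vec v \in S$ (absorbing the $\log^2(d/\eps)$ factor by perturbing $c$ infinitesimally, which is valid since $c$ is arbitrary). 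With $|S| = 2^{\Omega(d^c)}$, this gives $|\E_{\vec u, \vec v}[\chi_{D_0}(E_{\vec u}, E_{\vec v})^j]| \leq \beta^j/|S| + \gamma^j$.

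Third, I would bound the two resulting sums. For the off-diagonal part, $\binom{n}{j}\gamma^j \leq (n\gamma)^j/j!$, and by tuning the hidden constant in $n \leq \Omega(d)^{1/2-c}/(\eps^2(1-2\eta))$ so that $n\gamma$ is a sufficiently small constant, $\sum_{j=1}^m \binom{n}{j}\gamma^j \leq e^{n\gamma} - 1 \leq 1/2$. For the diagonal part, using $\binom{n}{j} \leq (en/j)^j$ and noting that the terms are increasing in $j$ in our regime (since $n\beta \gg m$), the sum is bounded by $m\,(en\beta/m)^m/|S|$; taking $m = \Omega(k)$ with a sufficiently small constant and using $n\beta \leq O(d^{1/2-c}/\eps)$, the logarithm of this expression is $O(d^{c/4}\log(d/\eps)) - \Omega(d^c)$, which is negative provided $\log(1/\eps) = O(d^{3c/4})$. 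This is an exceedingly mild condition, and we obtain a total bound of at most $1$.

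The main obstacle is the joint calibration of several small constants: the polylog factor in \Cref{lemma:correlation-bound} must be absorbed into the $\Omega(d)^{1/2-c}$ exponent, and the hidden constants in $m = \Omega(k)$ and in the upper bound on $n$ must be chosen compatibly so that both the diagonal and off-diagonal sums fall strictly below $1/2$. Conceptually, the key mechanism is a tension between the exponentially large family size $|S| = 2^{\Omega(d^c)}$ of near-orthogonal vectors (which kills the diagonal contribution even when $n\beta$ is polynomially large) and the smallness of the off-diagonal correlation $\gamma$ (which keeps the main geometric-type sum bounded), with the constraint $k < d^{c/4}$ ensuring the truncation degree is comfortably below $d^c$.
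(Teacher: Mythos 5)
Your proof is correct, but it takes a genuinely different route from the paper. The paper proves this theorem as a two-step black-box reduction: it first verifies the SDA condition via \Cref{lem:SDA-bound} (which repackages the $(\gamma,\beta)$-correlation of the family into a bound on the conditional expectation $\E_{u,v\sim\mu}[|\langle\bar D_u,\bar D_v\rangle_{D_0}-1|\mid E]$), and then cites Theorem 4.1 of \citet{brennan2020statistical} to convert the SDA lower bound into the low-degree likelihood ratio bound. Your proof bypasses the SDA machinery entirely and computes the low-degree norm directly via the standard orthogonal decomposition $\bar E_{\vec v}^{\otimes n}=\sum_{I\subseteq[n]}\prod_{i\in I}\tilde E_{\vec v}(x_i)$, yielding the moment identity
\begin{equation*}
\snorm{D_0^{\otimes n}}{\E_{\vec v}[(\bar E_{\vec v}^{\otimes n})^{\leq\infty,m}]-1}^2=\sum_{j=1}^{m}\binom{n}{j}\,\E_{\vec u,\vec v\sim\mu}\!\left[\chi_{D_0}(E_{\vec u},E_{\vec v})^j\right],
\end{equation*}
followed by a diagonal/off-diagonal split. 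This is conceptually what happens inside the proof of Brennan et al.'s theorem, but your version is self-contained and avoids the conditioning-on-events bookkeeping of the SDA definition; what you lose is that the $\mathrm{SDA}$-based proof handles the projection degree $r$ (the first coordinate of the samplewise degree) uniformly for free, whereas your derivation implicitly fixes $r=\infty$ and would need an extra projection step to recover smaller $r$ --- though for the stated theorem $r=\infty$ is exactly what is needed.

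One small cleanup your argument needs: you assert that the diagonal terms $\binom{n}{j}\beta^j$ are increasing in $j$ because $n\beta\gg m$. This monotonicity can fail when $\eps$ is close to $1$ and $c$ is close to $1/2$ (so that $n\beta=O(d^{1/2-c}/\eps)$ may fall below $m=\Omega(d^{c/4})$). However, the conclusion still holds in either regime: when $n\beta\geq m$ your bound $m(en\beta/m)^m/|S|$ applies, and when $n\beta<m$ one simply uses $\sum_{j=1}^m\binom{n}{j}\beta^j\leq(1+\beta)^n\leq e^{n\beta}<e^m=e^{O(d^{c/4})}$, and in both cases the logarithm is dominated by $-\log|S|=-\Omega(d^c)$ since $c/4<c$. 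So the proof is sound, but the case split should be made explicit rather than asserting monotonicity.
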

    \begin{proof}
In~\citet{brennan2020statistical}, the following relation between $\mathrm{SDA}$ and low-degree likelihood ratio is established. 
\begin{theorem}[Theorem 4.1 of~\citet{brennan2020statistical}] \label{thm:sdaldlr}
    Let $\mathcal{D}$ be a hypothesis testing problem on $\R^d$ with respect to null hypothesis $D_0$. Let $n,k \in \mathbb{N}$ with $k$ even. Suppose that for all $0\leq n' \leq n$, $\mathrm{SDA}(\mathcal{S},n') \geq 100^k(n/n')^k$. Then, for all $r$, $\snorm{D_0^{ \otimes n }}{\E_{u \sim \mu} \left[ (\bar{D}_u^{\otimes n})^{\leq r,\Omega(k)} \right] -1}^2 \leq 1$.
\end{theorem}

In \Cref{lem:SDA-bound} we set $n = {\Omega(d)^{1/2-c}/(\eps^2(1-2\eta))}$ and $q = \sqrt{2^{\Omega(d^{c/2})} (n/n')}$. Then, $\mathrm{SDA}(\mathcal{S},n') \geq  \sqrt{2^{\Omega(d^{c/2})} (n/n')} \geq (100n/n')^k$ for $k < d^{c/4}$ and then we apply the theorem above.

        \end{proof}

\end{document}